\documentclass[twoside,11pt]{article}

\usepackage[preprint]{jmlr2e}

\usepackage{amsmath,amsfonts,bm}

\def\eqref#1{equation~\ref{#1}}

\def\1{\bm{1}}

\DeclareMathAlphabet{\mathsfit}{\encodingdefault}{\sfdefault}{m}{sl}
\SetMathAlphabet{\mathsfit}{bold}{\encodingdefault}{\sfdefault}{bx}{n}

\usepackage{amsmath}
\usepackage{algorithm,algpseudocode}
\usepackage{titletoc}
\usepackage{wrapfig}
\usepackage{subcaption}
\usepackage{multirow}
\usepackage{booktabs}

\title{BMuon: Budgeting Muon's Spectral Geometry for Structured Compressibility}

\author{%
\name Rida Assalouh \email rida.assalouh@polytechnique.edu \\
\addr Inria, Paris, France \\
École polytechnique, Institut Polytechnique de Paris, Palaiseau, France
\AND
\name Tom Jacobs \email tom.jacobs@cispa.de \\
\addr CISPA Helmholtz Center for Information Security, Saarbrücken, Germany
\AND
\name Rebekka Burkholz \email burkholz@cispa.de \\
\addr CISPA Helmholtz Center for Information Security, Saarbrücken, Germany
\AND
\name Yani Ioannou \email yani.ioannou@ucalgary.ca \\
\addr Schulich School of Engineering, University of Calgary, Calgary, Alberta, Canada\\Hotchkiss Brain Institute, University of Calgary, Calgary, Alberta, Canada
\AND
\name Umut Şimşekli \email umut.simsekli@inria.fr \\
\addr Inria -- Département d’Informatique de l’École Normale Supérieure,
PSL Research University -- CNRS, Paris, France
}

\ShortHeadings{BMuon}{Assalouh et al.}

\DeclareMathOperator{\Polar}{Polar}

\begin{document}

\maketitle
\thispagestyle{empty}

\begin{abstract}
Muon has emerged as a strong optimizer for neural network training and admits an interpretation as steepest descent under a spectral-norm geometry. Yet we find that Muon-trained Transformers can degrade sharply when their MLP hidden channels are pruned after training. To reconcile Muon's optimization performance with structured compressibility, we introduce Budget Muon (BMuon), which modifies Muon's spectral geometry so that hidden channels across blocks compete for a shared allocation budget. The resulting oracle is exactly normalized steepest descent under a family of norms that interpolate between Muon's geometry and structured sparse geometries. Our principal sparsity control is an interpretable global channel-allocation budget, rather than a setting-dependent sparsity-penalty coefficient. For linear factorization models, we show that Muon admits minimum-cost exact representations with redundant hidden width, whereas under the BMuon oracle geometry and a sufficiently scarce budget, every minimum-cost representation is width-minimal. For practical training, we derive a minorization-maximization surrogate for the exact allocation problem. On ImageNet-1K with ViT-S/16, BMuon improves over Muon under gradual structured MLP pruning, and its learned allocations provide a more effective pruning criterion than weight magnitude. On a 1B-parameter LLaMA-style model trained on FineWeb, BMuon improves over Muon under post-training MLP channel pruning.

\noindent\textbf{Code:} \url{https://github.com/RidaAssalouh/BMuon}
\end{abstract}

\begin{keywords}
Muon, steepest descent, spectral norm, structured pruning, structured sparsity, optimization geometry, Transformers
\end{keywords}

\section{Introduction}
\label{sec:introduction}

Modern Transformer models incur substantial memory and computational costs at inference time, motivating compression for more efficient deployment \citep{kwon2022fast}.
Pruning is a natural approach to this problem, but practical gains depend on the sparsity structure: unstructured sparsity often requires specialized kernels or hardware, whereas structured pruning removes entire computational units and translates directly into efficiency gains \citep{wen2016learning,an2024fluctuation}.  
Feed-forward modules can account for roughly two thirds of Transformer
parameters \citep{geva2021transformer}, making their hidden channels a natural
target for structured compression.
Global pruning ranks channels across layers jointly, allowing different layers to retain different numbers of channels \citep{lee2021layeradaptive,an2024fluctuation}.
Post-pruning fine-tuning adds computational cost, especially for large models \citep{kwon2022fast,sun2024simple}, motivating compressibility to emerge during training so that pruning requires little or no recovery.

\begin{wrapfigure}{r}{0.47\textwidth}
\vspace{-0.6em}
\centering
\includegraphics[width=\linewidth]{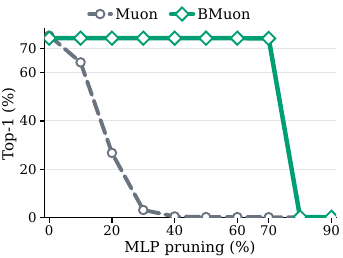}
\caption{
Post-training global MLP pruning on ImageNet-1K with ViT-S/16.
Mean $\pm$ std over 3 seeds.
See Appendix~\ref{app:motivation_pruning}.
}
\label{fig:motivation}
\vspace{-1em}
\end{wrapfigure}
Muon has demonstrated strong training efficiency across model scales, from small vision and language-model benchmarks to billion-parameter language models \citep{jordan2024muon,liu2025muon,shah2025practical}.
Its matrix update can be viewed as a linear minimization oracle over a
spectral-norm ball, giving Muon a steepest-descent interpretation in
spectral-norm geometry \citep{pethick2025training}.
Prior work has shown that implicit bias can depend on optimization geometry
\citep{gunasekar2018characterizing}. For Muon, recent theory identifies a
spectral-norm margin bias in linear and smooth homogeneous models
\citep{fan2025implicit,gronich2026implicit}.
Yet we find that Muon-trained Transformers degrade rapidly under structured MLP pruning, suggesting that Muon's spectral geometry does not by itself encourage structured compressibility.
As shown in Figure~\ref{fig:motivation}, a Muon-trained ViT-S/16 on ImageNet-1K collapses after only $30\%$ global MLP channel pruning,
with Top-1 accuracy falling from $75.23\%$ to $3.17\%$.
In contrast, our proposed \emph{Budget Muon} (BMuon) algorithm retains $74.08\%$ Top-1 accuracy after $70\%$ pruning, nearly matching its $74.17\%$ accuracy before pruning, with no post-pruning recovery. This raises the central question: can Muon’s spectral geometry be modified to encourage structured compressibility?
%


To address this question, we introduce BMuon, which reweights Muon's spectral geometry under a single global budget $K$ shared across all hidden channels of all MLP blocks.
For hidden channel $j$ in block $\ell$, BMuon assigns an allocation $\mu_j^\ell\in[0,1]$ to the corresponding row of the up-projection matrix and column of the down-projection matrix, with $\sum_{\ell,j}\mu_j^\ell=K$.
For a fixed allocation, BMuon introduces a diagonal reweighting inside
Muon's spectral-norm constraint, with the components of the update variable
corresponding to channel $j$ in block $\ell$ weighted according to its
allocation $\mu_j^\ell$.
BMuon chooses the allocation globally, intuitively asking where the update budget should be spent to obtain the greatest first-order decrease in the loss.
%


%
NuMuon~\citep{dolatabadi2026numuon} is the closest prior compression-aware
variant of Muon. It imposes a nuclear-norm constraint on Muon's update
direction to improve low-rank compressibility. BMuon instead targets structured
compressibility over hidden channels by introducing a global channel
budget into Muon's spectral-norm geometry.
Prior work has also incorporated compressibility into training.
\citet{Lu2022} train networks for unstructured pruning with Frank--Wolfe under fixed layerwise constraints on the weights, while \citet{Zimmer2022} use Frank--Wolfe under fixed norm constraints for convolutional filter pruning and low-rank compression.
BMuon instead leaves the weights unconstrained and modifies Muon's spectral update geometry for global hidden-channel compression.
Resource-constrained structure learning and global structured pruning have been
studied across CNNs and Transformers
\citep{gordon2018morphnet,yu2022unified,yang2023global,
an2024fluctuation,wang2026localglobal}.
GRASPrune~\citep{wang2026grasprune} learns gates post hoc on a frozen LLM to
select MLP channels and attention KV-head groups under a single global parameter
budget, while DABP~\citep{laroui2026dabp} jointly trains CNN weights and
continuous channel gates under a global resource budget.
BMuon uses no forward gates; its budget instead controls the channel-wise
reweighting of Muon's spectral update constraint during training.


Theoretically, we show that the BMuon oracle is exactly normalized steepest
descent under an induced norm $\Omega_{\alpha,K}$, where $\alpha$ controls the
strength of channel reweighting and $K$ is the global budget over the $N$
hidden channels.
This family recovers Muon when $\alpha=0$ or $K=N$. At $\alpha=1$, its unit ball is the convex hull of operator-norm-bounded updates supported on at most $K$ coupled hidden channels.
This is related to group $k$-support norms through the use of structured
support~\citep{argyriou2012sparse,rao2017group}, but its atoms are bounded in operator norm rather than Euclidean norm.
It is also related to spectral $k$-support norms~\citep{mcdonald2014spectral,mcdonald2016fitting}, but differs in the notion of sparsity: spectral $k$-support uses sparsity in singular values, whereas BMuon uses MLP-channel sparsity.
At $\alpha=K=1$, the norm reduces to an $\ell_1$ aggregation over
coupled channel groups, yielding a group-lasso-like geometry
\citep{yuan2006model}.
We then characterize the representation cost induced by BMuon. Following \citet{dai2021representation}, this is the minimum parameter-space cost among exact two-matrix factorizations that realize the same linear predictor.
We prove that, under an explicit budget-scarcity condition, every minimum-cost BMuon factorization is width-minimal, whereas Muon's minimum cost can also be attained by factorizations with redundant hidden width.


Solving the exact allocation problem at every training step is expensive. We therefore derive a minorization--maximization surrogate that
yields a tractable allocation update, followed by the corresponding weighted
Muon update.
On CIFAR-100 and ImageNet-1K, BMuon outperforms Muon under both gradual and
post-training global MLP-channel pruning, even when both use the same
magnitude-based pruning criterion.
Under gradual pruning, using the learned allocations themselves to rank
channels further improves performance, showing that they also provide a useful
measure of channel importance.
The non-uniform architectures identified by BMuon in the gradual-pruning
experiments also outperform matched uniform-width architectures when retrained
from scratch, providing evidence that the learned allocation captures useful
differences in how MLP capacity should be distributed across layers.
Finally, on a 1B-parameter LLaMA-style model trained on FineWeb, BMuon
improves robustness to post-training global MLP-channel pruning, showing that
the effect extends to language modeling at substantially larger scale.


\paragraph{Contributions.} 
To summarize, our main contributions are:
\begin{itemize}
    \item We introduce Budget Muon (BMuon), which incorporates a
    single global hidden-channel budget into Muon's spectral update geometry,
    while coupling the matrix slices that define each MLP channel.

    \item We characterize the induced norm family $\Omega_{\alpha,K}$ and its
    relation to Muon and structured sparse geometries in Section \ref{sec:budget-muon}. In the linear
    factorization setting of Section \ref{sec:compactness}, we show that for a sufficiently scarce
    budget, every minimum-cost BMuon factorization is width-minimal, whereas
    Muon's minimum cost is also attained by redundant-width
    factorizations.

    \item We derive a practical minorization--maximization surrogate for the
    global allocation problem, yielding a tractable allocation update in Section \ref{sec:practical}.

    \item In Section~\ref{sec:experiments}, we show across CIFAR-100,
    ImageNet-1K, and a 1B-parameter LLaMA-style model that BMuon improves
    structured MLP-channel compressibility relative to Muon.
    In the vision experiments, the learned allocations also provide effective
    pruning scores and identify non-uniform MLP architectures that outperform
    matched uniform-width architectures when retrained from scratch.
\end{itemize}

\section{Budget Muon: Oracle and Geometry}
\label{sec:budget-muon}

\subsection{Setup and Muon Preliminaries}
\label{sec:setup-preliminaries}

\paragraph{MLP channels and global pruning.}
We consider \(L\) single-hidden-layer MLP blocks, as in Transformer feed-forward
modules, with block \(\ell\) given by
\begin{equation}\label{equation : setting MLP}
x \mapsto
W_2^\ell\,\phi\!\left(W_1^\ell x+b^\ell\right),
\qquad
W_1^\ell\in\mathbb{R}^{m_\ell\times d_\ell},
\quad
W_2^\ell\in\mathbb{R}^{p_\ell\times m_\ell},
\end{equation}
where \(m_\ell\) is the hidden width. Hidden channel \(j\) is associated with
its incoming weights \(W_{1,j:}^\ell\), bias \(b_{j}^\ell\), and
outgoing weights \(W_{2,:,j}^\ell\). Pruning the channel removes these
components jointly. The matrix parameters \(W_1^\ell,W_2^\ell\) are optimized
by Muon, while biases are optimized separately by AdamW. We write \(W:=\big((W_1^\ell,W_2^\ell)\big)_{\ell=1}^L\) for the collection of MLP weight matrices and \(N:=\sum_{\ell=1}^L m_\ell\) for the total number
of hidden channels across all blocks. Given channel scores \(s_j^\ell\), global pruning ranks all \(N\) channels jointly and removes the lowest-scoring ones until the target global sparsity is reached. The resulting hidden width in each block is therefore determined by this global ranking.

\paragraph{Muon as spectral-norm steepest descent.}
For an MLP matrix \(W_i^\ell\), let \(M_i^\ell\) denote the momentum matrix
used by Muon, following Appendix~\ref{app:muon-algo}. For any matrix \(A\) with
compact singular value decomposition (SVD) \(A=U\Sigma V^\top\), define its polar factor by
\(\Polar(A):=UV^\top\). The exact Muon direction is
\(\Delta_{i,\mathrm{Muon}}^\ell=\Polar(M_i^\ell)\), and \(W_i^\ell\) is
updated in the opposite direction. In practice, Muon approximates this
direction using a finite Newton--Schulz iteration~\citep{jordan2024muon};
throughout the theoretical analysis, we use the exact polar factor. By spectral--nuclear norm duality, the exact Muon direction satisfies
\[
\Delta_{i,\mathrm{Muon}}^\ell
=
\Polar(M_i^\ell)
\in
\operatorname*{arg\,max}_{\|\Delta_i^\ell\|_{\mathrm{op}}\le 1}
\langle M_i^\ell,\Delta_i^\ell\rangle,
\qquad
\max_{\|\Delta_i^\ell\|_{\mathrm{op}}\le 1}
\langle M_i^\ell,\Delta_i^\ell\rangle
=
\|M_i^\ell\|_*.
\]
Here \(\langle A,B\rangle=\operatorname{tr}(A^\top B)\) is the Frobenius
inner product, \(\|A\|_{\mathrm{op}}=\sigma_{\max}(A)\) is the operator norm,
and \(\|A\|_*=\sum_j\sigma_j(A)\) is the nuclear norm. Thus exact-polar Muon is normalized steepest descent under the spectral
norm~\citep{bernstein2024old,pethick2025training}.
For all MLP matrices, write
\(M:=\big((M_1^\ell,M_2^\ell)\big)_{\ell=1}^L\) and
\(\Delta:=\big((\Delta_1^\ell,\Delta_2^\ell)\big)_{\ell=1}^L\).
We equip this product space with the inner-product
\(\langle M,\Delta\rangle
:=\sum_{\ell=1}^L
\big(
\langle M_1^\ell,\Delta_1^\ell\rangle+
\langle M_2^\ell,\Delta_2^\ell\rangle
\big)\)
and define Muon's product-space unit ball
\(\mathcal A_{\mathrm{Muon}}
:=\{\Delta:
\|\Delta_1^\ell\|_{\mathrm{op}}\le1,\
\|\Delta_2^\ell\|_{\mathrm{op}}\le1,\
\forall\,\ell\}\).
Equivalently,
\(\mathcal A_{\mathrm{Muon}}
=\{\Delta:\|\Delta\|_{\mathrm{Muon}}\le1\}\), where
\(\|\Delta\|_{\mathrm{Muon}}
:=\max_\ell\max\{
\|\Delta_1^\ell\|_{\mathrm{op}},
\|\Delta_2^\ell\|_{\mathrm{op}}
\}\).
Muon's update direction over all MLP matrices is then
\begin{align}
\label{eqn:muonup_org}
    \Delta_{\mathrm{Muon}}
\in
\operatorname*{arg\,max}_{\Delta\in\mathcal A_{\mathrm{Muon}}}
\langle M,\Delta\rangle.
\end{align}
Since the objective and constraints separate across matrices, this is
equivalent to applying the single-matrix Muon direction independently to each
\(W_i^\ell\).

\subsection{The BMuon Oracle}
\label{sec:oracle}

\paragraph{Global channel budget.}
For each block \(\ell\) of width \(m_\ell\), let
\(\mu^\ell\in[0,1]^{m_\ell}\) assign an allocation to each hidden channel, and
let \(D_{\mu^\ell}:=\operatorname{diag}(\mu^\ell)\). Writing
\(\mu:=(\mu^1,\ldots,\mu^L)\in[0,1]^N\), for a budget \(K\in(0,N]\), define
\(\mathcal K_K:=\{\mu\in[0,1]^N:
\sum_{\ell,j}\mu_j^\ell=K\}\).
All hidden channels share the budget \(K\), so channels across blocks compete
directly for allocation.

\paragraph{Channel-weighted Muon directions.}
Fix \(\alpha\in(0,1]\). For a fixed allocation \(\mu\in\mathcal K_K\), BMuon reweights the spectral constraints in \(\mathcal A_{\mathrm{Muon}}\) channel-wise, yielding
\[
\mathcal A_\alpha(\mu)
:=
\Big\{
\Delta:\ 
\exists\,\{P_1^\ell,P_2^\ell\}_{\ell=1}^L,\ 
\Delta_1^\ell=D_{\mu^\ell}^\alpha P_1^\ell,\ 
\Delta_2^\ell=P_2^\ell D_{\mu^\ell}^\alpha,\ 
\|P_1^\ell\|_{\mathrm{op}}\le1,\ 
\|P_2^\ell\|_{\mathrm{op}}\le1,\ 
\forall\,\ell
\Big\}.
\]
The same factor \((\mu_j^\ell)^\alpha\) scales row \(j\) of
\(\Delta_1^\ell\) and column \(j\) of \(\Delta_2^\ell\), coupling the two
components of hidden channel \(j\). When every \(\mu_j^\ell>0\), this is equivalently
\(\|D_{\mu^\ell}^{-\alpha}\Delta_1^\ell\|_{\mathrm{op}}\le1\) and
\(\|\Delta_2^\ell D_{\mu^\ell}^{-\alpha}\|_{\mathrm{op}}\le1\) for all \(\ell\).
Thus BMuon applies Muon's spectral-norm constraint after diagonally
reweighting the hidden channels. If \(\mu_j^\ell=0\), the corresponding row of \(\Delta_1^\ell\) and column of \(\Delta_2^\ell\) are forced to be zero, making the connection to
channel-sparse updates explicit.

\paragraph{BMuon oracle.}
Muon maximizes the alignment \(\langle M,\Delta\rangle\) over a fixed constraint set, \(\|\Delta\|_{\mathrm{Muon}}\le1\). In BMuon, the admissible set for a fixed allocation is \(\mathcal A_\alpha(\mu)\), and the oracle chooses the allocation and direction jointly:
\begin{equation}
\label{eq:bmuon-oracle}
\max_{\mu\in\mathcal K_K}\ \max_{\Delta\in\mathcal A_\alpha(\mu)}
\langle M,\Delta\rangle.
\end{equation}
The budget is allocated to maximize the attainable alignment with the momentum; when \(M\) is the gradient rather than the momentum, the oracle jointly chooses \(\mu\) and \(\Delta\) to maximize the first-order decrease
in the loss. By spectral--nuclear norm duality, the oracle reduces to the allocation
problem
\begin{equation}
\label{eq:bmuon-allocation}
\max_{\mu\in\mathcal K_K}\Phi_M(\mu),
\qquad
\Phi_M(\mu)
:=
\sum_{\ell=1}^L
\left[
\|D_{\mu^\ell}^\alpha M_1^\ell\|_*
+
\|M_2^\ell D_{\mu^\ell}^\alpha\|_*
\right],
\end{equation}
whose maximum is attained since \(\mathcal K_K\) is compact and \(\Phi_M\) is
continuous. Given a maximizer \(\mu^\star\), an optimal direction is, for
every \(\ell\),
\begin{equation}
\label{eq:bmuon-update}
\Delta_1^{\ell,\star}
=
D_{\mu^{\ell,\star}}^\alpha
\Polar\!\left(D_{\mu^{\ell,\star}}^\alpha M_1^\ell\right),
\qquad
\Delta_2^{\ell,\star}
=
\Polar\!\left(M_2^\ell D_{\mu^{\ell,\star}}^\alpha\right)
D_{\mu^{\ell,\star}}^\alpha.
\end{equation}
The derivation is given in Appendix~\ref{app:oracle-derivation}. The parameters are then updated as \(W_i^\ell\leftarrow W_i^\ell-\eta\Delta_i^{\ell,\star}\).
We call this an oracle because it solves the allocation problem
in Eq.~(\ref{eq:bmuon-allocation}) exactly at every step; the practical approximation is introduced in Section~\ref{sec:practical}.

\paragraph{Allocation curvature.}
The exponent \(\alpha\) controls the curvature of the allocation objective
in Eq.~(\ref{eq:bmuon-allocation}).
\begin{theorem}[Curvature transition]
\label{thm:curvature}
For any momentum matrices, \(\Phi_M\) is concave in \(\mu\) for
\(0<\alpha\le\tfrac12\), convex in \(\mu\) for \(\alpha=1\), and in general
neither convex nor concave for \(\tfrac12<\alpha<1\).
\end{theorem}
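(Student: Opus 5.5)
Since \(\Phi_M\) is a sum over blocks and over the two matrices of each block, and every summand has the same form after transposition (\(\|M_2^\ell D^\alpha\|_*=\|D^\alpha (M_2^\ell)^\top\|_*\)), it suffices to study a single function \(f(\mu)=\|D_\mu^\alpha A\|_*\) on \([0,1]^m\) for an arbitrary matrix \(A\). Sums of concave (resp.\ convex) functions are concave (resp.\ convex), so the first two claims reduce to this single term. For the third claim it is enough to exhibit one choice of momentum matrices (for instance \(M_2^\ell=0\) and a single block) for which \(f\) fails both properties.

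\emph{Convexity at \(\alpha=1\).} The map \(\mu\mapsto D_\mu A\) is linear in \(\mu\), and the nuclear norm is convex. Their composition is convex, which settles this case.

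\emph{Concavity for \(\alpha\le\tfrac12\).} I would use the variational representation
\[
\|X\|_*=\operatorname{tr}\big((XX^\top)^{1/2}\big),\qquad XX^\top=D_\mu^\alpha AA^\top D_\mu^\alpha .
\]
A cleaner route goes through the identity \(\|B\|_*=\min\{\tfrac12(\operatorname{tr}P+\operatorname{tr}Q)\}\) over block-PSD lifts, but I would instead use \(\|D_\mu^\alpha A\|_*=\max_{\|Z\|_{\mathrm{op}}\le1}\langle D_\mu^\alpha A,Z\rangle\); that one is a max of functions and only gives convexity in \(D^\alpha\), so it is not what I need. The approach I commit to is the following. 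Write \(G=AA^\top\succeq0\) and \(\|D^\alpha A\|_*=\operatorname{tr}\big((D^\alpha G D^\alpha)^{1/2}\big)\). By a similarity argument this equals \(\operatorname{tr}\big((G^{1/2}D^{2\alpha}G^{1/2})^{1/2}\big)\), since \(D^\alpha G D^\alpha\) and \(G^{1/2}D^{2\alpha}G^{1/2}\) share their nonzero eigenvalues. Now \(\mu\mapsto D_\mu^{2\alpha}\) is entrywise concave with \(2\alpha\le1\), so it is operator concave, being diagonal. Conjugation by \(G^{1/2}\) preserves operator concavity, and \(X\mapsto\operatorname{tr}X^{1/2}\) is concave and monotone on PSD matrices. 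The composition of a monotone concave function with an operator-concave map is concave, which gives concavity of \(f\). The one delicate point is justifying that concavity of \(t\mapsto t^{2\alpha}\) componentwise yields \(D_{\lambda\mu+(1-\lambda)\nu}^{2\alpha}\succeq\lambda D_\mu^{2\alpha}+(1-\lambda)D_\nu^{2\alpha}\). This is immediate because all these matrices are diagonal.

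\emph{Neither, for \(\tfrac12<\alpha<1\).} Two examples suffice. First take \(A=e_1\) as a \(1\times1\) matrix, so \(f(\mu)=\mu_1^\alpha\), which is strictly concave and hence not convex. Second take \(A\) to be a single column \(a=(1,1)^\top\), so that \(f(\mu)=\|D^\alpha a\|_2=(\mu_1^{2\alpha}+\mu_2^{2\alpha})^{1/2}\). Along a line where \(\mu_1+\mu_2\) is constant, \(2\alpha>1\) makes \(t^{2\alpha}+(c-t)^{2\alpha}\) strictly convex, and the resulting \(\ell_{2\alpha}\)-type norm restricted to this line is not concave. Concretely, \(f(1,0)=f(0,1)=1\) while \(f(\tfrac12,\tfrac12)=2^{1/2-\alpha}<1\). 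Since "in general" only requires existence of such momenta, I would present one configuration per failure, or combine both in a single block by placing them in separate channels/blocks so that restricting to different lines exhibits each failure.

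The main obstacle is the concavity step: getting the operator-concavity composition right, including the similarity identity when \(D\) is singular. The singular case can be handled by continuity, taking limits \(\mu_j\to0\), since concavity is preserved under pointwise limits.
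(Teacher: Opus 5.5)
Your proposal is correct and follows the paper's proof essentially step for step. It uses the same reduction to a single left-scaled term via transposition, the same ``linear map plus convex nuclear norm'' argument at $\alpha=1$, the same structure for $\alpha\le\tfrac12$ (a concave, monotone trace-of-square-root composed with the coordinatewise concave map $t\mapsto t^{2\alpha}$), and the same $(\mu_1^{2\alpha}+\mu_2^{2\alpha})^{1/2}$ counterexample.

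Two small simplifications are available. First, writing $\|D_\mu^\alpha A\|_*=\operatorname{tr}\bigl((A^\top D_\mu^{2\alpha}A)^{1/2}\bigr)$ directly, as the paper does, is already linear in $D_\mu^{2\alpha}$; this removes the $XY$/$YX$ eigenvalue detour and any continuity argument at singular $D_\mu$, which was unnecessary anyway since $XY$ and $YX$ share nonzero eigenvalues regardless of invertibility. Second, the paper uses $M_1=I_2$ rather than a $1\times1$ example, so that non-convexity is exhibited on a segment inside the feasible set $\mathcal K_1$ and not only on $[0,1]$.
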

For \(0<\alpha\le\tfrac12\), Eq.~(\ref{eq:bmuon-allocation}) maximizes a
concave function over the convex set \(\mathcal K_K\), and is therefore a
convex program; every local maximizer is global, and optimal
allocations may lie in the relative interior of \(\mathcal K_K\), hence may
be fractional. By contrast, at \(\alpha=1\) and integer \(K\), a convex
function is maximized over the polytope \(\mathcal K_K\), so a maximizer can be chosen at an extreme point of \(\mathcal K_K\), i.e., as a binary
allocation selecting exactly \(K\) channels. The proof and details are given
in Appendix~\ref{app:curvature}.

\subsection{Induced Optimization Geometry}
\label{sec:geometry}

Joint maximization over \(\mu\) and \(\Delta\) in
Eq.~(\ref{eq:bmuon-oracle}) is equivalent to maximizing
\(\langle M,\Delta\rangle\) over
\(\Delta\in\mathcal A_{\alpha,K}\), where
\(\mathcal A_{\alpha,K}:=\bigcup_{\mu\in\mathcal K_K}\mathcal A_\alpha(\mu)\).
Since a linear functional has the same maximum over a set and its convex hull,
the BMuon oracle maximizes \(\langle M,\Delta\rangle\) over
\(\Delta\in\mathcal B_{\alpha,K}\), where
\(\mathcal B_{\alpha,K}:=\operatorname{conv}(\mathcal A_{\alpha,K})\).

As shown in Appendix~\ref{app:norm}, \(\mathcal B_{\alpha,K}\) is compact,
convex, centrally symmetric, and has the origin in its interior. Its Minkowski
functional
\(\Omega_{\alpha,K}(\Delta):=\inf\{t>0:\Delta\in t\,\mathcal B_{\alpha,K}\}\)
is therefore a norm whose unit ball is \(\mathcal B_{\alpha,K}\). Thus, for fixed $\alpha, K$, BMuon is exactly
normalized steepest descent under \(\Omega_{\alpha,K}\):
\begin{equation}
\label{eq:bmuon-steepest-descent}
\Delta^\star
\in
\operatorname*{arg\,max}_{\Omega_{\alpha,K}(\Delta)\le1}
\langle M,\Delta\rangle.
\end{equation}
In comparison with Eq.~(\ref{eqn:muonup_org}), BMuon uses a modified norm constraint, hence a different geometry.

\subsection{From Muon to Sparse Geometries}
\label{sec:special-cases}

We prove that the family \(\Omega_{\alpha,K}\) interpolates between Muon's spectral geometry
and channel-sparse geometries. Proofs of these special cases are given in Appendix~\ref{app:special-cases}.

\paragraph{Recovering Muon.}
If \(K=N\), the constraints \(\mu_j^\ell\le1\) and
\(\sum_{\ell,j}\mu_j^\ell=N\) force \(\mu_j^\ell=1\) for every channel, so
\(\mathcal B_{\alpha,N}=\mathcal A_{\mathrm{Muon}}\) and
\(\Omega_{\alpha,N}=\|\cdot\|_{\mathrm{Muon}}\).
Muon is also recovered for any \(K\) by extending the family to
\(\alpha=0\) with \(D_\mu^0=I\).

\paragraph{A structured \(K\)-support geometry.}
Assume \(K\) is an integer and \(\alpha=1\).\\
Define
\(\operatorname{supp}_{\mathrm{ch}}(\Delta)
:=\{(\ell,j):\Delta_{1,j:}^\ell\neq0\ \text{or}\
\Delta_{2,:,j}^\ell\neq0\}\). The induced unit ball is
\[
\mathcal B_{1,K}
=
\operatorname{conv}\Big\{
\Delta:
|\operatorname{supp}_{\mathrm{ch}}(\Delta)|\le K,\ 
\|\Delta_1^\ell\|_{\mathrm{op}}\le1,\ 
\|\Delta_2^\ell\|_{\mathrm{op}}\le1,\ \forall\ell
\Big\}.
\]
Thus \(\Omega_{1,K}\) is closely related to \(k\)-support and group
\(k\)-support norms \citep{argyriou2012sparse,rao2017group}, but BMuon uses
coupled hidden-channel support and operator-norm constraints rather than
Euclidean ones. It is also related to spectral \(k\)-support norms
\citep{mcdonald2014spectral,mcdonald2016fitting}, but sparsity acts on hidden
channels rather than singular directions.

\paragraph{A coupled group-\(\ell_1\) geometry.}
At \(\alpha=K=1\),
\(\Omega_{1,1}(\Delta)
=
\sum_{\ell,j}
\max\{\|\Delta_{1,j:}^\ell\|_2,\|\Delta_{2,:,j}^\ell\|_2\}\).
This is an \(\ell_1\) aggregation over coupled hidden channels and therefore
promotes channel sparsity.


\section{Representation Cost and Intrinsic Width}
\label{sec:compactness}

Section~\ref{sec:budget-muon} showed that the BMuon oracle performs steepest descent under \(\Omega_{\alpha,K}\).
We now ask whether minimizing \(\Omega_{\alpha,K}\) among parameterizations that realize the same predictor favors representations with fewer active hidden channels.
This question is motivated by prior work showing that the norm underlying steepest descent can directly shape its implicit bias \citep{gunasekar2018characterizing,tsilivis2025flavors,fan2025implicit,jacobs2026never}.
Rather than establishing an implicit-bias result for BMuon, we characterize the exact parameterizations of a fixed predictor that minimize \(\Omega_{\alpha,K}\), following the representation-cost view of \citet{dai2021representation,jagadeesan2022inductive}.

We fix predictors \(X=(X_\ell)_{\ell=1}^L\), where each \(X_\ell\in\mathbb R^{p_\ell\times d_\ell}\) is nonzero, and denote \(r_\ell:=\operatorname{rank}(X_\ell)\).
We consider linear two-factor blocks \(W=((W_1^\ell,W_2^\ell))_{\ell=1}^L\), with \(W_1^\ell\in\mathbb R^{m_\ell\times d_\ell}\), \(W_2^\ell\in\mathbb R^{p_\ell\times m_\ell}\), and \(m_\ell\ge r_\ell\).
Each block realizes the matrix factorization \(W_2^\ell W_1^\ell\), corresponding to the setup in Eq.~(\ref{equation : setting MLP}) with the identity activation and no bias.
For any parameter norm \(\|\cdot\|\), define its representation cost
\[
\mathcal R_{\|\cdot\|}(X)
:=
\inf\left\{
\|W\|^2:
W_2^\ell W_1^\ell=X_\ell,\ \forall \ell
\right\}.
\]
We write \(\mathcal R_{\alpha,K}:=\mathcal R_{\Omega_{\alpha,K}}\) for BMuon and \(\mathcal R_{\mathrm{Muon}}:=\mathcal R_{\|\cdot\|_{\mathrm{Muon}}}\) for Muon.
For a factorization \(W\), let \(a_\ell(W)\) denote the active width of block \(\ell\), i.e., the number of hidden channels \(j\) for which \(W_{1,j:}^\ell\neq0\) or \(W_{2,:,j}^\ell\neq0\).
Every exact factorization of \(X\) satisfies \(a_\ell(W)\ge r_\ell\) for all \(\ell\), so \(r_\ell\) is the minimum possible active width of block \(\ell\).
Let \(\sigma_{\ell,1}\ge\cdots\ge\sigma_{\ell,r_\ell}>0\) denote the nonzero singular values of \(X_\ell\), let \(\sigma_\star:=\max_{\ell,i}\sigma_{\ell,i}\), and define the \(\alpha\)-effective spectral dimension \(d_{\mathrm{eff}}^{(\alpha)}(X):=\sum_{\ell=1}^L\sum_{i=1}^{r_\ell}\left(\sigma_{\ell,i}/\sigma_\star\right)^{1/(2\alpha)}\).

\begin{theorem}[Representation cost and intrinsic-width recovery]
\label{thm:intrinsic-width}
Let \(0<\alpha\le\tfrac12\) and \(0<K\le N\).
Both representation-cost infima are attained, and
\[
\mathcal R_{\mathrm{Muon}}(X)=\sigma_\star,
\qquad
\mathcal R_{\alpha,K}(X)
=
\sigma_\star
\max\bigl\{
1,\,
\bigl(d_{\mathrm{eff}}^{(\alpha)}(X)/K\bigr)^{2\alpha}
\bigr\}.
\]
Moreover, Muon admits a minimizer \(W^\star\) with \(a_\ell(W^\star)=m_\ell\) for every \(\ell\), whereas if \(K\le d_{\mathrm{eff}}^{(\alpha)}(X)\), every BMuon minimizer \(W^\star\) satisfies \(a_\ell(W^\star)=r_\ell\) for all \(\ell\).
\end{theorem}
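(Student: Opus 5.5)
The plan is to first obtain a closed-form variational expression for $\Omega_{\alpha,K}$ when $0<\alpha\le\tfrac12$. Then we prove a matching lower bound and construction for the representation cost via a singular-value inequality. Width-minimality will be read off from the equality cases.

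\emph{Step 1 (gauge formula).} I would show that for $\alpha\le\tfrac12$ the union $\mathcal A_{\alpha,K}$ is already convex once the budget equality is relaxed to $\sum\mu\le K$. This relaxation is harmless: enlarging any $\mu_j^\ell$ only enlarges $\mathcal A_\alpha(\mu)$, and when $K\le N$ a vector with $\sum\mu\le K$ can be raised to one with $\sum\mu=K$. Substitute $\nu=\mu^{2\alpha}$. The constraints $\|D_{\mu^\ell}^{-\alpha}\Delta_1^\ell\|_{\mathrm{op}}\le t$ and $\|\Delta_2^\ell D_{\mu^\ell}^{-\alpha}\|_{\mathrm{op}}\le t$ become
\[
(\Delta_1^\ell)^\top D_{\nu^\ell}^{-1}\Delta_1^\ell\preceq t^2 I,
\qquad
\Delta_2^\ell D_{\nu^\ell}^{-1}(\Delta_2^\ell)^\top\preceq t^2I .
\]
These are matrix-fractional constraints and hence jointly convex in $(\Delta,\nu)$, with zero entries of $\nu$ understood as forcing zero rows and columns. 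The budget constraint becomes $\sum(\nu_j^\ell)^{1/(2\alpha)}\le K$, which is convex because $1/(2\alpha)\ge1$. Hence the union is convex and
\[
\Omega_{\alpha,K}(\Delta)=\min_{\mu\in\mathcal K_K}\max_\ell\max\bigl\{\|D_{\mu^\ell}^{-\alpha}\Delta_1^\ell\|_{\mathrm{op}},\|\Delta_2^\ell D_{\mu^\ell}^{-\alpha}\|_{\mathrm{op}}\bigr\},
\]
and the minimum is attained by compactness. I expect this step to be the main obstacle, mainly in handling boundary allocations with $\mu_j^\ell=0$ rigorously. I would do this via closedness of the lifted set.

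\emph{Step 2 (lower bound).} Take any exact factorization with $\Omega_{\alpha,K}(W)\le t$ and optimal $\mu$, and write
\[
X_\ell=(W_2^\ell D^{-\alpha})\,D^{2\alpha}\,(D^{-\alpha}W_1^\ell).
\]
The inequality $\sigma_i(ABC)\le\|A\|_{\mathrm{op}}\|C\|_{\mathrm{op}}\sigma_i(B)$ then gives $\sigma_{\ell,i}\le t^2(\mu^\ell_{[i]})^{2\alpha}$, where $\mu^\ell_{[i]}$ is the $i$-th largest entry of $\mu^\ell$. Since $\mu\le1$, this gives $t^2\ge\sigma_\star$. Summing the bounds $\mu^\ell_{[i]}\ge(\sigma_{\ell,i}/t^2)^{1/(2\alpha)}$ over $i\le r_\ell$ and all $\ell$ gives
\[
K\ge\sum_{\ell}\sum_{i\le r_\ell}\mu^\ell_{[i]}\ge d_{\mathrm{eff}}^{(\alpha)}(X)\,(\sigma_\star/t^2)^{1/(2\alpha)},
\]
that is, $t^2\ge\sigma_\star(d_{\mathrm{eff}}^{(\alpha)}/K)^{2\alpha}$. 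For Muon the analogous bound is $\|W_2^\ell\|_{\mathrm{op}}\|W_1^\ell\|_{\mathrm{op}}\ge\sigma_{\ell,1}$, so $\max_\ell\max\{\|W_1^\ell\|_{\mathrm{op}},\|W_2^\ell\|_{\mathrm{op}}\}^2\ge\sigma_\star$.

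\emph{Step 3 (attainment).} Let $X_\ell=U_\ell\Sigma_\ell V_\ell^\top$ be a compact SVD and let $t^2$ equal the claimed value. Place $\mu^\ell_i=(\sigma_{\ell,i}/t^2)^{1/(2\alpha)}\le1$ on the first $r_\ell$ channels. Any leftover budget, which exists only when $K>d_{\mathrm{eff}}^{(\alpha)}$, goes on remaining channels that carry zero weights. Set
\[
W_1^\ell=t\,D^{\alpha}_{\mu^\ell}[V_\ell^\top;0],\qquad W_2^\ell=t\,[U_\ell,0]\,D^{\alpha}_{\mu^\ell},
\]
so that $W_2^\ell W_1^\ell=t^2U_\ell D^{2\alpha}V_\ell^\top=X_\ell$ and both reweighted operator norms equal $t$. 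For Muon, take an isometry $Q_\ell\in\mathbb R^{m_\ell\times r_\ell}$ with no zero row, which exists since $m_\ell\ge r_\ell$ (for example, a generic rotation). Set $W_1^\ell=Q_\ell\Sigma_\ell^{1/2}V_\ell^\top$ and $W_2^\ell=U_\ell\Sigma_\ell^{1/2}Q_\ell^\top$. Both have operator norm $\sqrt{\sigma_{\ell,1}}\le\sqrt{\sigma_\star}$, which attains $\sigma_\star$ with $a_\ell=m_\ell$.

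\emph{Step 4 (width-minimality).} Suppose $K\le d_{\mathrm{eff}}^{(\alpha)}$ and $W^\star$ is a minimizer with optimal $\mu$ from Step 1. Then $t^2=\sigma_\star(d_{\mathrm{eff}}^{(\alpha)}/K)^{2\alpha}$, and every inequality in the Step 2 chain is tight. In particular $\sum_\ell\sum_{i\le r_\ell}\mu^\ell_{[i]}=K=\sum\mu$, so every entry of $\mu$ outside the top $r_\ell$ positions of each block vanishes. Finiteness of $\|D^{-\alpha}W_1^\ell\|_{\mathrm{op}}$ and $\|W_2^\ell D^{-\alpha}\|_{\mathrm{op}}$ then forces the corresponding rows of $W_1^\ell$ and columns of $W_2^\ell$ to be zero. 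Thus $a_\ell(W^\star)\le r_\ell$, and equality follows from the rank bound $a_\ell(W)\ge r_\ell$.
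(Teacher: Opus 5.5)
Your proof is correct up to one small slip in Step~3. From Step~2 onward it is the paper's argument. Both use the bound $\sigma_i(ABC)\le\|A\|_{\mathrm{op}}\sigma_i(B)\|C\|_{\mathrm{op}}$ applied to $X_\ell=(W_2^\ell D^{-\alpha})D^{2\alpha}(D^{-\alpha}W_1^\ell)$. The paper first restricts to the support $\{j:\mu_j^\ell>0\}$, and you should state this too so that $D^{-\alpha}$ makes sense. Both use the allocation $\mu_i^\ell=(\sigma_{\ell,i}/t^2)^{1/(2\alpha)}$ on $r_\ell$ channels per block, the same tightness argument forcing zero allocation beyond the $r_\ell$ largest entries, and the same Muon construction from an isometry with no zero row. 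The paper only packages the fixed-$\mu$ computation as a single-block lemma (Lemma~\ref{lem:fixed-allocation-factorization}) before taking $\max_\ell$.

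The genuine difference is Step~1. The paper proves convexity of $\mathcal A_{\alpha,K}$ (Lemma~\ref{lem:single-allocation}) by writing $\|D_{\mu^\ell}^{-\alpha}\Delta_1^\ell\|_{\mathrm{op}}^2=\sup_{\|x\|_2=1}\sum_j\varphi_{2\alpha}(\langle\Delta_{1,j:}^\ell,x\rangle,\mu_j^\ell)$ with $\varphi_\beta(z,s)=z^2s^{-\beta}$, which is jointly convex exactly when $\beta\le1$. This keeps the budget an equality, so $\mu$ interpolates linearly inside $\mathcal K_K$, but it needs a hand-made convexity and lower-semicontinuity check at $s=0$.

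You instead linearize through $\nu=\mu^{2\alpha}$. The spectral constraint becomes the LMI $\Delta_1^\ell(\Delta_1^\ell)^\top\preceq t^2D_{\nu^\ell}$, equivalent to your matrix-fractional form and convex for every $\alpha$. The hypothesis $\alpha\le\tfrac12$ is then isolated in the convexity of $\sum_{\ell,j}(\nu_j^\ell)^{1/(2\alpha)}\le K$. The price is relaxing to an inequality budget and using monotonicity of $\mu\mapsto\mathcal A_\alpha(\mu)$, both of which you handle correctly. The gain is that the LMI form is closed and encodes the zero-allocation convention automatically, so the boundary difficulty you anticipate essentially disappears.

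The slip in Step~3: when $K>d_{\mathrm{eff}}^{(\alpha)}(X)$, the leftover mass $K-d_{\mathrm{eff}}^{(\alpha)}(X)$ need not fit on the $N-\sum_\ell r_\ell$ zero-weight channels. For example, take $m_\ell=r_\ell$ for all $\ell$ and $K=N$, with not all singular values equal to $\sigma_\star$. Then $\mathcal K_N=\{\mathbf 1\}$ and there are no spare channels, so the allocation you describe is not in $\mathcal K_K$. The paper's fix is to raise the active coordinates as well (possible since $K\le N$), keep $W$ defined from the unraised allocation $\mu$, and use $\rho_{\widetilde\mu}(W)\le\rho_\mu(W)$ for $\widetilde\mu\ge\mu$ coordinatewise. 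Alternatively, evaluate with the relaxed budget $\sum\mu\le K$ that you already justified in Step~1.
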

Proof is given in Appendix~\ref{app:intrinsic-width}.

\paragraph{Interpretation.} Muon's geometric cost is insensitive to hidden-channel redundancy, since its minimum can be attained at full hidden width.
In contrast, BMuon is sensitive to redundant width: when \(K\le d_{\mathrm{eff}}^{(\alpha)}(X)\), every minimum-cost factorization is width-minimal, so any redundancy incurs additional geometric cost.
The budget raises the representation cost only when \(K<d_{\mathrm{eff}}^{(\alpha)}(X)\), in which case it is \(\sigma_\star\bigl(d_{\mathrm{eff}}^{(\alpha)}(X)/K\bigr)^{2\alpha}\); otherwise, it is \(\sigma_\star\).
Since \(d_{\mathrm{eff}}^{(\alpha)}(X)\) is nondecreasing in \(\alpha\), increasing \(\alpha\) can only enlarge the budget regime in which width-minimality is guaranteed.

\paragraph{Effective spectral dimension.} The \(\alpha\)-effective spectral dimension recovers familiar spectral quantities.
Let \(X_{\mathrm{bd}}:=\operatorname{diag}(X_1,\ldots,X_L)\).
For \(\alpha=\tfrac14\), \(d_{\mathrm{eff}}^{(\alpha)}(X)=\|X_{\mathrm{bd}}\|_F^2/\|X_{\mathrm{bd}}\|_{\mathrm{op}}^2\), the stable rank of \(X_{\mathrm{bd}}\), while for \(\alpha=\tfrac12\), \(d_{\mathrm{eff}}^{(\alpha)}(X)=\|X_{\mathrm{bd}}\|_*/\|X_{\mathrm{bd}}\|_{\mathrm{op}}\).

\section{Practical BMuon}
\label{sec:practical}

Exactly solving the allocation problem in Eq.~(\ref{eq:bmuon-allocation}) at every optimizer step is expensive. We instead perform a minorize--maximize (MM) update \citep{hunter2004tutorial}, replacing the exact allocation problem by the maximization of a surrogate. For fixed momenta $M$ and exponent $\alpha$, given an allocation $\bar\mu$, let
$\bar P_1^\ell=\Polar(D_{\bar\mu^\ell}^\alpha M_1^\ell)$ and
$\bar P_2^\ell=\Polar(M_2^\ell D_{\bar\mu^\ell}^\alpha)$.
As shown in Appendix~\ref{app:mm-solver}, we have the tight minorization
{
\setlength{\belowdisplayskip}{-1pt}
\begin{equation}
\Phi_M(\mu)
\ge
Q(\mu\mid\bar\mu)
:=
\sum_{\ell,j}
a_j^\ell(\mu_j^\ell)^\alpha,
\qquad
a_j^\ell
=
\langle M_{1,j:}^\ell,\bar P_{1,j:}^\ell\rangle
+
\langle M_{2,:,j}^\ell,\bar P_{2,:,j}^\ell\rangle,
\label{eq:mm-surrogate}
\end{equation}
}
\\
\noindent
with equality at $\mu=\bar\mu$. At the boundary $\bar\mu_j^\ell=0$, row $j$ of
$D_{\bar\mu^\ell}^\alpha M_1^\ell$ and column $j$ of
$M_2^\ell D_{\bar\mu^\ell}^\alpha$ vanish, giving $a_j^\ell=0$, so the
surrogate no longer depends on $\mu_j^\ell$. We therefore impose a small floor $\mu_j^\ell\ge\varepsilon$ in practice and maximize the surrogate over
$\mathcal K_{K,\varepsilon}
:=\{\mu\in[\varepsilon,1]^N:\sum_{\ell,j}\mu_j^\ell=K\}$,
where $0<\varepsilon<K/N$. Proposition~\ref{prop:mm-floor} bounds the gap between the floored and original
oracle objectives. The scores $a_j^\ell$ are nonnegative in general and, if for every $(\ell,j)$,
$M_{1,j:}^\ell\neq0$ or $M_{2,:,j}^\ell\neq0$, they are strictly positive
for every $\bar\mu\in\mathcal K_{K,\varepsilon}$. Hence, for $0<\alpha<1$, the surrogate is strictly concave and has a unique
maximizer. The KKT conditions, derived in Appendix~\ref{app:mm-kkt}, give
\begin{equation}
(\mu_j^\ell)^+
=
\operatorname{clip}_{[\varepsilon,1]}
\left[
\left(
\frac{\alpha a_j^\ell}{\lambda_\mu}
\right)^{1/(1-\alpha)}
\right],
\label{eq:mm-closed-form}
\end{equation}
where
$\operatorname{clip}_{[\varepsilon,1]}(x)
:=\min\{1,\max\{\varepsilon,x\}\}$,
and $\lambda_\mu>0$ is found by bisection from
$\sum_{\ell,j}(\mu_j^\ell)^+=K$. Thus, the MM allocation step introduces no additional step-size
hyperparameter.

For fixed momenta, each MM step is monotone since
$\Phi_M(\mu^+)\ge Q(\mu^+\mid\bar\mu)
\ge Q(\bar\mu\mid\bar\mu)=\Phi_M(\bar\mu)$.
Moreover, for $0<\alpha\le\tfrac12$, repeated MM steps converge in
objective value to the global optimum over $\mathcal K_{K,\varepsilon}$, and
every accumulation point of the allocation sequence is globally optimal. The proof is given in Appendix~\ref{app:mm-convergence}.

Algorithm~\ref{alg:bmuon} summarizes the resulting update. When $\mu$ is updated, the first polar evaluation constructs the MM surrogate at the current allocation. After the MM step, the polar factors are recomputed using the new allocation to form the parameter update.
\begin{algorithm}[h!]
\caption{Budget Muon}
\label{alg:bmuon}
\small
\begin{algorithmic}[1]
\Require $K\in(0,N)$, $\alpha\in(0,1)$,
$0<\varepsilon<K/N$, allocation period $\Delta T_\mu$, learning rate $\eta_t$
\State $\mu_j^\ell\gets K/N$
\Comment{$\forall\,(\ell,j)$}
\For{$t=0,1,\ldots$}
    \State Update momenta $\{M_1^\ell,M_2^\ell\}_{\ell=1}^L$ as in Muon
    \If{$t\bmod\Delta T_\mu=0$}
        \State $\widetilde P_1^\ell
        \gets\Polar(D_{\mu^\ell}^\alpha M_1^\ell)$,
        $\quad
        \widetilde P_2^\ell
        \gets\Polar(M_2^\ell D_{\mu^\ell}^\alpha)$
        \Comment{$\forall\,\ell$}
        \State $a_j^\ell\gets
        \langle M_{1,j:}^\ell,\widetilde P_{1,j:}^\ell\rangle+
        \langle M_{2,:,j}^\ell,\widetilde P_{2,:,j}^\ell\rangle$
        \Comment{$\forall\,(\ell,j)$}
        \State Set
        $\displaystyle
        \mu_j^\ell\gets
        \operatorname{clip}_{[\varepsilon,1]}
        \!\left[
        \left({\alpha a_j^\ell}/{\lambda_\mu}\right)^{1/(1-\alpha)}
        \right]$ and choose $\lambda_\mu$ so that $\sum_{\ell,j}\mu_j^\ell=K$
        \Comment{$\forall\,(\ell,j)$}
    \EndIf
    \State $\Delta_1^\ell
    \gets D_{\mu^\ell}^\alpha \Polar(D_{\mu^\ell}^\alpha M_1^\ell)$,
    $\quad
    \Delta_2^\ell
    \gets \Polar(M_2^\ell D_{\mu^\ell}^\alpha) D_{\mu^\ell}^\alpha$
    \Comment{$\forall\,\ell$}
    \State $W_1^\ell\gets W_1^\ell-\eta_t\Delta_1^\ell$,
    $\quad
    W_2^\ell\gets W_2^\ell-\eta_t\Delta_2^\ell$
    \Comment{$\forall\,\ell$}
\EndFor
\end{algorithmic}
\end{algorithm}

The budget $K$ can be scheduled during training for example by annealing it from $N$ to a target value.
Parameters outside the MLP blocks are updated as in the corresponding
Muon baseline.

The special cases in Section~\ref{sec:special-cases} place Muon at the
$\alpha=0$ endpoint and channel-sparse geometries at $\alpha=1$,
suggesting that larger $\alpha$ corresponds to more sparsity-favoring
geometries. Moreover, away from the clipping boundaries,
Eq.~(\ref{eq:mm-closed-form}) implies
$(\mu_i)^+/(\mu_j)^+=(a_i/a_j)^{1/(1-\alpha)}$, so larger $\alpha$
amplifies differences between channel scores and favors more concentrated
allocations. At the same time, $\alpha=1/2$ is the largest exponent for
which the allocation objective is guaranteed to remain concave by
Theorem~\ref{thm:curvature} and for which repeated MM steps are guaranteed to
converge in objective value by Theorem~\ref{thm:mm-convergence}.
We therefore use $\alpha=1/2$ throughout our experiments.

In practice, we approximate each polar factor by a finite Newton--Schulz
iteration, as in standard Muon, and update the allocation by a single MM step
every $\Delta T_\mu$ optimizer steps, keeping it fixed between these updates. We assess these practical approximations empirically in
Appendix~\ref{app:oracle_approximation}, using $\Delta T_\mu=5$ and a certified
reference allocation obtained by repeatedly applying the exact-polar MM update
until its relative objective loss from the exact optimum is certified to be
below a small threshold.
In these diagnostics, the relative loss in the allocation objective incurred by
the practical one-step Newton--Schulz-based update is certified to remain on
the order of $10^{-4}$ to $10^{-3}$, while keeping the allocation fixed for
the four intervening optimizer steps keeps the corresponding certified deficit
on the order of $10^{-3}$.
In the reported comparison, replacing the practical allocation
update every five optimizer steps by the certified reference allocation leaves
test accuracy and post-training compressibility unchanged within the observed
seed-to-seed variability.

Relative to Muon, BMuon adds only channelwise scalings on the budgeted MLP
matrices and, every $\Delta T_\mu$ steps, one extra polar evaluation per
such matrix; all other parameters are updated as in Muon.
In our ImageNet-1K and 1B-parameter LLaMA experiments in
Sections~\ref{sec:gradual-structured-pruning} and~\ref{sec:llama}, respectively,
we observe no measurable increase in overall training time relative to Muon
(Appendix~\ref{app:overhead}).

\section{Experiments}
\label{sec:experiments}

\subsection{Post-Training Compression}
\label{sec:post-training-compression}

\begin{wrapfigure}{r}{0.47\columnwidth}
    \vspace{-1.5em}
    \centering
    \includegraphics[width=\linewidth]{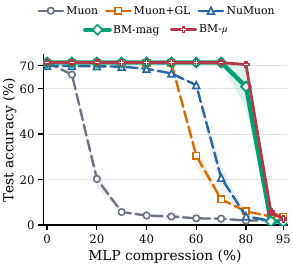}
    \caption{\textbf{CIFAR-100 ViT post-training MLP compression.}
    Mean test accuracy $\pm$ sample std. over three seeds after compression without
    recovery. Muon, Muon+GL, and BM-mag use global MLP-channel pruning ranked by
    magnitude; BM-$\mu$ uses $\mu$ for ranking, while NuMuon uses
    parameter-matched truncated-SVD compression of the MLP weights.}
    \label{fig:cifar100-posttraining}
    \vspace{-2em}
\end{wrapfigure}
\paragraph{Setup.}
We first test whether BMuon produces models that can be compressed directly
after training without post-compression fine-tuning. We train a small Vision
Transformer on CIFAR-100 for 200 epochs over three seeds. We compare Muon, Muon with a proximal Group-Lasso penalty on coupled MLP
channels (Muon+GL)~\citep{yuan2006model,wen2016learning},
NuMuon~\citep{dolatabadi2026numuon}, and BMuon. NuMuon is particularly
relevant as a direct comparison: it modifies the Muon update to improve
compressibility, but targets low-rank structure through the singular spectrum,
whereas BMuon targets structured MLP-channel compression. BMuon uses
$\alpha=1/2$ and a budget annealed to $20\%$ of the total MLP-channel mass. Full experimental details in Appendix~\ref{app:cifar100-posttraining}.

\paragraph{Post-training compression.}
We restrict the compression to the ViT MLP blocks. Muon, Muon+GL, and BMuon are compressed by
globally pruning hidden channels across all blocks. For magnitude-based pruning, channel $j$ in block $\ell$ is ranked by $s^{\ell}_j=\sqrt{\|W^{\ell}_{1,j:}\|_2^2+\|W^{\ell}_{2,:,j}\|_2^2}$.
BMuon is pruned using both this score (BM-mag) and its final learned allocation $\mu$ (BM-$\mu$). NuMuon instead compresses each MLP weight matrix independently using truncated SVD, with ranks chosen to approximately match the parameter count of the corresponding channel-pruned model.

\paragraph{Results.}
Figure~\ref{fig:cifar100-posttraining} shows the post-training fragility that
motivates this work: Muon falls to $5.72\%$ test accuracy after only $30\%$
MLP-channel pruning.
Muon+GL remains robust through $50\%$ pruning, while NuMuon degrades more
gradually under low-rank compression.
Under magnitude-based pruning, BMuon is substantially more
compressible than the structured baselines while matching Muon's dense
accuracy: BMuon reaches $71.46\%$, versus $71.49\%$ for Muon, $71.25\%$ for
Muon+GL, and $69.95\%$ for NuMuon.
Using $\mu$ for ranking provides a further gain: at $80\%$ pruning,
BM-$\mu$ retains $70.57\%$ accuracy, compared with $60.84\%$ for BM-mag,
while Muon, Muon+GL, and NuMuon have fallen to $1.93\%$, $6.01\%$, and
$3.84\%$, respectively.
The advantage of BM-mag under the shared magnitude criterion isolates the compressibility benefit of BMuon training, while the further gain from BM-$\mu$ indicates that the learned allocation also carries useful channel-importance information.

\subsection{Gradual Structured Pruning}
\label{sec:gradual-structured-pruning}
We next ask whether BMuon's compressibility advantage persists when channel pruning is interleaved with training and recovery~\citep{aketi2020gradual}, and whether its learned allocation identifies useful architectures when retrained from scratch.

\paragraph{Setup.}
We evaluate gradual MLP-channel pruning during training from scratch on a ViT for CIFAR-100 and a ViT-S/16 for ImageNet-1K, targeting final pruning levels of $90\%$ and $95\%$. All results are reported over three seeds. Each run starts dense and alternates between training and pruning. After each training phase, we rank all surviving MLP channels globally, permanently remove the lowest-ranked ones to reach the next sparsity target, and resume training from the surviving weights. We repeat this train--prune cycle until the final pruning level is reached, and then continue training for recovery. We compare three methods. \emph{Muon-mag} trains with Muon and ranks channels by the coupled weight-magnitude score introduced in Section~\ref{sec:post-training-compression}. \emph{BMuon-mag} trains the MLPs with BMuon ($\alpha=1/2$) and uses the same magnitude ranking. \emph{BMuon-$\mu$} uses the same BMuon training but ranks channels by the learned allocation $\mu$. During each BMuon training phase, the global budget is annealed to the next target channel count over the first half of the phase and held fixed for the second half.

As an additional control, we test whether the architecture selected by BMuon-$\mu$ is useful on its own. We keep only its final per-block widths, initialize a new model with those widths, and train it from scratch with standard Muon for the same training horizon. As a baseline, we also train a uniform-width model from scratch with nearly the same total number of MLP channels. Full details are given in Appendix~\ref{app:gradual-pruning-details}.

\begin{table}[h!]
\centering
\caption{\textbf{Gradual global MLP-channel pruning.}
Final Top-1 accuracy (\%), reported as mean $\pm$ sample std. over three seeds. The upper block reports the final pruned models after recovery; the lower block trains the architectures selected by BMuon-$\mu$ from scratch with Muon and compares their accuracy with uniform-width models having nearly the same total number of MLP channels. Details in Appendix~\ref{app:gradual-pruning-details}.}
\label{tab:gradual-pruning}
\small
\setlength{\tabcolsep}{7pt}
\renewcommand{\arraystretch}{1.12}

\begin{tabular}{lcccc}
\toprule
& \multicolumn{2}{c}{\textbf{CIFAR-100}}
& \multicolumn{2}{c}{\textbf{ImageNet-1K}} \\
\cmidrule(lr){2-3} \cmidrule(lr){4-5}
\textbf{Method}
& \textbf{90\%}
& \textbf{95\%}
& \textbf{90\%}
& \textbf{95\%} \\
\midrule

\multicolumn{5}{l}{\textit{Gradual pruning}} \\[-2pt]
Muon-mag
& $63.00 \pm 0.28$
& $59.70 \pm 0.71$
& $65.26 \pm 0.19$
& $61.24 \pm 0.47$ \\

BMuon-mag
& $64.78 \pm 0.89$
& $62.88 \pm 0.22$
& $68.71 \pm 0.11$
& $65.77 \pm 0.24$ \\

BMuon-$\mu$
& $\mathbf{67.24 \pm 0.22}$
& $\mathbf{63.91 \pm 0.07}$
& $\mathbf{69.06 \pm 0.19}$
& $\mathbf{67.04 \pm 0.02}$ \\

\addlinespace[2pt]
\midrule
\multicolumn{5}{l}{\textit{Architectures trained from scratch}} \\[-2pt]

Uniform
& $64.85 \pm 0.54$
& $63.62 \pm 0.36$
& $69.18 \pm 0.04$
& $66.73 \pm 0.26$ \\

BMuon-$\mu$ architecture
& $\mathbf{67.00 \pm 0.54}$
& $\mathbf{64.71 \pm 0.24}$
& $\mathbf{69.52 \pm 0.17}$
& $\mathbf{67.25 \pm 0.14}$ \\

\bottomrule
\end{tabular}
\end{table}
\paragraph{Gradual pruning performance.}
Table~\ref{tab:gradual-pruning} shows that BMuon consistently improves over Muon under gradual pruning. Using the same magnitude-based pruning criterion, BMuon-mag exceeds Muon-mag by $1.78$ and $3.18$ percentage points on CIFAR-100 at $90\%$ and $95\%$ pruning, and by $3.45$ and $4.53$ points on ImageNet-1K. Using $\mu$ as the pruning score then provides an additional gain of $2.46$ and $1.03$ points on CIFAR-100, and $0.35$ and $1.27$ points on ImageNet-1K.

\paragraph{Architecture discovery.}
When trained from scratch, the architectures selected by BMuon-$\mu$ outperform
the nearly channel-matched uniform models in all four settings.
Because both the discovered and uniform architectures are initialized from
scratch and trained with standard Muon, this comparison isolates the effect of
the architecture's width profile.
The results therefore provide evidence that BMuon-$\mu$ identifies useful
non-uniform architectures.


\subsection{Structured Compressibility at Language-Model Scale}
\label{sec:llama}

The BMuon construction on MLP hidden channels extends naturally to gated MLPs such as SwiGLU, as used in LLaMA-style models.
The same allocation acts on the rows of the gate and up projections and on the columns of the down projection; see Appendix~\ref{app:gated-mlps} for details.

\begin{wrapfigure}{r}{0.47\columnwidth}
    \vspace{-1.4em}
    \centering
    \includegraphics[width=\linewidth]{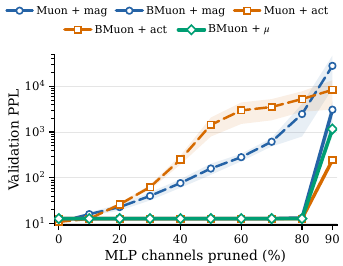}
    \caption{\textbf{Post-training MLP-channel pruning at language-model scale.}
    Validation perplexity after global structured pruning of a 1B-parameter LLaMA-style model trained on FineWeb. Mean $\pm$ std. over three seeds.}
    \label{fig:llama-pruning}
    \vspace{-1.0em}
\end{wrapfigure}
To test whether BMuon's structured-compressibility advantage extends to large-scale language modeling, we train a 1B-parameter LLaMA-style model on 13.1B FineWeb tokens over three seeds.
BMuon uses $\alpha=1/2$ and $K=0.2N$.
After training, we globally prune SwiGLU hidden channels without recovery, using weight magnitude and an activation-aware score combining root-mean-square activation and output-weight norm; for BMuon, we also rank channels by $\mu$.
The pruning curves are shown in Figure~\ref{fig:llama-pruning}; full details are given in Appendix~\ref{app:llama-details}.

On the unpruned models, Muon achieves lower perplexity ($10.92$ versus $12.84$ for BMuon), but its performance degrades rapidly under pruning.
In contrast, BMuon preserves its unpruned performance up to $80\%$ pruning, reaching $12.86$ with activation ranking and $12.85$ with $\mu$ ranking.
BMuon substantially outperforms Muon when both use the same magnitude or activation criterion at moderate and high pruning levels, while $\mu$ again provides an effective pruning score.
Overall, BMuon trades some unpruned quality for substantially greater robustness to structured channel pruning.

\section{Conclusion}
\label{sec:conclusion}

Muon-trained Transformers can degrade sharply under post-training MLP-channel pruning, motivating the question of whether Muon's spectral geometry can be modified to favor structured compressibility.
BMuon addresses this question by reweighting hidden channels within Muon's spectral-norm constraints and making them compete for a global budget.
We show that the resulting norm family recovers Muon and connects it to channel-sparse geometries.
In linear two-factor models, Muon admits minimum-cost factorizations with redundant hidden width, whereas under a sufficiently scarce BMuon budget every minimum-cost factorization is width-minimal.
For practical training, we derive a tractable minorization--maximization approximation to the global allocation oracle.
Across vision and language models, BMuon is substantially more robust than Muon to magnitude-based MLP-channel pruning.
The learned allocation also provides an effective pruning score and identifies useful non-uniform MLP architectures when retrained from scratch.

Several questions remain open.
Our theoretical results characterize representation cost in linear two-factor models; studying the implicit bias of BMuon training, particularly beyond the linear setting, is an important direction for future work.
Another direction is to design a practical BMuon algorithm with provable approximation guarantees to the oracle.
Our experiments focus on MLP hidden channels, and extending the global budget to other structured components of Transformers remains to be explored.
Finally, BMuon can trade dense-model quality for greater pruning robustness, as observed in our language-model experiments; characterizing this tradeoff at larger model scales is a natural direction for future work.

\paragraph{Acknowledgments.}
YI gratefully acknowledges the support of Alberta Innovates (ALLRP 600038-24), the Natural Sciences and Engineering Research Council of Canada (NSERC) (RGPIN-2022-03120, DGECR-2022-00358), and NSERC/Agence Nationale de la Recherche (ANR) (ALLRP 602719-24). This project was undertaken thanks to funding from IVADO and the Canada First Research Excellence Fund. This research was enabled in part by support provided by the Digital Research Alliance of Canada (\url{alliancecan.ca}).

RB and TJ gratefully acknowledge funding 
from the European Research Council (ERC) under the Horizon Europe Framework Programme
(HORIZON) for proposal number 101116395 SPARSE-ML.

U.S. is partially supported by the French government under the
management of Agence Nationale de la Recherche as part of the ``Investissements d’avenir'' program, reference ANR-19-P3IA-0001 (PRAIRIE 3IA Institute) and the grant GHOST ANR-25-CE23-4985, and
by the European Research Council Starting Grant DYNASTY – 101039676.

\bibliography{references}

\clearpage
\appendix
\startcontents[appendix]

\begin{center}
{\Large\bfseries
BMuon: Budgeting Muon's Spectral Geometry for Structured Compressibility\\
--- Appendix ---}
\end{center}

\vspace{1em}

\section*{Contents}
\printcontents[appendix]{}{0}{\setcounter{tocdepth}{2}}

\clearpage

\section{Muon Algorithm}
\label{app:muon-algo}

Muon applies momentum followed by approximate matrix orthogonalization to
eligible matrix parameters~\citep{jordan2024muon}.
Algorithm~\ref{alg:muon} gives the update for one matrix parameter using the
Nesterov-style momentum variant.

\begin{algorithm}[h]
\caption{Muon}
\label{alg:muon}
\small
\begin{algorithmic}[1]
\Require Learning rates \(\{\eta_t\}\), momentum \(\beta\)
\State Initialize \(B_0\gets 0\)
\For{\(t=1,2,\ldots\)}
    \State \(G_t\gets\nabla_W\mathcal L_t(W_{t-1})\)
    \State \(B_t\gets\beta B_{t-1}+G_t\)
    \State \(M_t\gets G_t+\beta B_t\)
    \State \(O_t\gets\operatorname{NewtonSchulz5}(M_t)\)
    \State \(W_t\gets W_{t-1}-\eta_t O_t\)
\EndFor
\end{algorithmic}
\end{algorithm}

The non-Nesterov variant instead uses \(M_t=B_t\). Throughout the paper,
\emph{momentum} refers to \(M_t\), the matrix that Muon orthogonalizes; our
geometric results are stated for arbitrary \(M_t\) and therefore do not depend
on the momentum variant used to produce it. We specify the variant used in
each experimental setup.

Practical Muon implementations may additionally rescale each matrix update. The public Muon implementation uses the aspect-ratio correction
\(\sqrt{\max\{1,m/n\}}\) for \(W\in\mathbb{R}^{m\times n}\)
\citep{jordan2024muon}, while the large-scale RMS-matching convention of
\citet{liu2025muon} uses \(0.2\sqrt{\max\{m,n\}}\).
Whenever such a rescaling is used, we state it in the corresponding
experimental setup; otherwise, no rescaling is applied. We omit it from
the geometric analysis.

Parameters not optimized by Muon, including biases, normalization parameters,
embeddings, and classifier or output heads, are optimized separately with
AdamW. With decoupled weight decay, the final update becomes
\[
W_t=(1-\eta_t\lambda)W_{t-1}-\eta_t\gamma O_t.
\]

\paragraph{Newton--Schulz orthogonalization.}
For \(M\in\mathbb{R}^{m\times n}\), let
\(\widetilde M=M\) if \(m\le n\) and \(\widetilde M=M^\top\) otherwise, and set
\(X_0=\widetilde M/(\|\widetilde M\|_F+10^{-7})\). Muon then applies five
iterations
\[
X_{k+1}
=
aX_k+b(X_kX_k^\top)X_k+c(X_kX_k^\top)^2X_k,
\qquad k=0,\ldots,4,
\]
with \((a,b,c)=(3.4445,-4.7750,2.0315)\)~\citep{jordan2024muon}.
We define
\(\operatorname{NewtonSchulz5}(M)=X_5\) if \(m\le n\), and \(\operatorname{NewtonSchulz5}(M)=X_5^\top\) otherwise. This finite iteration
approximately orthogonalizes \(M\); the main text uses the exact polar direction for the theoretical analysis.

\newpage

\section{BMuon Oracle and Geometry}
\label{app:geometry}

This appendix provides the derivations and proofs underlying
Section~\ref{sec:budget-muon}.
Appendix~\ref{app:oracle-derivation} derives the BMuon allocation problem and
update in Eqs.~(\ref{eq:bmuon-allocation})--(\ref{eq:bmuon-update}) from the
joint oracle in Eq.~(\ref{eq:bmuon-oracle}).
Appendix~\ref{app:gated-mlps} extends the construction to gated MLPs such as
SwiGLU.
Appendix~\ref{app:curvature} proves the curvature transition of
Theorem~\ref{thm:curvature}.
Appendix~\ref{app:norm} establishes the induced norm geometry
$\Omega_{\alpha,K}$ and its characterization for
$0<\alpha\le\tfrac12$.
Finally, Appendix~\ref{app:special-cases} proves the geometric special cases
stated in Section~\ref{sec:special-cases}.

\paragraph{Notation and setup.}
We consider $L$ coupled MLP blocks with
$W_1^\ell\in\mathbb R^{m_\ell\times d_\ell}$ and
$W_2^\ell\in\mathbb R^{p_\ell\times m_\ell}$, and write
$N:=\sum_{\ell=1}^L m_\ell$ for the total number of hidden channels.
Their Muon momentum matrices are denoted
$M_1^\ell$ and $M_2^\ell$, with corresponding update directions
$\Delta_1^\ell$ and $\Delta_2^\ell$.
We write
$M:=((M_1^\ell,M_2^\ell))_{\ell=1}^L$ and
$\Delta:=((\Delta_1^\ell,\Delta_2^\ell))_{\ell=1}^L$, with
$
\langle M,\Delta\rangle
:=
\sum_{\ell=1}^L
\left(
\langle M_1^\ell,\Delta_1^\ell\rangle
+
\langle M_2^\ell,\Delta_2^\ell\rangle
\right).
$
For a global budget $K\in(0,N]$, BMuon uses allocations
$\mu=(\mu^1,\ldots,\mu^L)$ in
$
\mathcal K_K
:=
\left\{
\mu\in[0,1]^N:
\sum_{\ell,j}\mu_j^\ell=K
\right\}.
$
For $\mu\in\mathcal K_K$, let
$D_{\mu^\ell}:=\operatorname{diag}(\mu^\ell)$.
For fixed $\alpha\in(0,1]$, the admissible BMuon directions are
\[
\mathcal A_\alpha(\mu)
:=
\bigl\{
\Delta:\ \exists\,\{P_1^\ell,P_2^\ell\}_{\ell=1}^L,\ 
\Delta_1^\ell=D_{\mu^\ell}^\alpha P_1^\ell,\ 
\Delta_2^\ell=P_2^\ell D_{\mu^\ell}^\alpha,\ 
\|P_1^\ell\|_{\mathrm{op}}\le1,\ 
\|P_2^\ell\|_{\mathrm{op}}\le1,\ \forall\ell
\bigr\}.
\]

The BMuon oracle solves
\[
\max_{\mu\in\mathcal K_K}
\max_{\Delta\in\mathcal A_\alpha(\mu)}
\langle M,\Delta\rangle,
\]
and its allocation objective is
\[
\Phi_M(\mu)
=
\sum_{\ell=1}^L
\left[
\|D_{\mu^\ell}^\alpha M_1^\ell\|_*
+
\|M_2^\ell D_{\mu^\ell}^\alpha\|_*
\right].
\]

\subsection{Derivation of the BMuon Oracle}
\label{app:oracle-derivation}

We derive Eqs.~(\ref{eq:bmuon-allocation})--(\ref{eq:bmuon-update}) from
Eq.~(\ref{eq:bmuon-oracle}). Fix \(\mu\in\mathcal K_K\). By definition of
\(\mathcal A_\alpha(\mu)\),
\[
\begin{aligned}
\max_{\Delta\in\mathcal A_\alpha(\mu)}
\langle M,\Delta\rangle
&=
\max_{\substack{
\|P_1^\ell\|_{\mathrm{op}}\le1\\
\|P_2^\ell\|_{\mathrm{op}}\le1,\ \forall\ell
}}
\sum_{\ell=1}^L
\Big[
\langle M_1^\ell,D_{\mu^\ell}^\alpha P_1^\ell\rangle
+
\langle M_2^\ell,P_2^\ell D_{\mu^\ell}^\alpha\rangle
\Big]
\\
&=
\sum_{\ell=1}^L
\Bigg[
\max_{\|P_1^\ell\|_{\mathrm{op}}\le1}
\langle M_1^\ell,D_{\mu^\ell}^\alpha P_1^\ell\rangle
+
\max_{\|P_2^\ell\|_{\mathrm{op}}\le1}
\langle M_2^\ell,P_2^\ell D_{\mu^\ell}^\alpha\rangle
\Bigg],
\end{aligned}
\]
where the second equality follows because the objective is a sum over matrices
and the constraints are independent across matrices.

Since \(D_{\mu^\ell}^\alpha\) is diagonal,
\(\langle M_1^\ell,D_{\mu^\ell}^\alpha P_1^\ell\rangle
=
\langle D_{\mu^\ell}^\alpha M_1^\ell,P_1^\ell\rangle\) and
\(\langle M_2^\ell,P_2^\ell D_{\mu^\ell}^\alpha\rangle
=
\langle M_2^\ell D_{\mu^\ell}^\alpha,P_2^\ell\rangle\).
Spectral--nuclear norm duality therefore gives
\[
\max_{\Delta\in\mathcal A_\alpha(\mu)}
\langle M,\Delta\rangle
=
\sum_{\ell=1}^L
\left[
\|D_{\mu^\ell}^\alpha M_1^\ell\|_*
+
\|M_2^\ell D_{\mu^\ell}^\alpha\|_*
\right]
=
\Phi_M(\mu).
\]
Maximizing this value over \(\mu\in\mathcal K_K\) yields
Eq.~(\ref{eq:bmuon-allocation}).

Moreover, \(\Phi_M\) is continuous on \(\mathcal K_K\): for
\(\alpha>0\), the map \(\mu\mapsto D_\mu^\alpha\) is continuous on
\([0,1]^N\), and the nuclear norm is continuous. Since
\(\mathcal K_K\) is compact, a maximizer \(\mu^\star\) exists.

Fix such a maximizer. By spectral--nuclear duality,
\(\Polar(A)\in\operatorname*{arg\,max}_{\|P\|_{\mathrm{op}}\le1}
\langle A,P\rangle\) for any matrix \(A\). Therefore, for every \(\ell\), we
can choose
\[
P_1^{\ell,\star}
=
\Polar\!\left(D_{\mu^{\ell,\star}}^\alpha M_1^\ell\right),
\qquad
P_2^{\ell,\star}
=
\Polar\!\left(M_2^\ell D_{\mu^{\ell,\star}}^\alpha\right).
\]
Returning to the update variables through the parameterization
\(\Delta_1^\ell=D_{\mu^\ell}^\alpha P_1^\ell\) and
\(\Delta_2^\ell=P_2^\ell D_{\mu^\ell}^\alpha\), substituting
\(\mu=\mu^\star\) and the maximizing
\(P_1^{\ell,\star},P_2^{\ell,\star}\) gives, for every \(\ell\),
\[
\Delta_1^{\ell,\star}
=
D_{\mu^{\ell,\star}}^\alpha
\Polar\!\left(D_{\mu^{\ell,\star}}^\alpha M_1^\ell\right),
\qquad
\Delta_2^{\ell,\star}
=
\Polar\!\left(M_2^\ell D_{\mu^{\ell,\star}}^\alpha\right)
D_{\mu^{\ell,\star}}^\alpha,
\]
which is Eq.~(\ref{eq:bmuon-update}). This \(\Delta^\star\) attains
\(\max_{\Delta\in\mathcal A_\alpha(\mu^\star)}
\langle M,\Delta\rangle=\Phi_M(\mu^\star)\). Since \(\mu^\star\) maximizes
\(\Phi_M\) over \(\mathcal K_K\), the pair
\((\mu^\star,\Delta^\star)\) solves the joint oracle in
Eq.~(\ref{eq:bmuon-oracle}).

\subsection{Extension to Gated MLPs}
\label{app:gated-mlps}
A SwiGLU block, omitting biases, has the form
\[
x\mapsto
W_{\mathrm{down}}^\ell
\Big(
\operatorname{SiLU}(W_{\mathrm{gate}}^\ell x)
\odot
W_{\mathrm{up}}^\ell x
\Big),
\]
where \(\odot\) denotes elementwise multiplication. Hidden channel \(j\) therefore couples row \(j\) of
\(W_{\mathrm{gate}}^\ell\), row \(j\) of \(W_{\mathrm{up}}^\ell\), and
column \(j\) of \(W_{\mathrm{down}}^\ell\). BMuon assigns the same allocation
\(\mu_j^\ell\) to these three components. Let \(M_{\mathrm{gate}}^\ell\), \(M_{\mathrm{up}}^\ell\), and
\(M_{\mathrm{down}}^\ell\) denote their momentum matrices. For a fixed
allocation \(\mu\), the admissible directions are parameterized as
\[
\Delta_{\mathrm{gate}}^\ell
=
D_{\mu^\ell}^\alpha P_{\mathrm{gate}}^\ell,
\qquad
\Delta_{\mathrm{up}}^\ell
=
D_{\mu^\ell}^\alpha P_{\mathrm{up}}^\ell,
\qquad
\Delta_{\mathrm{down}}^\ell
=
P_{\mathrm{down}}^\ell D_{\mu^\ell}^\alpha,
\]
with
\(\|P_{\mathrm{gate}}^\ell\|_{\mathrm{op}},
\|P_{\mathrm{up}}^\ell\|_{\mathrm{op}},
\|P_{\mathrm{down}}^\ell\|_{\mathrm{op}}\le1\).

The same spectral--nuclear duality argument then gives the block contribution
\[
\|D_{\mu^\ell}^\alpha M_{\mathrm{gate}}^\ell\|_*
+
\|D_{\mu^\ell}^\alpha M_{\mathrm{up}}^\ell\|_*
+
\|M_{\mathrm{down}}^\ell D_{\mu^\ell}^\alpha\|_*.
\]
Summing these terms over blocks gives the allocation objective under the same
global constraint \(\mu\in\mathcal K_K\). For an optimal allocation \(\mu^\star\), an optimal direction is
\[
\begin{aligned}
\Delta_{\mathrm{gate}}^{\ell,\star}
&=
D_{\mu^{\ell,\star}}^\alpha
\Polar\!\left(D_{\mu^{\ell,\star}}^\alpha
M_{\mathrm{gate}}^\ell\right),\\
\Delta_{\mathrm{up}}^{\ell,\star}
&=
D_{\mu^{\ell,\star}}^\alpha
\Polar\!\left(D_{\mu^{\ell,\star}}^\alpha
M_{\mathrm{up}}^\ell\right),\\
\Delta_{\mathrm{down}}^{\ell,\star}
&=
\Polar\!\left(M_{\mathrm{down}}^\ell
D_{\mu^{\ell,\star}}^\alpha\right)
D_{\mu^{\ell,\star}}^\alpha.
\end{aligned}
\]
Thus the only change from the two-matrix case is that each hidden-channel allocation is shared by three coupled matrix components.

\subsection{Allocation Curvature}
\label{app:curvature}

We prove Theorem~\ref{thm:curvature} by treating the three regimes separately.

\paragraph{Concavity for \(0<\alpha\le\tfrac12\).}
It suffices to consider one left-scaled term
\(F(\mu):=\|D_\mu^\alpha M\|_*\). Using
\(\|A\|_*=\operatorname{tr}((A^\top A)^{1/2})\),
\[
F(\mu)
=
\operatorname{tr}\!\left[
\left(M^\top D_\mu^{2\alpha}M\right)^{1/2}
\right].
\]
Define
\[
G(x)
:=
\operatorname{tr}\!\left[
\left(M^\top D_xM\right)^{1/2}
\right],
\qquad x\in\mathbb R_+^m.
\]
Then \(F(\mu)=G(\mu^{2\alpha})\), where the power is applied coordinatewise.

The matrix square-root map is operator concave and operator monotone on the
positive semidefinite cone \citep{bhatia1997matrix}. Since
\(x\mapsto M^\top D_xM\) is linear, \(G\) is concave. Moreover, \(G\) is coordinatewise nondecreasing. If \(x\le y\), then
\(D_x\preceq D_y\), hence
\(M^\top D_xM\preceq M^\top D_yM\). Operator monotonicity of the square root
and monotonicity of the trace therefore give \(G(x)\le G(y)\).

For any \(\mu,\nu\in\mathbb R_+^m\) and \(\theta\in[0,1]\), since
\(0<2\alpha\le1\), concavity of \(t\mapsto t^{2\alpha}\) gives
\[
(\theta\mu+(1-\theta)\nu)^{2\alpha}
\ge
\theta\mu^{2\alpha}+(1-\theta)\nu^{2\alpha}
\]
coordinatewise. Hence, using first the monotonicity and then the concavity of
\(G\),
\[
\begin{aligned}
F(\theta\mu+(1-\theta)\nu)
&=
G\!\left((\theta\mu+(1-\theta)\nu)^{2\alpha}\right)
\\
&\ge
G\!\left(\theta\mu^{2\alpha}+(1-\theta)\nu^{2\alpha}\right)
\\
&\ge
\theta G(\mu^{2\alpha})+(1-\theta)G(\nu^{2\alpha})
\\
&=
\theta F(\mu)+(1-\theta)F(\nu).
\end{aligned}
\]
Thus \(\mu\mapsto\|D_\mu^\alpha M\|_*\) is concave for
\(0<\alpha\le\tfrac12\). Since
\(\|MD_\mu^\alpha\|_*=\|D_\mu^\alpha M^\top\|_*\), the same argument applies
to right-scaled terms. Summing over all blocks and both terms in
Eq.~(\ref{eq:bmuon-allocation}) proves that \(\Phi_M\) is concave in \(\mu\).

\paragraph{Convexity at \(\alpha=1\).}
At \(\alpha=1\), both maps \(\mu\mapsto D_\mu M\) and
\(\mu\mapsto MD_\mu\) are linear. Since the nuclear norm is convex,
\(\mu\mapsto\|D_\mu M\|_*\) and
\(\mu\mapsto\|MD_\mu\|_*\) are convex. Summing these terms over all blocks therefore shows that \(\Phi_M\) is convex.

\paragraph{Intermediate regime \(\tfrac12<\alpha<1\).}
Take the special case \(L=1\), \(N=2\), \(K=1\), and \(M_2=0\).

For \(M_1=I_2\),
\(\Phi_M(\mu)=\mu_1^\alpha+\mu_2^\alpha\). Hence, for every \(\alpha<1\),
\[
\Phi_M\!\left(\tfrac12,\tfrac12\right)
=2^{1-\alpha}
>
\frac{\Phi_M(1,0)+\Phi_M(0,1)}{2}
=1,
\]
so \(\Phi_M\) is not convex.

Now let
\(M_1=\begin{pmatrix}1&0\\1&0\end{pmatrix}\). Then
\(\Phi_M(\mu)=(\mu_1^{2\alpha}+\mu_2^{2\alpha})^{1/2}\), and for every
\(\alpha>\tfrac12\),
\[
\Phi_M\!\left(\tfrac12,\tfrac12\right)
=2^{1/2-\alpha}
<
\frac{\Phi_M(1,0)+\Phi_M(0,1)}{2}
=1,
\]
so \(\Phi_M\) is not concave.

Thus, for \(\tfrac12<\alpha<1\), \(\Phi_M\) is in general neither convex nor concave. These examples show that the thresholds in Theorem~\ref{thm:curvature} are sharp: for \(0<\alpha\le1\), \(\Phi_M\) is concave for all momenta if and only if \(\alpha\le\tfrac12\), and convex for all momenta if and only if \(\alpha=1\).

\paragraph{Extreme-point structure at \(\alpha=1\).}
Assume \(K\) integer. The extreme points of \(\mathcal K_K\) are
exactly the binary vectors with \(K\) ones. Indeed, every such vector is
extreme because all its coordinates lie at the endpoints of \([0,1]\).
Conversely, any non-binary \(\mu\in\mathcal K_K\) has at least two fractional
coordinates: it cannot have exactly one since \(\sum_i\mu_i=K\) is an
integer. For two fractional coordinates \(i\neq j\), sufficiently small
\(\varepsilon>0\) gives
\(\mu^\pm=\mu\pm\varepsilon(e_i-e_j)\in\mathcal K_K\) with
\(\mu=\tfrac12(\mu^++\mu^-)\), so \(\mu\) is not extreme.

Since \(\Phi_M\) is continuous and convex at \(\alpha=1\), a maximizer over the polytope
\(\mathcal K_K\) can be chosen at an extreme point. Thus an optimal allocation
can be chosen binary, selecting exactly \(K\) channels. Writing
\(S_\ell:=\{j:(\ell,j)\in S\}\) and
\(D_{S_\ell}:=\operatorname{diag}(\mathbf 1_{S_\ell})\), the allocation
problem becomes
\[
\max_{\mu\in\mathcal K_K}\Phi_M(\mu)
=
\max_{\substack{
S\subseteq\{(\ell,j):1\le\ell\le L,\ 1\le j\le m_\ell\}\\
|S|=K
}}
\sum_{\ell=1}^L
\left[
\|D_{S_\ell}M_1^\ell\|_*
+
\|M_2^\ell D_{S_\ell}\|_*
\right].
\]
Hence, at \(\alpha=1\), the continuous allocation problem admits an equivalent
combinatorial formulation over subsets of \(K\) global channels.

\subsection{Norm Geometry of BMuon}
\label{app:norm}

Using $\mathcal K_K$ and $\mathcal A_\alpha(\mu)$ as defined in the
notation and setup at the beginning of Appendix~\ref{app:geometry}, define
\[
\mathcal A_{\alpha,K}
:=
\bigcup_{\mu\in\mathcal K_K}\mathcal A_\alpha(\mu),
\qquad
\mathcal B_{\alpha,K}
:=
\operatorname{conv}(\mathcal A_{\alpha,K}).
\]

Recall Muon's product norm and unit ball:
\[
\|\Delta\|_{\mathrm{Muon}}
:=
\max_\ell
\max\left\{
\|\Delta_1^\ell\|_{\mathrm{op}},
\|\Delta_2^\ell\|_{\mathrm{op}}
\right\},
\qquad
\mathcal A_{\mathrm{Muon}}
:=
\{\Delta:\|\Delta\|_{\mathrm{Muon}}\le1\}.
\]

\begin{lemma}[BMuon norm and comparison with Muon]
\label{lem:bmuon-norm}
Let \(0<\alpha\le1\) and \(0<K\le N\). Then
\(\mathcal B_{\alpha,K}\) is compact, convex, centrally symmetric, and has the
origin in its interior. Consequently, its Minkowski functional
\(\Omega_{\alpha,K}\) is a norm with unit ball \(\mathcal B_{\alpha,K}\).
Moreover, for every \(\Delta\),
\[
\|\Delta\|_{\mathrm{Muon}}
\le
\Omega_{\alpha,K}(\Delta)
\le
\left(\frac{N}{K}\right)^\alpha
\|\Delta\|_{\mathrm{Muon}}.
\]
\end{lemma}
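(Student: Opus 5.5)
The plan is to prove the two sandwich inclusions
\[
\left(\tfrac{K}{N}\right)^\alpha\mathcal A_{\mathrm{Muon}}\subseteq\mathcal B_{\alpha,K}\subseteq\mathcal A_{\mathrm{Muon}},
\]
and to read off every claimed property from them together with elementary facts about each $\mathcal A_\alpha(\mu)$. For the right inclusion, take $\Delta\in\mathcal A_\alpha(\mu)$, so $\Delta_1^\ell=D_{\mu^\ell}^\alpha P_1^\ell$ with $\|P_1^\ell\|_{\mathrm{op}}\le1$. Since $\mu\in[0,1]^N$, we have $\|D_{\mu^\ell}^\alpha\|_{\mathrm{op}}\le1$, and submultiplicativity gives $\|\Delta_1^\ell\|_{\mathrm{op}}\le1$. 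The same argument applies to $\Delta_2^\ell=P_2^\ell D_{\mu^\ell}^\alpha$. Hence $\mathcal A_{\alpha,K}\subseteq\mathcal A_{\mathrm{Muon}}$. Because $\mathcal A_{\mathrm{Muon}}$ is convex, taking convex hulls yields $\mathcal B_{\alpha,K}\subseteq\mathcal A_{\mathrm{Muon}}$. For the left inclusion, use the uniform allocation $\bar\mu_j^\ell=K/N$, which lies in $\mathcal K_K$ because $0<K\le N$. Then $D_{\bar\mu^\ell}^\alpha=(K/N)^\alpha I$, and every $\Delta$ with $\|\Delta\|_{\mathrm{Muon}}\le1$ gives $(K/N)^\alpha\Delta\in\mathcal A_\alpha(\bar\mu)\subseteq\mathcal B_{\alpha,K}$ by taking $P_i^\ell=\Delta_i^\ell$.

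Next I establish the structural properties. Central symmetry holds because each $\mathcal A_\alpha(\mu)$ is symmetric under $P\mapsto-P$; a union of symmetric sets is symmetric, and so is its convex hull. Convexity holds by construction. Boundedness follows from the right inclusion. The origin lies in the interior by the left inclusion, since $\mathcal A_{\mathrm{Muon}}$ contains a neighborhood of $0$ in the finite-dimensional product space.

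The main obstacle is closedness, since a convex hull of an arbitrary union need not be closed. I would handle it as follows. First show that $\mathcal A_{\alpha,K}$ is compact. It is the image of the compact set $\mathcal K_K\times\prod_\ell\{\|P_1^\ell\|_{\mathrm{op}}\le1\}\times\{\|P_2^\ell\|_{\mathrm{op}}\le1\}$ under the continuous map $(\mu,P)\mapsto(D_{\mu^\ell}^\alpha P_1^\ell,\,P_2^\ell D_{\mu^\ell}^\alpha)_\ell$. This map is continuous because $t\mapsto t^\alpha$ is continuous on $[0,1]$ for $\alpha>0$. Then apply Carathéodory's theorem: in a space of finite dimension $n$, the convex hull of a compact set is the continuous image of the compact set $\text{simplex}_{n}\times\mathcal A_{\alpha,K}^{n+1}$, and is therefore compact. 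With $\mathcal B_{\alpha,K}$ compact, convex, symmetric, and containing $0$ in its interior, the standard gauge theorem gives that $\Omega_{\alpha,K}$ is a norm whose closed unit ball is exactly $\mathcal B_{\alpha,K}$. Closedness is what ensures the unit ball is $\mathcal B_{\alpha,K}$ itself rather than its closure.

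Finally, the norm comparison follows from the sandwich by monotonicity and homogeneity of Minkowski functionals. Since $\mathcal B_{\alpha,K}\subseteq\mathcal A_{\mathrm{Muon}}$, we get $\|\Delta\|_{\mathrm{Muon}}\le\Omega_{\alpha,K}(\Delta)$. The gauge of $(K/N)^\alpha\mathcal A_{\mathrm{Muon}}$ is $(N/K)^\alpha\|\cdot\|_{\mathrm{Muon}}$, so the left inclusion gives $\Omega_{\alpha,K}(\Delta)\le(N/K)^\alpha\|\Delta\|_{\mathrm{Muon}}$.
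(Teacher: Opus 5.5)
Your proposal is correct and follows essentially the same route as the paper: the sandwich $(K/N)^\alpha\mathcal A_{\mathrm{Muon}}\subseteq\mathcal B_{\alpha,K}\subseteq\mathcal A_{\mathrm{Muon}}$ via submultiplicativity and the uniform allocation, compactness of $\mathcal A_{\alpha,K}$ as a continuous image of a compact parameter set, and the norm comparison by monotonicity of Minkowski functionals. The only difference is that you spell out the Carath\'eodory argument for compactness of the convex hull, which the paper leaves implicit in the phrase ``compact in finite dimension.''
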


\begin{proof}
Fix \(\mu\in\mathcal K_K\) and
\(\Delta\in\mathcal A_\alpha(\mu)\). By definition, for every \(\ell\),
there exist \(P_1^\ell,P_2^\ell\) with
\(\|P_1^\ell\|_{\mathrm{op}},\|P_2^\ell\|_{\mathrm{op}}\le1\) such that
\(\Delta_1^\ell=D_{\mu^\ell}^\alpha P_1^\ell\) and
\(\Delta_2^\ell=P_2^\ell D_{\mu^\ell}^\alpha\).
Since \(0\le\mu_j^\ell\le1\),
\(\|D_{\mu^\ell}^\alpha\|_{\mathrm{op}}\le1\). Therefore, by
submultiplicativity of the operator norm,
$\|\Delta_1^\ell\|_{\mathrm{op}}\le1,$ and $\|\Delta_2^\ell\|_{\mathrm{op}}\le1$  for every \(\ell\). Hence \(\mathcal A_\alpha(\mu)\subseteq\mathcal A_{\mathrm{Muon}}\) for every
\(\mu\in\mathcal K_K\), so
\(\mathcal A_{\alpha,K}\subseteq\mathcal A_{\mathrm{Muon}}\). Since the Muon
unit ball is convex,
\(\mathcal B_{\alpha,K}\subseteq\mathcal A_{\mathrm{Muon}}\).

Conversely, consider the uniform allocation
\(\bar\mu_j^\ell=K/N\), which belongs to \(\mathcal K_K\). Then
\(D_{\bar\mu^\ell}^\alpha=(K/N)^\alpha I\), and therefore
\(\mathcal A_\alpha(\bar\mu)
=(K/N)^\alpha\mathcal A_{\mathrm{Muon}}\). Since
\(\mathcal A_\alpha(\bar\mu)\subseteq\mathcal B_{\alpha,K}\),
\begin{equation}
\label{eq:bmuon-ball-sandwich}
\left(\frac{K}{N}\right)^\alpha
\mathcal A_{\mathrm{Muon}}
\subseteq
\mathcal B_{\alpha,K}
\subseteq
\mathcal A_{\mathrm{Muon}}.
\end{equation}

It remains to verify the geometric properties of
\(\mathcal B_{\alpha,K}\). For compactness, let
\[
\mathcal X
:=
\mathcal K_K
\times
\prod_{\ell=1}^L
\Big(
\{P_1^\ell:\|P_1^\ell\|_{\mathrm{op}}\le1\}
\times
\{P_2^\ell:\|P_2^\ell\|_{\mathrm{op}}\le1\}
\Big),
\]
which is compact in finite dimension, and define
\(F(\mu,\{P_1^\ell,P_2^\ell\})
:=((D_{\mu^\ell}^\alpha P_1^\ell,
P_2^\ell D_{\mu^\ell}^\alpha))_{\ell=1}^L\).
The map \(F\) is continuous for \(\alpha>0\), and
\(\mathcal A_{\alpha,K}=F(\mathcal X)\); hence
\(\mathcal A_{\alpha,K}\) is compact. Its convex hull
\(\mathcal B_{\alpha,K}\) is therefore compact in finite dimension and is
convex by definition. Moreover, \(\mathcal A_{\alpha,K}\) is centrally
symmetric, since replacing every \(P_i^\ell\) by \(-P_i^\ell\) replaces
\(\Delta\) by \(-\Delta\); the convex hull of a centrally symmetric set is
centrally symmetric. Finally, the left inclusion in
Eq.~(\ref{eq:bmuon-ball-sandwich}) shows that
\(\mathcal B_{\alpha,K}\) contains a neighborhood of the origin.

Thus the Minkowski functional of \(\mathcal B_{\alpha,K}\) is a norm whose
unit ball is \(\mathcal B_{\alpha,K}\). Since a larger set has a smaller
Minkowski functional, the two inclusions in
Eq.~(\ref{eq:bmuon-ball-sandwich}) give
\(\|\Delta\|_{\mathrm{Muon}}
\le\Omega_{\alpha,K}(\Delta)
\le(N/K)^\alpha\|\Delta\|_{\mathrm{Muon}}\).
\end{proof}

The following Lemma gives a characterization of the norm $\Omega_{\alpha,K}$ in the regime \(0<\alpha\le\tfrac12\).

\begin{lemma}[Characterization of the BMuon norm]
\label{lem:single-allocation}
Let \(0<\alpha\le\tfrac12\) and \(0<K\le N\). Then
\(\mathcal A_{\alpha,K}\) is convex, and hence
$
\mathcal B_{\alpha,K}=\mathcal A_{\alpha,K}.
$ For \(\mu\in\mathcal K_K\), define
\[
\rho_\mu(\Delta)
:=
\max_\ell
\max\left\{
\|D_{\mu^\ell}^{-\alpha}\Delta_1^\ell\|_{\mathrm{op}},
\|\Delta_2^\ell D_{\mu^\ell}^{-\alpha}\|_{\mathrm{op}}
\right\},
\]
with the following convention: if \(\mu_j^\ell=0\), finiteness requires
\(\Delta_{1,j:}^\ell=0\) and \(\Delta_{2,:,j}^\ell=0\); the corresponding
zero rows and columns are then omitted when evaluating the two operator
norms. If either is nonzero, set \(\rho_\mu(\Delta)=+\infty\). Then
\[
\Omega_{\alpha,K}(\Delta)
=
\min_{\mu\in\mathcal K_K}\rho_\mu(\Delta).
\]
\end{lemma}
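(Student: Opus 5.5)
The crux is convexity of $\mathcal A_{\alpha,K}$; the norm formula then follows from the Minkowski functional. The plan is to show that a convex combination of two admissible directions is admissible for a suitably mixed allocation.

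First I would recast membership in $\mathcal A_\alpha(\mu)$ as a positive semidefinite condition. For the left-scaled factor, $\Delta_1=D_\mu^\alpha P_1$ with $\|P_1\|_{\mathrm{op}}\le1$ exists if and only if $\Delta_1\Delta_1^\top\preceq D_\mu^{2\alpha}$. The forward direction is immediate. For the converse, use Douglas' factorization lemma: the range condition forces rows with $\mu_j=0$ to vanish, and on the remaining rows take $P_1=D_\mu^{-\alpha}\Delta_1$. Symmetrically, $\Delta_2=P_2D_\mu^\alpha$ is equivalent to $\Delta_2^\top\Delta_2\preceq D_\mu^{2\alpha}$.

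Next, take $\Delta\in\mathcal A_\alpha(\mu)$, $\Delta'\in\mathcal A_\alpha(\nu)$, $\theta\in[0,1]$, and set $\bar\mu=\theta\mu+(1-\theta)\nu\in\mathcal K_K$; this is where $\mathcal K_K$ being convex is used. Write $\Delta_1^\theta=\theta\Delta_1+(1-\theta)\Delta_1'$. Convexity of $X\mapsto XX^\top$ in the Loewner order (the Jensen-type inequality $(\theta A+(1-\theta)B)(\cdot)^\top\preceq\theta AA^\top+(1-\theta)BB^\top$, which follows from $\theta(1-\theta)(A-B)(A-B)^\top\succeq0$) gives
\[
\Delta_1^\theta(\Delta_1^\theta)^\top\preceq\theta D_\mu^{2\alpha}+(1-\theta)D_\nu^{2\alpha}\preceq D_{\bar\mu}^{2\alpha}.
\]
The last step is coordinatewise concavity of $t\mapsto t^{2\alpha}$, which holds exactly because $2\alpha\le1$. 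This is the same mechanism as in the proof of Theorem~\ref{thm:curvature}. The same computation applies blockwise for $\Delta_2$, with the same $\bar\mu$ serving every block simultaneously. Hence $\mathcal A_{\alpha,K}$ is convex, and therefore $\mathcal B_{\alpha,K}=\mathcal A_{\alpha,K}$.

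For the norm formula, note that $t\mathcal A_\alpha(\mu)=\{\Delta:\rho_\mu(\Delta)\le t\}$, using the stated zero-row convention. Then $\Omega_{\alpha,K}(\Delta)\le t$ if and only if $\Delta\in t\mathcal B_{\alpha,K}=\bigcup_\mu t\mathcal A_\alpha(\mu)$, which holds if and only if $\inf_\mu\rho_\mu(\Delta)\le t$, up to attainment. Attainment of the minimum follows from compactness. $\mathcal B_{\alpha,K}$ is closed by Lemma~\ref{lem:bmuon-norm}, so $\Delta\in\Omega(\Delta)\mathcal A_{\alpha,K}$, which gives some $\mu$ with $\rho_\mu(\Delta)\le\Omega(\Delta)$. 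The reverse inequality is immediate.

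The main obstacle is the careful handling of zero allocations in the Loewner-order equivalence, which is why the Douglas lemma/range argument is needed rather than simply inverting $D_\mu$.
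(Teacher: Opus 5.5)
Your proof is correct, but it reaches convexity by a different mechanism than the paper.

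\textbf{The paper's route.} The paper works with the scalar function $\varphi_\beta(z,s)=z^2s^{-\beta}$, where $\beta=2\alpha$. It proves this function is jointly convex and lower semicontinuous on $\mathbb R\times[0,\infty)$, using a Hessian computation plus a separate treatment of the boundary $s=0$. It then writes $\|D_{\mu^\ell}^{-\alpha}\Delta_1^\ell\|_{\mathrm{op}}^2=\sup_{\|x\|_2=1}\sum_j\varphi_\beta(\langle\Delta_{1,j:}^\ell,x\rangle,\mu_j^\ell)$ and concludes that $g(\Delta,\mu)=\rho_\mu(\Delta)^2$ is jointly convex and lower semicontinuous. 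Next it checks $\mathcal A_\alpha(\mu)=\{\rho_\mu\le1\}$ row by row. Convexity of $\mathcal A_{\alpha,K}$ follows from joint convexity of $g$ at the mixed allocation. Attainment follows from lower semicontinuity of $g(\Delta,\cdot)$ on the compact set $\mathcal K_K$.

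\textbf{Your route.} You encode membership as the Loewner conditions $\Delta_1^\ell(\Delta_1^\ell)^\top\preceq D_{\mu^\ell}^{2\alpha}$ and $(\Delta_2^\ell)^\top\Delta_2^\ell\preceq D_{\mu^\ell}^{2\alpha}$. You then combine matrix convexity of $X\mapsto XX^\top$ with coordinatewise concavity of $t\mapsto t^{2\alpha}$. Attainment comes from closedness of $\mathcal B_{\alpha,K}$, taken from Lemma~\ref{lem:bmuon-norm}.

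\textbf{Comparison.} Your argument is shorter and handles zero allocations automatically: a zero diagonal entry of $D_{\mu}^{2\alpha}$ forces the corresponding row (or column) to vanish via the diagonal of the PSD inequality. This also means Douglas' lemma is more than you need; that diagonal-entry observation, followed by inverting $D_\mu$ on its support, already gives the factorization. Your route also makes the parallel with the concavity argument of Theorem~\ref{thm:curvature} explicit. The paper's route establishes a stronger intermediate fact, namely joint convexity and lower semicontinuity of $(\Delta,\mu)\mapsto\rho_\mu(\Delta)^2$. That makes attainment self-contained, rather than dependent on the compactness proved in Lemma~\ref{lem:bmuon-norm}.
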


\begin{proof}
Set \(\beta:=2\alpha\in(0,1]\), and define
\[
\varphi_\beta(z,s)
:=
\begin{cases}
z^2s^{-\beta}, & s>0,\\
0, & (z,s)=(0,0),\\
+\infty, & s=0,\ z\neq0.
\end{cases}
\]

For \(s>0\), the Hessian of \(\varphi_\beta\) is positive
semidefinite: \(\partial_{zz}^2\varphi_\beta=2s^{-\beta}>0\) and
\[
\det\nabla^2\varphi_\beta(z,s)
=
2\beta(1-\beta)z^2s^{-2\beta-2}\ge0,
\]
since \(\beta\in(0,1]\). Hence \(\varphi_\beta\) is jointly convex on
\(\mathbb R\times(0,\infty)\).

It remains to check the boundary \(s=0\). The only boundary point at
which \(\varphi_\beta\) is finite is \((0,0)\). For \(s>0\) and
\(\theta\in[0,1]\),
\[
\varphi_\beta(\theta z,\theta s)
=
\theta^{2-\beta}\varphi_\beta(z,s)
\le
\theta\varphi_\beta(z,s),
\]
because \(2-\beta\ge1\), which is exactly the convexity inequality between
$(z,s)$ and $(0,0)$. If a boundary point has \(s=0\) and
\(z\neq0\), its value is \(+\infty\), so the convexity inequality is
automatic. Thus \(\varphi_\beta\) is convex on
\(\mathbb R\times[0,\infty)\).

It is also lower semicontinuous. It is continuous for \(s>0\). At
\((0,0)\), nonnegativity gives
\[
\liminf_{(z,s)\to(0,0)}\varphi_\beta(z,s)
\ge0
=
\varphi_\beta(0,0).
\]
If \(z_0\neq0\), then as \((z,s)\to(z_0,0)\), we eventually have
\(|z|\ge |z_0|/2\). If \(s=0\), then
\(\varphi_\beta(z,s)=+\infty\); if \(s>0\), then
\[
\varphi_\beta(z,s)
=
z^2s^{-\beta}
\ge
\frac{z_0^2}{4}s^{-\beta}
\longrightarrow+\infty.
\]
Hence
\[
\liminf_{(z,s)\to(z_0,0)}\varphi_\beta(z,s)
=
+\infty
=
\varphi_\beta(z_0,0).
\]
Therefore \(\varphi_\beta\) is lower semicontinuous on
\(\mathbb R\times[0,\infty)\).

For every block \(\ell\),
\[
\begin{aligned}
\|D_{\mu^\ell}^{-\alpha}\Delta_1^\ell\|_{\mathrm{op}}^2
&=
\sup_{\|x\|_2=1}
\|D_{\mu^\ell}^{-\alpha}\Delta_1^\ell x\|_2^2\\
&=
\sup_{\|x\|_2=1}
\sum_{j=1}^{m_\ell}
(\mu_j^\ell)^{-2\alpha}
\langle\Delta_{1,j:}^\ell,x\rangle^2\\
&=
\sup_{\|x\|_2=1}
\sum_{j=1}^{m_\ell}
\varphi_\beta(
\langle\Delta_{1,j:}^\ell,x\rangle,\mu_j^\ell).
\end{aligned}
\]

This interpretation agrees with the zero-allocation convention
in the definition of \(\rho_\mu\). Indeed, if \(\mu_j^\ell=0\) and
\(\Delta_{1,j:}^\ell=0\), then
\(\varphi_\beta(0,0)=0\), which amounts to omitting that row from the
operator norm. If instead \(\mu_j^\ell=0\) and
\(\Delta_{1,j:}^\ell\neq0\), there exists \(x\) such that
\(\langle\Delta_{1,j:}^\ell,x\rangle\neq0\), and hence the supremum is
\(+\infty\), matching the zero-allocation convention for
\(\rho_\mu\).

Similarly,
\[
\|\Delta_2^\ell D_{\mu^\ell}^{-\alpha}\|_{\mathrm{op}}^2
=
\sup_{\|y\|_2=1}
\sum_{j=1}^{m_\ell}
\varphi_\beta(
\langle\Delta_{2,:,j}^\ell,y\rangle,\mu_j^\ell).
\]

For fixed \(\ell,j\), and \(x\), the map
$
(\Delta,\mu)
\mapsto
\varphi_\beta(
\langle\Delta_{1,j:}^\ell,x\rangle,\mu_j^\ell)
$
is jointly convex and lower semicontinuous, since
\(\varphi_\beta\) has these properties and its arguments depend linearly
on \((\Delta,\mu)\). Finite sums preserve both properties, and so does
taking the pointwise supremum over \(x\). Hence
$
(\Delta,\mu)\mapsto
\|D_{\mu^\ell}^{-\alpha}\Delta_1^\ell\|_{\mathrm{op}}^2
$
is jointly convex and lower semicontinuous. The same argument applies to
$
(\Delta,\mu)\mapsto
\|\Delta_2^\ell D_{\mu^\ell}^{-\alpha}\|_{\mathrm{op}}^2$. Finally,
$
g(\Delta,\mu):=\rho_\mu(\Delta)^2
$
is the finite maximum of these functions over the blocks \(\ell\), and
is therefore jointly convex and lower semicontinuous.

We now show that
\begin{equation}
\label{eq:fixed-allocation-unit-ball}
\mathcal A_\alpha(\mu)
=
\{\Delta:\rho_\mu(\Delta)\le1\}.
\end{equation}

If \(\Delta\in\mathcal A_\alpha(\mu)\), then there exist
\(\{P_1^\ell,P_2^\ell\}_{\ell=1}^L\) such that, for every
\(\ell\),
\(\Delta_1^\ell=D_{\mu^\ell}^\alpha P_1^\ell\) and
\(\Delta_2^\ell=P_2^\ell D_{\mu^\ell}^\alpha\), with
\(\|P_1^\ell\|_{\mathrm{op}},\|P_2^\ell\|_{\mathrm{op}}\le1\).
Let \(S_\ell:=\{j:\mu_j^\ell>0\}\). If \(j\notin S_\ell\), then
\(\Delta_{1,j:}^\ell=\Delta_{2,:,j}^\ell=0\). In particular, if
\(S_\ell=\varnothing\), then
\(\Delta_1^\ell=\Delta_2^\ell=0\), so both operator norms appearing in
\(\rho_\mu\) for this block are zero under the zero-allocation convention.

If \(S_\ell\neq\varnothing\), then
\[
D_{\mu^\ell_{S_\ell}}^{-\alpha}\Delta_{1,S_\ell:}^\ell
=
P_{1,S_\ell:}^\ell,
\qquad
\Delta_{2,:,S_\ell}^\ell D_{\mu^\ell_{S_\ell}}^{-\alpha}
=
P_{2,:,S_\ell}^\ell.
\]
Removing rows or columns cannot increase the operator norm, so
\(\|P_{1,S_\ell:}^\ell\|_{\mathrm{op}}\le\|P_1^\ell\|_{\mathrm{op}}\le1\) and
\(\|P_{2,:,S_\ell}^\ell\|_{\mathrm{op}}\le\|P_2^\ell\|_{\mathrm{op}}\le1\).
By the zero-allocation convention defining \(\rho_\mu\), the rows and
columns with \(\mu_j^\ell=0\), which are zero in \(\Delta\), are omitted
from the operator-norm computation. Hence
\(\|D_{\mu^\ell}^{-\alpha}\Delta_1^\ell\|_{\mathrm{op}}
=\|D_{\mu^\ell_{S_\ell}}^{-\alpha}\Delta_{1,S_\ell:}^\ell\|_{\mathrm{op}}
=\|P_{1,S_\ell:}^\ell\|_{\mathrm{op}}\le1\), and similarly
\(\|\Delta_2^\ell D_{\mu^\ell}^{-\alpha}\|_{\mathrm{op}}\le1\).
Since this holds for every \(\ell\), we conclude \(\rho_\mu(\Delta)\le1\).

Conversely, if \(\rho_\mu(\Delta)\le1\), finiteness forces
\(\Delta_{1,j:}^\ell=\Delta_{2,:,j}^\ell=0\) whenever
\(j\notin S_\ell\). Define
\(P_{1,S_\ell:}^\ell
=D_{\mu^\ell_{S_\ell}}^{-\alpha}\Delta_{1,S_\ell:}^\ell\) and
\(P_{2,:,S_\ell}^\ell
=\Delta_{2,:,S_\ell}^\ell D_{\mu^\ell_{S_\ell}}^{-\alpha}\),
and set the remaining rows of \(P_1^\ell\) and columns of \(P_2^\ell\)
to zero. By definition of \(\rho_\mu\),
\(\|P_1^\ell\|_{\mathrm{op}}\le1\) and
\(\|P_2^\ell\|_{\mathrm{op}}\le1\), while
\(\Delta_1^\ell=D_{\mu^\ell}^\alpha P_1^\ell\) and
\(\Delta_2^\ell=P_2^\ell D_{\mu^\ell}^\alpha\).
Thus \(\Delta\in\mathcal A_\alpha(\mu)\).

Therefore, Eq.~(\ref{eq:fixed-allocation-unit-ball}) is proved.

Now take \(\Delta,\Delta'\in\mathcal A_{\alpha,K}\). By definition of
\(\mathcal A_{\alpha,K}\), there exist \(\mu,\mu'\in\mathcal K_K\) such that
\(\Delta\in\mathcal A_\alpha(\mu)\) and
\(\Delta'\in\mathcal A_\alpha(\mu')\). For \(\theta\in[0,1]\), set
\(\mu_\theta=\theta\mu+(1-\theta)\mu'\in\mathcal K_K\). Joint convexity of
\(g(\Delta,\mu)=\rho_\mu(\Delta)^2\) gives
\[
g(\theta\Delta+(1-\theta)\Delta',\mu_\theta)
\le
\theta g(\Delta,\mu)+(1-\theta)g(\Delta',\mu')
\le1.
\]
Hence
\(\theta\Delta+(1-\theta)\Delta'
\in\mathcal A_\alpha(\mu_\theta)\subseteq\mathcal A_{\alpha,K}\).
Therefore \(\mathcal A_{\alpha,K}\) is convex, and thus
\(\mathcal B_{\alpha,K}=\mathcal A_{\alpha,K}\).

Using the definition of the Minkowski functional and the positive
homogeneity of \(\rho_\mu\),
\[
\begin{aligned}
\Omega_{\alpha,K}(\Delta)
&=
\inf\{t>0:\Delta/t\in\mathcal A_{\alpha,K}\}\\
&=
\inf\{t>0:\exists\,\mu\in\mathcal K_K,\ 
\rho_\mu(\Delta)\le t\}\\
&=
\inf_{\mu\in\mathcal K_K}\rho_\mu(\Delta).
\end{aligned}
\]
Finally, for fixed \(\Delta\), \(g(\Delta,\mu)\) is lower
semicontinuous in \(\mu\). Since \(\mathcal K_K\) is compact, it attains
its minimum, and the same allocation minimizes
\(\rho_\mu(\Delta)=\sqrt{g(\Delta,\mu)}\). Thus the last infimum is a
minimum.
\end{proof}

\subsection{Proofs for the Geometric Special Cases}
\label{app:special-cases}

We prove the special cases stated in Section~\ref{sec:special-cases}.

\paragraph{Recovery of Muon.}
If \(K=N\), then \(0\le\mu_j^\ell\le1\) and
\(\sum_{\ell,j}\mu_j^\ell=N\) imply \(\mu_j^\ell=1\) for every channel.
Hence \(D_{\mu^\ell}^\alpha=I\) for every \(\ell\), so
\(\mathcal A_{\alpha,N}=\mathcal A_{\mathrm{Muon}}\). Since
\(\mathcal A_{\mathrm{Muon}}\) is convex,
\(\mathcal B_{\alpha,N}=\mathcal A_{\mathrm{Muon}}\), and therefore
\(\Omega_{\alpha,N}=\|\cdot\|_{\mathrm{Muon}}\).

Likewise, under the extension \(D_\mu^0:=I\),
\(\mathcal A_0(\mu)=\mathcal A_{\mathrm{Muon}}\) for every
\(\mu\in\mathcal K_K\). Thus
\(\mathcal B_{0,K}=\mathcal A_{\mathrm{Muon}}\) and
\(\Omega_{0,K}=\|\cdot\|_{\mathrm{Muon}}\).

\paragraph{Structured \(K\)-support geometry at \(\alpha=1\).}
Assume \(K\) is an integer and let
\(\mathcal V_K:=\{\mathbf 1_S:|S|=K\}\), where
\(S\subseteq\{(\ell,j):1\le\ell\le L,\ 1\le j\le m_\ell\}\) and
\(\mathbf 1_S^\ell\) denotes the block-\(\ell\) component of
\(\mathbf 1_S\), with
\((\mathbf 1_S^\ell)_j=\mathbf 1\{(\ell,j)\in S\}\).
As shown in Appendix~\ref{app:curvature}, the extreme points of
\(\mathcal K_K\) are exactly \(\mathcal V_K\). Since \(\mathcal K_K\) is a polytope, \(\mathcal K_K=\operatorname{conv}(\mathcal V_K)\).

Let \(\mu\in\mathcal K_K\), it can be written as
\(\mu=\sum_r\lambda_r\mathbf 1_{S_r}\), where
\(\lambda_r\ge0\), \(\sum_r\lambda_r=1\), and \(|S_r|=K\).
Now let \(\Delta\in\mathcal A_1(\mu)\). Then, for some
\(P_1^\ell,P_2^\ell\) satisfying
\(\|P_1^\ell\|_{\mathrm{op}},\|P_2^\ell\|_{\mathrm{op}}\le1\),
\(\Delta=
\bigl(D_{\mu^\ell}P_1^\ell,\,
P_2^\ell D_{\mu^\ell}\bigr)_{\ell=1}^L\).
Since \(D_{\mu^\ell}\) is linear in \(\mu^\ell\),
\(\Delta=
\sum_r\lambda_r
\bigl(
D_{\mathbf 1_{S_r}^\ell}P_1^\ell,\,
P_2^\ell D_{\mathbf 1_{S_r}^\ell}
\bigr)_{\ell=1}^L\),
and each term belongs to
\(\mathcal A_1(\mathbf 1_{S_r})\). Therefore
\[
\bigcup_{\mu\in\mathcal K_K}\mathcal A_1(\mu)
\subseteq
\operatorname{conv}
\left(
\bigcup_{|S|=K}\mathcal A_1(\mathbf 1_S)
\right).
\]
Conversely, \(|S|=K\) implies
\(\mathbf 1_S\in\mathcal K_K\). Hence, after taking convex hulls,
\[
\mathcal B_{1,K}
=
\operatorname{conv}
\left(
\bigcup_{\mu\in\mathcal K_K}\mathcal A_1(\mu)
\right)
=
\operatorname{conv}
\left(
\bigcup_{|S|=K}\mathcal A_1(\mathbf 1_S)
\right).
\]

For a binary allocation \(\mathbf 1_S\),
\[
\begin{aligned}
\mathcal A_1(\mathbf 1_S)
&=
\Big\{
\Delta:\ 
\Delta_1^\ell=D_{\mathbf 1_S^\ell}P_1^\ell,\ 
\Delta_2^\ell=P_2^\ell D_{\mathbf 1_S^\ell},\
\|P_1^\ell\|_{\mathrm{op}}\le1,\
\|P_2^\ell\|_{\mathrm{op}}\le1,\ \forall\ell
\Big\}
\\
&=
\Big\{
\Delta:\ 
\Delta_{1,j:}^\ell=0,\ 
\Delta_{2,:,j}^\ell=0
\ \text{for all }(\ell,j)\notin S,\
\|\Delta_1^\ell\|_{\mathrm{op}}\le1,\
\|\Delta_2^\ell\|_{\mathrm{op}}\le1,\ \forall\ell
\Big\}
\\
&=
\Big\{
\Delta:\ 
\operatorname{supp}_{\mathrm{ch}}(\Delta)\subseteq S,\
\|\Delta_1^\ell\|_{\mathrm{op}}\le1,\
\|\Delta_2^\ell\|_{\mathrm{op}}\le1,\ \forall\ell
\Big\}.
\end{aligned}
\]

The second equality uses that multiplication by
\(D_{\mathbf 1_S^\ell}\) zeros exactly the channels outside \(S\) and does
not increase the operator norm; conversely, one may take
\(P_i^\ell=\Delta_i^\ell\).

Consequently,
\[
\begin{aligned}
\mathcal B_{1,K}
&=
\operatorname{conv}
\left(
\bigcup_{|S|=K}
\left\{
\Delta:
\operatorname{supp}_{\mathrm{ch}}(\Delta)\subseteq S,\ 
\|\Delta_1^\ell\|_{\mathrm{op}}\le1,\ 
\|\Delta_2^\ell\|_{\mathrm{op}}\le1,\ \forall\ell
\right\}
\right)
\\
&=
\operatorname{conv}\Big\{
\Delta:
|\operatorname{supp}_{\mathrm{ch}}(\Delta)|\le K,\ 
\|\Delta_1^\ell\|_{\mathrm{op}}\le1,\ 
\|\Delta_2^\ell\|_{\mathrm{op}}\le1,\ \forall\ell
\Big\}.
\end{aligned}
\]

\paragraph{Explicit norm for \(\alpha=K=1\).}
Since \(\mathcal B_{\alpha,K}\) is the unit ball of
\(\Omega_{\alpha,K}\),
\[
\Omega_{\alpha,K}^*(M)
=
\max_{\Delta\in\mathcal B_{\alpha,K}}
\langle M,\Delta\rangle
=
\max_{\mu\in\mathcal K_K}\Phi_M(\mu).
\]
As shown in Appendix~\ref{app:curvature}, at \(\alpha=1\) a maximizer can be chosen at an extreme point of \(\mathcal K_K\). For \(K=1\), these extreme points are exactly the binary vectors with a single nonzero entry.
Therefore
\[
\begin{aligned}
\Omega_{1,1}^*(M)
&=
\max_{\mu\in\mathcal K_1}
\sum_{r=1}^L
\left[
\|D_{\mu^r}M_1^r\|_*
+
\|M_2^rD_{\mu^r}\|_*
\right]
\\
&=
\max_{\ell,j}
\sum_{r=1}^L
\left[
\|D_{\mathbf 1_{\{(\ell,j)\}}^r}M_1^r\|_*
+
\|M_2^rD_{\mathbf 1_{\{(\ell,j)\}}^r}\|_*
\right]
\\
&=
\max_{\ell,j}
\left[
\|D_{\mathbf 1_{\{j\}}}M_1^\ell\|_*
+
\|M_2^\ell D_{\mathbf 1_{\{j\}}}\|_*
\right]
\\
&=
\max_{\ell,j}
\left[
\|M_{1,j:}^\ell\|_2
+
\|M_{2,:,j}^\ell\|_2
\right].
\end{aligned}
\]
The last equality holds because the two matrices in the third equality have
respectively a single nonzero row and a single nonzero column, and the nuclear
norm of each equals the Euclidean norm of its nonzero row or column.

Dualizing gives the result
\[
\Omega_{1,1}(\Delta)
=
\sum_{\ell=1}^L\sum_{j=1}^{m_\ell}
\max\!\left\{
\|\Delta_{1,j:}^\ell\|_2,\,
\|\Delta_{2,:,j}^\ell\|_2
\right\}.
\]

\newpage
\section{Compact-Representation Theory}
\label{app:compact-theory}

Appendix~\ref{app:intrinsic-width} proves
Theorem~\ref{thm:intrinsic-width}, showing that a sufficiently scarce BMuon
budget forces every minimum-cost linear factorization to have minimal hidden
width, whereas Muon admits minimum-cost representations at full width.
Appendix~\ref{app:weight-decay} gives a complementary weight-decay result:
channel norms are controlled by their allocation histories, and channels whose
allocations vanish are driven to zero.

\subsection{Intrinsic-Width Recovery}
\label{app:intrinsic-width}

Let \(X=(X_\ell)_{\ell=1}^L\), where
\(X_\ell\in\mathbb R^{p_\ell\times d_\ell}\) and \(X_\ell\neq0\) for
every \(\ell\). Let \(r_\ell:=\operatorname{rank}(X_\ell)\ge1\), and
let \(\sigma_{\ell,1}\ge\cdots\ge\sigma_{\ell,r_\ell}>0\) denote the
nonzero singular values of \(X_\ell\). Set
\(\sigma_\star:=\max_{\ell,i}\sigma_{\ell,i}\).

Each block \(\ell\) consists of
\(W_1^\ell\in\mathbb R^{m_\ell\times d_\ell}\) and
\(W_2^\ell\in\mathbb R^{p_\ell\times m_\ell}\). Assume \(m_\ell\ge r_\ell\) for every \(\ell\). We consider the linear,
bias-free setting, so block \(\ell\) represents the map
\(W_2^\ell W_1^\ell\). Let \(N:=\sum_{\ell=1}^L m_\ell\) denote the
total number of hidden channels across blocks. An exact factorization
\(W\) satisfies \(W_2^\ell W_1^\ell=X_\ell\) for every \(\ell\).

Recall that, for
\(W=\bigl((W_1^\ell,W_2^\ell)\bigr)_{\ell=1}^L\),
$
\|W\|_{\mathrm{Muon}}
:=
\max_{1\le\ell\le L}
\max\left\{
\|W_1^\ell\|_{\mathrm{op}},
\|W_2^\ell\|_{\mathrm{op}}
\right\},
$
and define
\[
\mathcal R_{\mathrm{Muon}}(X)
:=
\inf_{\substack{W_2^\ell W_1^\ell=X_\ell\\ \forall\ell}}
\|W\|_{\mathrm{Muon}}^2,
\qquad
\mathcal R_{\alpha,K}(X)
:=
\inf_{\substack{W_2^\ell W_1^\ell=X_\ell\\ \forall\ell}}
\Omega_{\alpha,K}(W)^2.
\]

For an exact factorization \(W\), let \(a_\ell(W)\) denote the number
of channels \(j\) such that \(W_{1,j:}^\ell\neq0\) or
\(W_{2,:,j}^\ell\neq0\). Finally, recall
\(d_{\mathrm{eff}}^{(\alpha)}(X)
:=\sum_{\ell=1}^L\sum_{i=1}^{r_\ell}
(\sigma_{\ell,i}/\sigma_\star)^{1/(2\alpha)}\).

\paragraph{Theorem~\ref{thm:intrinsic-width} (restated).}
Let \(0<\alpha\le\tfrac12\) and \(0<K\le N\). Both representation-cost
infima are attained, and
\[
\mathcal R_{\mathrm{Muon}}(X)=\sigma_\star,
\qquad
\mathcal R_{\alpha,K}(X)
=
\sigma_\star
\max\left\{
1,\,
\left(
\frac{d_{\mathrm{eff}}^{(\alpha)}(X)}{K}
\right)^{2\alpha}
\right\}.
\]
Moreover, Muon admits a minimizer \(W^\star\) satisfying
\(a_\ell(W^\star)=m_\ell\) for every \(\ell\), whereas if
\(K\le d_{\mathrm{eff}}^{(\alpha)}(X)\), every BMuon minimizer
\(W^\star\) satisfies \(a_\ell(W^\star)=r_\ell\) for every \(\ell\).

For \(\mu\in\mathcal K_K\), recall the fixed-allocation radius
\[
\rho_\mu(W)
:=
\max_{1\le\ell\le L}
\max\left\{
\|D_{\mu^\ell}^{-\alpha}W_1^\ell\|_{\mathrm{op}},
\|W_2^\ell D_{\mu^\ell}^{-\alpha}\|_{\mathrm{op}}
\right\},
\]
with the zero-allocation convention of
Lemma~\ref{lem:single-allocation}. By that lemma,
\(\Omega_{\alpha,K}(W)=\min_{\mu\in\mathcal K_K}\rho_\mu(W)\).

We first consider the following single-block case.
\begin{lemma}[Fixed-allocation factorization]
\label{lem:fixed-allocation-factorization}
Let \(\alpha>0\). Consider a single block (\(L=1\)). Let
\(X\in\mathbb R^{p\times d}\) have rank \(r\ge1\) and nonzero singular
values \(\sigma_1\ge\cdots\ge\sigma_r>0\). For
\(\mu\in[0,1]^m\), \(m\ge r\), let
\(\mu_{(1)}\ge\cdots\ge\mu_{(m)}\) denote its ordered coordinates.
For \(W=(W_1,W_2)\), define
\[
\rho_\mu(W)
:=
\max\left\{
\|D_\mu^{-\alpha}W_1\|_{\mathrm{op}},
\|W_2D_\mu^{-\alpha}\|_{\mathrm{op}}
\right\},
\]
with the zero-allocation convention of
Lemma~\ref{lem:single-allocation}. Then
\[
\inf_{W_2W_1=X}\rho_\mu(W)^2
=
\max_{1\le i\le r}
\frac{\sigma_i}{\mu_{(i)}^{2\alpha}}.
\]
The right-hand side is \(+\infty\) if \(\mu_{(i)}=0\) for some
\(1\le i\le r\). Whenever it is finite, the infimum is attained.
\end{lemma}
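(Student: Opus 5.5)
The plan is to change variables so that the weighting becomes a diagonal factor sandwiched in the product, and then compare singular values. Let \(S:=\{j:\mu_j>0\}\). By the zero-allocation convention, any \(W\) with \(\rho_\mu(W)<\infty\) has \(W_{1,j:}=0\) and \(W_{2,:,j}=0\) for \(j\notin S\). Define \(A:=D_{\mu_S}^{-\alpha}W_{1,S:}\) and \(B:=W_{2,:,S}D_{\mu_S}^{-\alpha}\). Then \(\rho_\mu(W)=\max\{\|A\|_{\mathrm{op}},\|B\|_{\mathrm{op}}\}\), and the constraint reads \(X=W_2W_1=B\,D_{\mu_S}^{2\alpha}A\). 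This correspondence \(W\leftrightarrow(A,B)\) is a bijection between feasible finite-cost factorizations and pairs \((A,B)\) with \(BD_{\mu_S}^{2\alpha}A=X\). So the problem becomes minimizing \(\max\{\|A\|_{\mathrm{op}},\|B\|_{\mathrm{op}}\}^2\) over such pairs.

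\emph{Lower bound.} I will use the standard singular value inequality \(\sigma_i(BCA)\le\|B\|_{\mathrm{op}}\,\sigma_i(C)\,\|A\|_{\mathrm{op}}\), which follows from Courant--Fischer or the Weyl/Horn inequalities. The singular values of \(C=D_{\mu_S}^{2\alpha}\) are \(\mu_{(i)}^{2\alpha}\) for \(i\le|S|\) and zero beyond. If \(\mu_{(i)}=0\) for some \(i\le r\), then \(|S|<r\). Hence \(\operatorname{rank}(BCA)<r=\operatorname{rank}X\), so no finite-cost factorization exists, the infimum is \(+\infty\), and this matches the convention on the right-hand side. Otherwise, for each \(i\le r\),
\[
\sigma_i\le\|A\|_{\mathrm{op}}\|B\|_{\mathrm{op}}\,\mu_{(i)}^{2\alpha}\le\rho_\mu(W)^2\,\mu_{(i)}^{2\alpha}.
\]
Taking the maximum over \(i\) gives \(\rho_\mu(W)^2\ge\max_i\sigma_i/\mu_{(i)}^{2\alpha}=:t\).

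\emph{Upper bound and attainment.} Write the compact SVD \(X=U\Sigma V^\top\). Pick distinct indices \(j_1,\dots,j_r\) with \(\mu_{j_i}=\mu_{(i)}\); this is possible since \(m\ge r\). Let \(E\in\mathbb R^{m\times r}\) be the corresponding column selector, and let \(\lambda_i:=\sqrt{\sigma_i}/\mu_{(i)}^{\alpha}\) and \(\Lambda:=\operatorname{diag}(\lambda_i)\). Set
\[
W_1:=D_\mu^{\alpha}E\Lambda V^\top,\qquad W_2:=U\Lambda E^\top D_\mu^{\alpha}.
\]
Then
\[
W_2W_1=U\Lambda\operatorname{diag}(\mu_{(i)}^{2\alpha})\Lambda V^\top=U\Sigma V^\top=X.
\]
Moreover \(D_\mu^{-\alpha}W_1=E\Lambda V^\top\) on the active rows, and all other rows are zero, consistent with the convention. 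Its operator norm is \(\max_i\lambda_i=\sqrt t\), because \(E\) and \(V\) have orthonormal columns. The same holds for \(W_2D_\mu^{-\alpha}=U\Lambda E^\top\). Hence \(\rho_\mu(W)^2=t\), and the infimum is attained.

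The only step needing care is the singular value inequality for the triple product together with the rank argument in the degenerate case. Everything else is a direct computation, so I expect no real obstacle beyond bookkeeping with the zero-allocation convention.
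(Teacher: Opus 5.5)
Your proposal is correct and follows essentially the same route as the paper: a rank-counting argument for the degenerate case, and for the lower bound the substitution \(X=(W_{2,:,S}D_{\mu_S}^{-\alpha})\,D_{\mu_S}^{2\alpha}\,(D_{\mu_S}^{-\alpha}W_{1,S:})\) combined with \(\sigma_i(BCA)\le\|B\|_{\mathrm{op}}\,\sigma_i(C)\,\|A\|_{\mathrm{op}}\). Your attaining factorization is also the paper's: since \(D_\mu^{\alpha}E=E\operatorname{diag}(\mu_{(i)}^{\alpha})\), your \(W_1\) simplifies to \(E\Sigma^{1/2}V^\top\) and your \(W_2\) to \(U\Sigma^{1/2}E^\top\), placed on the \(r\) largest-allocation channels, with your selector \(E\) doing the job of the paper's without-loss-of-generality sorting.
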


\begin{proof}
If \(\mu_{(r)}=0\), then fewer than \(r\) coordinates satisfy
\(\mu_j>0\). Suppose a $W = (W_1, W_2)$ is an exact factorization of \(X\),
(i.e. \(W_2W_1=X\)) having finite
fixed-allocation radius \(\rho_\mu(W)\). Finiteness of the radius forces
\(W_{1,j:}=0\) and \(W_{2,:,j}=0\) whenever \(\mu_j=0\) (convention of
Lemma~\ref{lem:single-allocation}). Hence
$
X
=
\sum_{j:\mu_j>0}
W_{2,:,j}W_{1,j:}
$
is a sum of fewer than \(r\) rank-one matrices. Therefore
\(\operatorname{rank}(X)<r\), contradicting
\(\operatorname{rank}(X)=r\). Thus, when \(\mu_{(r)}=0\), every $W$ satisfying exact factorization has \(\rho_\mu(W)=+\infty\). Since the right-hand side is also
\(+\infty\) by convention when \(\mu_{(r)}=0\), the claimed equality holds in this case.

Assume now \(\mu_{(r)}>0\), and let \(W=(W_1,W_2)\) be an exact factorization of \(X\),
i.e. \(W_2W_1=X\), with finite fixed-allocation radius
\(\rho_\mu(W)<\infty\). Let \(S:=\{j:\mu_j>0\}\). Finite radius implies
\(W_{1,S^c:}=0\) and \(W_{2,:,S^c}=0\) (convention of
Lemma~\ref{lem:single-allocation}), hence
$
X
=
\bigl(W_{2,:,S}D_{\mu_S}^{-\alpha}\bigr)
D_{\mu_S}^{2\alpha}
\bigl(D_{\mu_S}^{-\alpha}W_{1,S:}\bigr).
$
Using
\(\sigma_i(ABC)\le
\|A\|_{\mathrm{op}}\sigma_i(B)\|C\|_{\mathrm{op}}\),
we obtain, for \(i\le r\),
\[
\sigma_i
\le
\max\left\{
\|D_\mu^{-\alpha}W_1\|_{\mathrm{op}}^2,\,
\|W_2D_\mu^{-\alpha}\|_{\mathrm{op}}^2
\right\}
\mu_{(i)}^{2\alpha}.
\]

The weighted operator norms are interpreted according to
the zero-allocation convention in the definition of \(\rho_\mu\) in Lemma~\ref{lem:single-allocation}.

So, for any exact factorization with finite fixed-allocation radius, for every \(i\le r\),
$
\rho_\mu(W)^2
\ge
\frac{\sigma_i}{\mu_{(i)}^{2\alpha}}
$, hence
\[
\rho_\mu(W)^2
\ge
\max_{1\le i\le r}
\frac{\sigma_i}{\mu_{(i)}^{2\alpha}}.
\]
The same bound is trivial when \(\rho_\mu(W)=+\infty\). Taking the
infimum over all exact factorizations $W$ gives
\[
\inf_{W_2W_1=X}\rho_\mu(W)^2
\ge
\max_{1\le i\le r}
\frac{\sigma_i}{\mu_{(i)}^{2\alpha}}.
\]

For the reverse inequality, permutation of the hidden coordinates leaves
both \(W_2W_1\) and
\(\|D_\mu^{-\alpha}W_1\|_{\mathrm{op}},
\|W_2D_\mu^{-\alpha}\|_{\mathrm{op}}\)
unchanged. So assume without loss of generality that
\(\mu_1\ge\cdots\ge\mu_m\). Let
\(X=U\operatorname{diag}(\sigma_1,\ldots,\sigma_r)V^\top\) be a compact
SVD. Set
\(W_{1,i:}=\sqrt{\sigma_i}\,v_i^\top\) and
\(W_{2,:,i}=\sqrt{\sigma_i}\,u_i\) for \(i\le r\), with all remaining
rows and columns equal to zero. Then \(W_2W_1=X\), and since \(U\) and \(V\) have orthonormal columns, the nonzero singular
values of both $D_\mu^{-\alpha}W_1$ and $W_2D_\mu^{-\alpha}$ are
\(\{\sqrt{\sigma_i}/\mu_i^\alpha\}_{i=1}^r\). Hence
\[
\|D_\mu^{-\alpha}W_1\|_{\mathrm{op}}^2
=
\|W_2D_\mu^{-\alpha}\|_{\mathrm{op}}^2
=
\max_{1\le i\le r}\frac{\sigma_i}{\mu_i^{2\alpha}},
\]
which gives the reverse inequality and completes the proof.
\end{proof}

We now prove Theorem~\ref{thm:intrinsic-width}.
\begin{proof}
We first consider Muon. Recall that an exact factorization
\(W=(W^\ell)_{\ell=1}^L\), with
\(W^\ell=(W_1^\ell,W_2^\ell)\), satisfies
\(W_2^\ell W_1^\ell=X_\ell\) for every
\(\ell\in\{1,\ldots,L\}\). For any such factorization $W$,
\[
\sigma_{\ell,1}
=
\|X_\ell\|_{\mathrm{op}}
\le
\|W_2^\ell\|_{\mathrm{op}}\|W_1^\ell\|_{\mathrm{op}}
\le
\|W\|_{\mathrm{Muon}}^2,
\]
hence \(\|W\|_{\mathrm{Muon}}^2\ge\sigma_\star\).

To attain this bound at full width, let
\(X_\ell=U_\ell\Sigma_\ell V_\ell^\top\) be a compact SVD and choose
\(A_\ell\in\mathbb R^{m_\ell\times r_\ell}\) such that
\(A_\ell^\top A_\ell=I\) and every row of \(A_\ell\) is nonzero. For
example, take its first column to be
\(m_\ell^{-1/2}\mathbf 1\) and complete it to an orthonormal family.
Define
$W_1^\ell=A_\ell\Sigma_\ell^{1/2}V_\ell^\top$ and $W_2^\ell=U_\ell\Sigma_\ell^{1/2}A_\ell^\top$, then \(W_2^\ell W_1^\ell=X_\ell\) and
\(\|W_1^\ell\|_{\mathrm{op}}^2
=\|W_2^\ell\|_{\mathrm{op}}^2=\sigma_{\ell,1}\). Thus
\(\mathcal R_{\mathrm{Muon}}(X)=\sigma_\star\). Since every row of
\(A_\ell\) is nonzero and \(\Sigma_\ell\) has strictly positive
diagonal entries, every row of \(W_1^\ell\) and every column of
\(W_2^\ell\) is nonzero. Hence this minimizer satisfies
\(a_\ell(W)=m_\ell\) for every \(\ell\), which concludes the Muon part
of the proof.

We now turn to BMuon. Lemma~\ref{lem:single-allocation} implies:
\begin{equation}
\label{eq:bmuon-representation-cost}
\mathcal R_{\alpha,K}(X)
=
\inf_{\mu\in\mathcal K_K}
\inf_{\substack{W_2^\ell W_1^\ell=X_\ell\\ \forall\ell}}
\rho_\mu(W)^2.
\end{equation}
For fixed \(\mu\), denote
\[
c_\ell
:=
\inf_{W_2^\ell W_1^\ell=X_\ell}
\max\left\{
\|D_{\mu^\ell}^{-\alpha}W_1^\ell\|_{\mathrm{op}}^2,\,
\|W_2^\ell D_{\mu^\ell}^{-\alpha}\|_{\mathrm{op}}^2
\right\}.
\]

We want to prove that, for this fixed allocation \(\mu\),
\begin{equation}
\label{eq:blockwise-factorization-cost}
\inf_{\substack{W_2^\ell W_1^\ell=X_\ell\\ \forall\ell}}
\rho_\mu(W)^2
=
\max_{1\le\ell\le L} c_\ell.
\end{equation}

For every exact factorization \(W\) and every
\(\ell\in\{1,\ldots,L\}\),
\[
\max\left\{
\|D_{\mu^\ell}^{-\alpha}W_1^\ell\|_{\mathrm{op}}^2,\,
\|W_2^\ell D_{\mu^\ell}^{-\alpha}\|_{\mathrm{op}}^2
\right\}
\ge c_\ell.
\]
Hence \(\rho_\mu(W)^2\ge \max_{1\le\ell\le L}c_\ell\), and thus
\[
\inf_{\substack{W_2^\ell W_1^\ell=X_\ell\\ \forall\ell}}
\rho_\mu(W)^2
\ge
\max_{1\le\ell\le L}c_\ell.
\]
Conversely, suppose first that every \(c_\ell<\infty\).
Lemma~\ref{lem:fixed-allocation-factorization} shows that, for every
\(\ell\), the infimum defining \(c_\ell\) is attained by some exact
factorization \(W^{\ell,\star}\). Let $W^\star:=\bigl(W^{\ell,\star}\bigr)_{\ell=1}^L$, we have $\rho_\mu(W^\star)^2=\max_{1\le\ell\le L}c_\ell$.
Therefore
\[
\inf_{\substack{W_2^\ell W_1^\ell=X_\ell\\ \forall\ell}}
\rho_\mu(W)^2
\le
\max_{1\le\ell\le L}c_\ell.
\]
The same inequality holds if for some \(\ell\), \(c_\ell=+\infty\).
Hence, in all cases, Eq.~(\ref{eq:blockwise-factorization-cost}) holds.

For this fixed \(\mu\), let \(\mu_{(1)}^\ell\ge\cdots\ge\mu_{(m_\ell)}^\ell\)
denote the ordered coordinates of \(\mu^\ell\). By
Lemma~\ref{lem:fixed-allocation-factorization},
$
c_\ell
=
\max_{1\le i\le r_\ell}
\frac{\sigma_{\ell,i}}
     {(\mu_{(i)}^\ell)^{2\alpha}}.
$

Therefore, combining
Eqs.~(\ref{eq:bmuon-representation-cost}) and~(\ref{eq:blockwise-factorization-cost}),
\[
\mathcal R_{\alpha,K}(X)
=
\inf_{\mu\in\mathcal K_K}
\max_{1\le\ell\le L}
\max_{1\le i\le r_\ell}
\frac{\sigma_{\ell,i}}
     {(\mu_{(i)}^\ell)^{2\alpha}}.
\]

Set \(q:=1/(2\alpha)\). For any \(\mu\in\mathcal K_K\), denote
$
M(\mu)
:=
\max_{1\le\ell\le L}
\max_{1\le i\le r_\ell}
\frac{\sigma_{\ell,i}}
     {(\mu_{(i)}^\ell)^{2\alpha}}.
$

Assume first that \(M(\mu)<\infty\). Since \(\mu_{(i)}^\ell\le1\), we have
\(M(\mu)\ge\sigma_{\ell,i}\) for every \(\ell\) and \(i\le r_\ell\).
Hence $M(\mu)\ge\sigma_\star$. Moreover, for every \(\ell\) and \(i\le r_\ell\),
\(M(\mu)\ge \sigma_{\ell,i}/(\mu_{(i)}^\ell)^{2\alpha}\) implies
$
\mu_{(i)}^\ell
\ge
\left(\frac{\sigma_{\ell,i}}{M(\mu)}\right)^q.
$
Therefore
\[
K
=
\sum_{\ell=1}^L\sum_{j=1}^{m_\ell}\mu_j^\ell
\ge
\sum_{\ell=1}^L\sum_{i=1}^{r_\ell}\mu_{(i)}^\ell
\ge
\frac{1}{M(\mu)^q}
\sum_{\ell=1}^L\sum_{i=1}^{r_\ell}\sigma_{\ell,i}^q,
\]
and thus
\[
M(\mu)
\ge
\left(
\frac{\sum_{\ell,i}\sigma_{\ell,i}^q}{K}
\right)^{1/q}.
\]

Hence
$
M(\mu)
\ge
\max\left\{
\sigma_\star,\,
\left(
\frac{\sum_{\ell,i}\sigma_{\ell,i}^q}{K}
\right)^{1/q}
\right\},
$ which also holds directly when \(M(\mu)=+\infty\).

Since this holds for every \(\mu\in\mathcal K_K\), and
\(\mathcal R_{\alpha,K}(X)=\inf_{\mu\in\mathcal K_K}M(\mu)\), we obtain
\begin{equation}
\label{eq:bmuon-cost-lower-bound}
\mathcal R_{\alpha,K}(X)
\ge
\max\left\{
\sigma_\star,\,
\left(
\frac{\sum_{\ell,i}\sigma_{\ell,i}^q}{K}
\right)^{1/q}
\right\}.
\end{equation}

We now prove the reverse inequality. Set
$
T
:=
\max\left\{
\sigma_\star,\,
\left(
\frac{\sum_{\ell,i}\sigma_{\ell,i}^q}{K}
\right)^{1/q}
\right\}.
$
For each block \(\ell\), set
\(\mu_i^\ell=(\sigma_{\ell,i}/T)^q\) for \(1\le i\le r_\ell\),
and set the remaining coordinates to zero. Since \(T\ge\sigma_\star\),
all these coordinates lie in \([0,1]\). Moreover,
\[
\sum_{\ell=1}^L\sum_{i=1}^{r_\ell}\mu_i^\ell
=
\frac{\sum_{\ell,i}\sigma_{\ell,i}^q}{T^q}
\le K.
\]
Let \(S:=\sum_{\ell=1}^L\sum_{i=1}^{r_\ell}\mu_i^\ell\). The current
allocation has total mass \(S\), so the amount still to be added to
obtain an allocation in \(\mathcal K_K\) is \(K-S\). The total mass
that can be added without violating the constraint
\(\mu_j^\ell\le1\) is
$
\sum_{\ell=1}^L\sum_{j=1}^{m_\ell}(1-\mu_j^\ell)
=
N-S.
$
Since \(K\le N\), we have \(K-S\le N-S\). Hence we can increase the
coordinates, without making any of them exceed \(1\), until the total
mass becomes \(K\). Denote the resulting allocation by
\(\widetilde\mu\). Then \(\widetilde\mu\in\mathcal K_K\), and since
\(\widetilde\mu_j^\ell\ge\mu_j^\ell\) for every \(\ell,j\),
\[
\widetilde\mu_{(i)}^\ell
\ge
\mu_{(i)}^\ell
=
\left(\frac{\sigma_{\ell,i}}{T}\right)^q,
\qquad
1\le i\le r_\ell.
\]

We have therefore constructed an allocation
\(\widetilde\mu\in\mathcal K_K\) such that
\[
M(\widetilde\mu)
=
\max_{1\le\ell\le L}
\max_{1\le i\le r_\ell}
\frac{\sigma_{\ell,i}}
     {(\widetilde\mu_{(i)}^\ell)^{2\alpha}}
\le T.
\]
Hence
\begin{equation}
\label{eq:bmuon-cost-upper-bound}
\mathcal R_{\alpha,K}(X)\le T.
\end{equation}

By Eq.~(\ref{eq:bmuon-cost-lower-bound}) and Eq.~(\ref{eq:bmuon-cost-upper-bound}),
\[
\mathcal R_{\alpha,K}(X)
=
\max\left\{
\sigma_\star,\,
\left(
\frac{\sum_{\ell,i}\sigma_{\ell,i}^q}{K}
\right)^{1/q}
\right\}.
\]

Since
\(\sum_{\ell,i}\sigma_{\ell,i}^q
=\sigma_\star^q d_{\mathrm{eff}}^{(\alpha)}(X)\)
and \(1/q=2\alpha\), we obtain
\[
\mathcal R_{\alpha,K}(X)
=
\sigma_\star
\max\left\{
1,\,
\left(
\frac{d_{\mathrm{eff}}^{(\alpha)}(X)}{K}
\right)^{2\alpha}
\right\}.
\]

We now prove attainment. Let \(\widetilde\mu\in\mathcal K_K\) be the
allocation constructed above. By
Lemma~\ref{lem:fixed-allocation-factorization}, for every block
\(\ell\), there exists an exact factorization \(W^{\ell,\star}\)
attaining
$$
\inf_{W_2^\ell W_1^\ell=X_\ell}
\max\left\{
\|D_{\widetilde\mu^\ell}^{-\alpha}W_1^\ell\|_{\mathrm{op}}^2,\,
\|W_2^\ell D_{\widetilde\mu^\ell}^{-\alpha}\|_{\mathrm{op}}^2
\right\}.
$$
Let
\(W^\star:=\bigl(W^{\ell,\star}\bigr)_{\ell=1}^L\),
then \(W^\star\) is an exact factorization and
\[
\rho_{\widetilde\mu}(W^\star)^2
=
\max_{1\le\ell\le L}
\max_{1\le i\le r_\ell}
\frac{\sigma_{\ell,i}}
     {(\widetilde\mu_{(i)}^\ell)^{2\alpha}}
\le T
=
\mathcal R_{\alpha,K}(X).
\]
Since
\(\Omega_{\alpha,K}(W^\star)
=\min_{\nu\in\mathcal K_K}\rho_\nu(W^\star)\) (Lemma~\ref{lem:single-allocation}),
\[
\mathcal R_{\alpha,K}(X)
\le
\Omega_{\alpha,K}(W^\star)^2
\le
\rho_{\widetilde\mu}(W^\star)^2
\le
\mathcal R_{\alpha,K}(X).
\]
Hence all these terms are equal, and \(W^\star\) attains the infimum
defining \(\mathcal R_{\alpha,K}(X)\).

We now prove intrinsic-width recovery. Assume
\(K\le d_{\mathrm{eff}}^{(\alpha)}(X)\). Then
$
\mathcal R_{\alpha,K}(X)
=
\left(
\frac{\sum_{\ell,i}\sigma_{\ell,i}^q}{K}
\right)^{1/q}.
$

Let \(W^\star\) be any minimizer of
\(\mathcal R_{\alpha,K}(X)\). By
Lemma~\ref{lem:single-allocation}, there exists
\(\mu^\star\in\mathcal K_K\) such that
$
\Omega_{\alpha,K}(W^\star)
=
\rho_{\mu^\star}(W^\star).
$
Since \(W^\star\) is a minimizer,
$
\rho_{\mu^\star}(W^\star)^2
=
\mathcal R_{\alpha,K}(X).
$ By Lemma~\ref{lem:fixed-allocation-factorization} applied to each block $\ell$,
\[
\max_{1\le i\le r_\ell}
\frac{\sigma_{\ell,i}}
     {(\mu_{(i)}^{\ell,\star})^{2\alpha}}
\le
\mathcal R_{\alpha,K}(X).
\]
Thus, for every \(i\le r_\ell\),
\[
\mu_{(i)}^{\ell,\star}
\ge
\left(
\frac{\sigma_{\ell,i}}
     {\mathcal R_{\alpha,K}(X)}
\right)^q
.
\]
Therefore
\[
K
=
\sum_{\ell,i}\left(
\frac{\sigma_{\ell,i}}
     {\mathcal R_{\alpha,K}(X)}
\right)^q
\le
\sum_{\ell=1}^L\sum_{i=1}^{r_\ell}
\mu_{(i)}^{\ell,\star}
\le
\sum_{\ell=1}^L\sum_{j=1}^{m_\ell}
\mu_j^{\ell,\star}
=
K.
\]
Hence all the terms are equal and \(\mu_{(j)}^{\ell,\star}=0\) for every \(j>r_\ell\). Since
\(\rho_{\mu^\star}(W^\star)<\infty\), every channel with zero
allocation has both its corresponding row of \(W_1^{\ell,\star}\) and
column of \(W_2^{\ell,\star}\) equal to zero (convention of
Lemma~\ref{lem:single-allocation}). Therefore
\[
a_\ell(W^\star)\le r_\ell.
\]

On the other hand,
$
X_\ell
=
\sum_{j:\,\text{channel }j\text{ active}}
W_{2,:,j}^{\ell,\star}W_{1,j:}^{\ell,\star}
$
is a sum of at most \(a_\ell(W^\star)\) rank-one matrices. Hence
$
r_\ell
=
\operatorname{rank}(X_\ell)
\le
a_\ell(W^\star).
$
Thus
\[
a_\ell(W^\star)=r_\ell
\qquad
\text{for every }\ell,
\]
which completes the proof.
\end{proof}

\subsection{BMuon with Weight-Decay}
\label{app:weight-decay}

We show in this section that, when BMuon is combined with decoupled weight
decay, its channel allocations have a direct effect on the learned weights.
Consider the BMuon oracle with fixed budget \(K\) and decoupled weight decay,
\[
W_{i,t+1}^\ell
=
(1-\eta_t\lambda)W_{i,t}^\ell
-\eta_t\Delta_{i,t}^{\ell,\star},
\]
where \(\lambda>0\), \(\eta_t\ge0\), and \(0\le\lambda\eta_t<1\).
At each step \(t\), let \(\mu_t^\star\in\mathcal K_K\) denote a solution of
Eq.~(\ref{eq:bmuon-allocation}). Then, by Eq.~(\ref{eq:bmuon-update}),
\[
\Delta_{1,t}^{\ell,\star}
=
D_{\mu_t^{\ell,\star}}^\alpha
\Polar\!\left(
D_{\mu_t^{\ell,\star}}^\alpha M_{1,t}^\ell
\right),
\qquad
\Delta_{2,t}^{\ell,\star}
=
\Polar\!\left(
M_{2,t}^\ell D_{\mu_t^{\ell,\star}}^\alpha
\right)
D_{\mu_t^{\ell,\star}}^\alpha.
\]

For a channel \(c=(\ell,j)\), write
\(\mu_{c,t}:=\mu_{j,t}^{\ell,\star}\) and
\(r_{c,t}:=
(\|W_{1,t,j:}^\ell\|_2^2+\|W_{2,t,:,j}^\ell\|_2^2)^{1/2}\).
For \(0\le s\le T\), define
\(D_{s,T}:=\prod_{u=s}^{T-1}(1-\lambda\eta_u)\), with \(D_{T,T}:=1\).

\begin{theorem}[Allocation controls channel norms]
\label{thm:weight-decay-app}
Let \(0<\alpha\le1\). For every channel \(c\) and every \(T\ge1\),
\begin{equation}
\label{eq:weight-decay-channel-bound}
r_{c,T}
\le
D_{0,T}r_{c,0}
+
\sqrt2
\sum_{t=0}^{T-1}
\eta_tD_{t+1,T}\mu_{c,t}^{\alpha}.
\end{equation}
Moreover, writing \(r_T:=(r_{c,T})_{c=1}^N\),
\begin{equation}
\label{eq:weight-decay-global-bound}
\|r_T\|_{1/\alpha}
\le
D_{0,T}\|r_0\|_{1/\alpha}
+
\frac{\sqrt2K^\alpha}{\lambda}(1-D_{0,T}).
\end{equation}
If \(\sum_{t=0}^\infty\eta_t=\infty\), then
\begin{equation}
\label{eq:weight-decay-global-asymptotic}
\limsup_{T\to\infty}\|r_T\|_{1/\alpha}
\le
\frac{\sqrt2K^\alpha}{\lambda}.
\end{equation}
\end{theorem}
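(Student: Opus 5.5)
The plan is to unroll a per-channel recursion for $r_{c,t}$, then aggregate across channels with Minkowski's inequality in $\ell^{1/\alpha}$, and close with a Hölder bound using the budget constraint.

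\textbf{Step 1: one-step channel bound.} Fix $c=(\ell,j)$. The step update for the coupled pair (row $j$ of $W_1^\ell$, column $j$ of $W_2^\ell$) reads $w_{c,t+1}=(1-\eta_t\lambda)w_{c,t}-\eta_t\delta_{c,t}$. Here $w_{c,t}$ stacks the two slices and $\delta_{c,t}$ stacks row $j$ of $\Delta_{1,t}^{\ell,\star}$ and column $j$ of $\Delta_{2,t}^{\ell,\star}$. Row $j$ of $\Delta_1^{\ell,\star}=D^\alpha P_1$ equals $\mu_{c,t}^\alpha$ times row $j$ of a matrix $P_1$ with $\|P_1\|_{\mathrm{op}}\le1$, and every row of such a matrix has Euclidean norm at most $1$. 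The same holds for columns of $P_2D^\alpha$. Hence $\|\delta_{c,t}\|_2\le\sqrt{\mu_{c,t}^{2\alpha}+\mu_{c,t}^{2\alpha}}=\sqrt2\,\mu_{c,t}^\alpha$. Since $1-\eta_t\lambda\ge0$, the triangle inequality gives
\[
r_{c,t+1}\le(1-\lambda\eta_t)r_{c,t}+\sqrt2\,\eta_t\mu_{c,t}^\alpha .
\]
Induction on $T$, using $D_{t+1,T}(1-\lambda\eta_t)=D_{t,T}$, then yields Eq.~(\ref{eq:weight-decay-channel-bound}).

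\textbf{Step 2: global bound.} Apply the $\ell^{1/\alpha}$ norm over channels to Eq.~(\ref{eq:weight-decay-channel-bound}), viewed as a vector inequality between nonnegative vectors; monotonicity and Minkowski's inequality justify this, with $1/\alpha\ge1$ because $\alpha\le1$. This gives $\|r_T\|_{1/\alpha}\le D_{0,T}\|r_0\|_{1/\alpha}+\sqrt2\sum_t\eta_tD_{t+1,T}\|\mu_t^\alpha\|_{1/\alpha}$. The budget enters through $\|\mu_t^\alpha\|_{1/\alpha}=(\sum_c\mu_{c,t})^\alpha=K^\alpha$. The remaining sum telescopes: since $\lambda\eta_tD_{t+1,T}=D_{t+1,T}-D_{t,T}$, we get $\sum_{t=0}^{T-1}\eta_tD_{t+1,T}=(1-D_{0,T})/\lambda$. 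This gives Eq.~(\ref{eq:weight-decay-global-bound}).

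\textbf{Step 3: asymptotics.} Using $1-x\le e^{-x}$, we have $D_{0,T}\le\exp(-\lambda\sum_{t<T}\eta_t)\to0$ when $\sum\eta_t=\infty$. Taking $\limsup$ then gives Eq.~(\ref{eq:weight-decay-global-asymptotic}).

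The only nonroutine point is Step 1's per-channel norm bound: one must note that a row or column of a contraction has norm at most one, and that the coupled channel vector combines both pieces, giving the $\sqrt2$. Everything else is a discrete Grönwall unrolling and a telescoping identity.
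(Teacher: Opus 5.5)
Your proposal is correct and follows essentially the same route as the paper: both bound the row and column slices by $\mu_{c,t}^\alpha$ to get $r_{c,t+1}\le(1-\lambda\eta_t)r_{c,t}+\sqrt2\,\eta_t\mu_{c,t}^\alpha$, then use the identity $\|\mu_t^\alpha\|_{1/\alpha}=K^\alpha$, the telescoping sum $\lambda\sum_t\eta_tD_{t+1,T}=1-D_{0,T}$, and the bound $D_{0,T}\le\exp(-\lambda\sum_t\eta_t)$. The only cosmetic difference is that you apply the $\ell^{1/\alpha}$ norm (via Minkowski) to the unrolled bound, whereas the paper applies it to the one-step recursion and then iterates; the two are equivalent.
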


\begin{proof}
For \(c=(\ell,j)\), let
\(P_{1,t}^\ell
=\Polar(D_{\mu_t^{\ell,\star}}^\alpha M_{1,t}^\ell)\) and
\(P_{2,t}^\ell
=\Polar(M_{2,t}^\ell D_{\mu_t^{\ell,\star}}^\alpha)\).
Since
\(\|P_{1,t}^\ell\|_{\mathrm{op}}\le1\) and
\(\|P_{2,t}^\ell\|_{\mathrm{op}}\le1\),
\[
\|\Delta_{1,t,j:}^{\ell,\star}\|_2
\le\mu_{c,t}^\alpha,
\qquad
\|\Delta_{2,t,:,j}^{\ell,\star}\|_2
\le\mu_{c,t}^\alpha.
\]
Using the weight-decay update and the triangle inequality,
\begin{align}
r_{c,t+1}
&=
\Big(
\|(1-\lambda\eta_t)W_{1,t,j:}^\ell
-\eta_t\Delta_{1,t,j:}^{\ell,\star}\|_2^2
+
\|(1-\lambda\eta_t)W_{2,t,:,j}^\ell
-\eta_t\Delta_{2,t,:,j}^{\ell,\star}\|_2^2
\Big)^{1/2}
\nonumber\\
&\le
(1-\lambda\eta_t)
\Big(
\|W_{1,t,j:}^\ell\|_2^2
+
\|W_{2,t,:,j}^\ell\|_2^2
\Big)^{1/2}
+
\eta_t
\Big(
\|\Delta_{1,t,j:}^{\ell,\star}\|_2^2
+
\|\Delta_{2,t,:,j}^{\ell,\star}\|_2^2
\Big)^{1/2}
\nonumber\\
&\le
(1-\lambda\eta_t)r_{c,t}
+
\sqrt{2}\,\eta_t\mu_{c,t}^\alpha.
\label{eq:weight-decay-channel-recursion}
\end{align}
Iterating Eq.~(\ref{eq:weight-decay-channel-recursion}) gives
Eq.~(\ref{eq:weight-decay-channel-bound}).

Set \(p:=1/\alpha\). Applying the \(\ell_p\) norm to the componentwise
inequality in Eq.~(\ref{eq:weight-decay-channel-recursion}) and using the
triangle inequality,
\[
\|r_{t+1}\|_p
\le
(1-\lambda\eta_t)\|r_t\|_p
+
\sqrt2\,\eta_t\|\mu_t^\alpha\|_p.
\]
Since \(p=1/\alpha\) and \(\sum_c\mu_{c,t}=K\),
\[
\|\mu_t^\alpha\|_{1/\alpha}
=
\left(\sum_c\mu_{c,t}\right)^\alpha
=
K^\alpha.
\]

Iterating this inequality and using
\begin{equation}
\label{eq:weight-decay-telescope}
\lambda
\sum_{t=0}^{T-1}\eta_tD_{t+1,T}
=
1-D_{0,T}
\end{equation}
gives Eq.~(\ref{eq:weight-decay-global-bound}).

Finally,
\(D_{0,T}\le
\exp(-\lambda\sum_{t=0}^{T-1}\eta_t)\).
Hence \(\sum_{t=0}^\infty\eta_t=\infty\) implies \(D_{0,T}\to0\), and
Eq.~(\ref{eq:weight-decay-global-asymptotic}) follows.
\end{proof}

The channelwise bound also shows that sustained under-allocation suppresses
the corresponding channel.

\begin{corollary}[Persistent under-allocation]
\label{cor:weight-decay-suppression}
Assume \(\sum_{t=0}^\infty\eta_t=\infty\). If, for some channel \(c\),
there exist \(t_0\ge0\) and \(\bar\mu_c\in[0,1]\) such that
\(\mu_{c,t}\le\bar\mu_c\) for every \(t\ge t_0\), then
\begin{equation}
\label{eq:weight-decay-underallocation}
\limsup_{T\to\infty}r_{c,T}
\le
\frac{\sqrt2}{\lambda}\bar\mu_c^\alpha.
\end{equation}
In particular, if \(\mu_{c,t}\to0\), then \(r_{c,t}\to0\).
\end{corollary}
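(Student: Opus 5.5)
The plan is to read off the corollary from the channelwise bound Eq.~(\ref{eq:weight-decay-channel-bound}) of Theorem~\ref{thm:weight-decay-app}, restarted at time \(t_0\). Iterating the recursion Eq.~(\ref{eq:weight-decay-channel-recursion}) from \(t_0\) rather than from \(0\) gives, for \(T>t_0\),
\[
r_{c,T}
\le
D_{t_0,T}\,r_{c,t_0}
+
\sqrt2\sum_{t=t_0}^{T-1}\eta_tD_{t+1,T}\,\mu_{c,t}^\alpha .
\]
We then use \(\mu_{c,t}\le\bar\mu_c\) for \(t\ge t_0\), together with the monotonicity of \(s\mapsto s^\alpha\) on \([0,1]\), to bound the sum by \(\sqrt2\,\bar\mu_c^\alpha\sum_{t=t_0}^{T-1}\eta_tD_{t+1,T}\).

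Next we evaluate the remaining weighted sum with the telescoping identity Eq.~(\ref{eq:weight-decay-telescope}) started at \(t_0\): \(\lambda\sum_{t=t_0}^{T-1}\eta_tD_{t+1,T}=1-D_{t_0,T}\). This identity follows from \(D_{t,T}-D_{t+1,T}=-\lambda\eta_tD_{t+1,T}\). Combining the two bounds gives
\[
r_{c,T}\le D_{t_0,T}\,r_{c,t_0}+\frac{\sqrt2}{\lambda}\bar\mu_c^\alpha\,(1-D_{t_0,T}).
\]

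The only step needing care is showing \(D_{t_0,T}\to0\). We use \(1-x\le e^{-x}\), which gives \(D_{t_0,T}\le\exp(-\lambda\sum_{t=t_0}^{T-1}\eta_t)\). Since \(\sum_t\eta_t=\infty\), the tail sum from \(t_0\) also diverges. Because \(r_{c,t_0}\) is a fixed finite number, taking \(\limsup_{T\to\infty}\) yields Eq.~(\ref{eq:weight-decay-underallocation}).

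For the final claim, suppose \(\mu_{c,t}\to0\). For any \(\delta\in(0,1]\) there is a \(t_0\) with \(\mu_{c,t}\le\delta\) for all \(t\ge t_0\). The first part then gives \(\limsup_T r_{c,T}\le\sqrt2\,\delta^\alpha/\lambda\). Letting \(\delta\to0\), and using \(\alpha>0\) and \(r_{c,T}\ge0\), gives \(r_{c,t}\to0\).
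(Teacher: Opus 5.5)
Your proposal is correct and follows the paper's proof essentially step for step. Like the paper, you restart the channelwise bound at \(t_0\), replace \(\mu_{c,t}^\alpha\) by \(\bar\mu_c^\alpha\), apply the telescoping identity from \(t_0\), show \(D_{t_0,T}\to0\) via the exponential bound, and finish with the same \(\varepsilon\)-argument for \(\mu_{c,t}\to0\).
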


\begin{proof}
Applying Eq.~(\ref{eq:weight-decay-channel-bound}) from \(t_0\) gives
\[
r_{c,T}
\le
D_{t_0,T}r_{c,t_0}
+
\sqrt2
\sum_{t=t_0}^{T-1}
\eta_tD_{t+1,T}\mu_{c,t}^{\alpha}.
\]
Since \(\mu_{c,t}\le\bar\mu_c\) for every \(t\ge t_0\),
\[
r_{c,T}
\le
D_{t_0,T}r_{c,t_0}
+
\sqrt2\,\bar\mu_c^\alpha
\sum_{t=t_0}^{T-1}
\eta_tD_{t+1,T}.
\]
Using Eq.~(\ref{eq:weight-decay-telescope}) from \(t_0\),
\[
r_{c,T}
\le
D_{t_0,T}r_{c,t_0}
+
\frac{\sqrt2\,\bar\mu_c^\alpha}{\lambda}
(1-D_{t_0,T}).
\]

Since
\(D_{t_0,T}
\le
\exp(-\lambda\sum_{t=t_0}^{T-1}\eta_t)\to0\), we get Eq.~(\ref{eq:weight-decay-underallocation}).

If \(\mu_{c,t}\to0\), then for every \(\varepsilon>0\) there exists
\(t_0\) such that \(\mu_{c,t}^\alpha\le\varepsilon\) for every
\(t\ge t_0\). Hence
\(\limsup_{T\to\infty}r_{c,T}\le\sqrt2\,\varepsilon/\lambda\).
Since this holds for every $\varepsilon > 0$ and $r_{c,T} \ge0$, we get \(r_{c,T}\to0\).
\end{proof}

Thus, weight decay makes BMuon's allocation persistent in parameter space: channel norms are controlled by their discounted allocation histories, implying that channels receiving vanishing allocations are driven toward zero.

\newpage

\section{MM Allocation Update}
\label{app:mm}

This appendix provides the derivations and proofs underlying the MM allocation
update of Section~\ref{sec:practical}. Appendix~\ref{app:mm-solver} proves the tight
minorization in Eq.~(\ref{eq:mm-surrogate}) and establishes the sign properties of
the allocation scores $a_j^\ell$. Appendix~\ref{app:mm-kkt} solves the
surrogate maximization problem and derives the closed-form update in
Eq.~(\ref{eq:mm-closed-form}), including the determination of $\lambda_\mu$ by
bisection. Appendix~\ref{app:mm-convergence} proves monotonicity of repeated
MM updates and, for $0<\alpha\le\tfrac12$, convergence in objective value to
the global optimum over $\mathcal K_{K,\varepsilon}$ together with global
optimality of every accumulation point. Appendix~\ref{app:mm-floor} bounds
the effect of replacing the original feasible set $\mathcal K_K$ by
$\mathcal K_{K,\varepsilon}$. Appendix~\ref{app:oracle_approximation}
quantifies the two approximations used in practice: performing a single MM
update at each allocation refresh and keeping the allocation fixed between
refreshes, and compares the practical update with a certified allocation
solve end-to-end. Finally, Appendix~\ref{app:overhead} characterizes the
computational overhead of the practical BMuon update and reports training-time
measurements for ImageNet-1K and 1B-parameter LLaMA.

Throughout this appendix, the momenta
$M=\{M_1^\ell,M_2^\ell\}_{\ell=1}^L$ are fixed and all polar factors are
exact. There are $N$ globally budgeted channels, indexed by $(\ell,j)$, and
\[
\mathcal K_K
:=
\left\{
\mu\in[0,1]^N:
\sum_{\ell,j}\mu_j^\ell=K
\right\},
\qquad
\mathcal K_{K,\varepsilon}
:=
\left\{
\mu\in[\varepsilon,1]^N:
\sum_{\ell,j}\mu_j^\ell=K
\right\},
\]
where $0<K<N$ and $0<\varepsilon<K/N$. For fixed $M$, the oracle allocation
objective of Eq.~(\ref{eq:bmuon-allocation}) is
\[
\Phi_M(\mu)
=
\sum_{\ell=1}^L
\left[
\|D_{\mu^\ell}^{\alpha}M_1^\ell\|_*
+
\|M_2^\ell D_{\mu^\ell}^{\alpha}\|_*
\right],
\qquad 0<\alpha<1.
\]
Given $\bar\mu\in\mathcal K_{K,\varepsilon}$, the MM surrogate is
\[
Q(\mu\mid\bar\mu)
=
\sum_{\ell,j}
a_j^\ell(\bar\mu)(\mu_j^\ell)^\alpha,
\]
where
\[
a_j^\ell(\bar\mu)
=
\langle M_{1,j:}^\ell,\bar P_{1,j:}^\ell\rangle
+
\langle M_{2,:,j}^\ell,\bar P_{2,:,j}^\ell\rangle,
\]
with
$\bar P_1^\ell=\Polar(D_{\bar\mu^\ell}^\alpha M_1^\ell)$ and
$\bar P_2^\ell=\Polar(M_2^\ell D_{\bar\mu^\ell}^\alpha)$.
One MM step maximizes $Q(\cdot\mid\bar\mu)$ over
$\mathcal K_{K,\varepsilon}$.

\subsection{Minorization and allocation scores}
\label{app:mm-solver}

We first prove the tight minorization in Eq.~(\ref{eq:mm-surrogate}). Fix an
allocation $\bar\mu\in\mathcal K_{K,\varepsilon}$ and, for each block, let
\[
\bar P_1^\ell
=
\Polar(D_{\bar\mu^\ell}^\alpha M_1^\ell),
\qquad
\bar P_2^\ell
=
\Polar(M_2^\ell D_{\bar\mu^\ell}^\alpha),
\]
as in Section~\ref{sec:practical}. By spectral--nuclear duality,
$\|A\|_*=\max_{\|P\|_{\mathrm{op}}\le1}\langle A,P\rangle$.

We will use the following equality characterization both here and in the convergence analysis.

\begin{lemma}
\label{lem:polar-equality}
Let $\|P\|_{\mathrm{op}}\le1$. If
$\langle A,P\rangle=\|A\|_*$, then
\[
AP^\top=(AA^\top)^{1/2},
\qquad
P^\top A=(A^\top A)^{1/2}.
\]
\end{lemma}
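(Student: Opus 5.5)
Write the SVD $A=U\Sigma V^\top$ (full or compact). The plan is to show that equality in the duality $\langle A,P\rangle\le\|A\|_*$ forces $P$ to act as $UV^\top$ on the range of $A$. From this, $AP^\top=U\Sigma V^\top P^\top$ and $P^\top A=P^\top U\Sigma V^\top$ reduce to $U\Sigma U^\top=(AA^\top)^{1/2}$ and $V\Sigma V^\top=(A^\top A)^{1/2}$.

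The first step expands the inner product in the singular basis:
\[
\langle A,P\rangle=\operatorname{tr}(A^\top P)=\sum_{i=1}^r\sigma_i\,u_i^\top P v_i ,
\]
where $u_i,v_i$ are the singular vectors for the nonzero singular values $\sigma_i>0$. Since $\|P\|_{\mathrm{op}}\le1$, each term satisfies $u_i^\top Pv_i\le\|u_i\|_2\|Pv_i\|_2\le1$. Because every $\sigma_i>0$, equality with $\|A\|_*=\sum_i\sigma_i$ forces $u_i^\top Pv_i=1$ for every $i\le r$.

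The second step turns this scalar equality into vector identities. Cauchy--Schwarz equality together with $\|Pv_i\|_2\le1$ gives $\|Pv_i\|_2=1$ and $Pv_i=u_i$. Applying the same argument to $P^\top$ (which also has operator norm at most $1$, and $v_i^\top P^\top u_i=1$) gives $P^\top u_i=v_i$. Hence $PV_r=U_r$ and $P^\top U_r=V_r$, where $U_r,V_r$ collect the singular vectors of the nonzero singular values.

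The final step substitutes these identities:
\[
AP^\top=U_r\Sigma_r(PV_r)^\top=U_r\Sigma_rU_r^\top=(AA^\top)^{1/2},\qquad
P^\top A=(P^\top U_r)\Sigma_rV_r^\top=V_r\Sigma_rV_r^\top=(A^\top A)^{1/2}.
\]
The case $A=0$ is trivial. The only delicate point is the second step, where the operator-norm bound must be used in both directions to get $Pv_i=u_i$ and $P^\top u_i=v_i$. Nothing is required of $P$ on the orthogonal complements, which is consistent with non-uniqueness of the maximizer when $A$ is rank deficient.
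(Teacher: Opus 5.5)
Your proof is correct and follows essentially the same route as the paper: expand $\langle A,P\rangle=\sum_r\sigma_r u_r^\top P v_r$ in a compact SVD, use $\sigma_r>0$ to force $u_r^\top P v_r=1$, upgrade this via Cauchy--Schwarz to $PV=U$ and $P^\top U=V$, and substitute to get $U\Sigma U^\top$ and $V\Sigma V^\top$. The only differences are cosmetic: you state the trivial case $A=0$ and the freedom of $P$ outside the singular subspaces explicitly, which the paper leaves implicit.
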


\begin{proof}
Let $A=U\Sigma V^\top$ be a compact SVD. Then
\[
\langle A,P\rangle
=
\sum_r \sigma_r u_r^\top P v_r,
\qquad
|u_r^\top P v_r|\le1.
\]
Since $\sigma_r>0$, equality
$\langle A,P\rangle=\sum_r\sigma_r$ implies
$u_r^\top P v_r=1$ for every $r$. Moreover,
$\|Pv_r\|_2\le1$, so equality in Cauchy--Schwarz gives
$Pv_r=u_r$. Similarly, $\|P^\top u_r\|_2\le1$ gives
$P^\top u_r=v_r$. Hence $PV=U$ and $P^\top U=V$, and therefore
\[
AP^\top=U\Sigma U^\top=(AA^\top)^{1/2},
\qquad
P^\top A=V\Sigma V^\top=(A^\top A)^{1/2}.
\]
\end{proof}

Since $\|\bar P_1^\ell\|_{\mathrm{op}}\le1$ and
$\|\bar P_2^\ell\|_{\mathrm{op}}\le1$, for any allocation $\mu$,
\begin{align}
\Phi_M(\mu)
&=
\sum_{\ell=1}^L
\left[
\|D_{\mu^\ell}^\alpha M_1^\ell\|_*
+
\|M_2^\ell D_{\mu^\ell}^\alpha\|_*
\right]
\notag\\
&\ge
\sum_{\ell=1}^L
\left[
\left\langle
D_{\mu^\ell}^\alpha M_1^\ell,\bar P_1^\ell
\right\rangle
+
\left\langle
M_2^\ell D_{\mu^\ell}^\alpha,\bar P_2^\ell
\right\rangle
\right]
\notag\\
&=
\sum_{\ell,j}
\left(
\langle M_{1,j:}^\ell,\bar P_{1,j:}^\ell\rangle
+
\langle M_{2,:,j}^\ell,\bar P_{2,:,j}^\ell\rangle
\right)
(\mu_j^\ell)^\alpha
=
Q(\mu\mid\bar\mu).
\label{eq:mm-minorization-proof}
\end{align}
At $\mu=\bar\mu$, using
$\langle A,\Polar(A)\rangle=\|A\|_*$,
$
Q(\bar\mu\mid\bar\mu)
=
\sum_{\ell=1}^L
\left[
\|D_{\bar\mu^\ell}^\alpha M_1^\ell\|_*
+
\|M_2^\ell D_{\bar\mu^\ell}^\alpha\|_*
\right]
=
\Phi_M(\bar\mu).
$
Hence the minorization is tight, proving Eq.~(\ref{eq:mm-surrogate}).

We now establish the sign properties of the scores.

Fix $(\ell,j)$ and consider first the row-side term. Write
$A=D_{\bar\mu^\ell}^\alpha M_1^\ell$,
$\bar P_1^\ell=\Polar(A)$, and
$S=(AA^\top)^{1/2}$. Since $\bar P_1^\ell$ attains the
spectral--nuclear dual maximum, Lemma~\ref{lem:polar-equality} gives
$A(\bar P_1^\ell)^\top=S\succeq0$,
\[
S_{jj}
=
(\bar\mu_j^\ell)^\alpha
\langle M_{1,j:}^\ell,\bar P_{1,j:}^\ell\rangle
\ge0.
\]
Since $\bar\mu\in\mathcal K_{K,\varepsilon}$, we have
$\bar\mu_j^\ell\ge\varepsilon>0$. Applying the same argument to the column-side term
gives
\begin{equation}
\begin{aligned}
\langle M_{1,j:}^\ell,\bar P_{1,j:}^\ell\rangle
&=
\frac{
\left[
\left(
D_{\bar\mu^\ell}^\alpha
M_1^\ell(M_1^\ell)^\top
D_{\bar\mu^\ell}^\alpha
\right)^{1/2}
\right]_{jj}
}{
(\bar\mu_j^\ell)^\alpha
},
\\
\langle M_{2,:,j}^\ell,\bar P_{2,:,j}^\ell\rangle
&=
\frac{
\left[
\left(
D_{\bar\mu^\ell}^\alpha
(M_2^\ell)^\top M_2^\ell
D_{\bar\mu^\ell}^\alpha
\right)^{1/2}
\right]_{jj}
}{
(\bar\mu_j^\ell)^\alpha
}.
\end{aligned}
\label{eq:mm-score-identities}
\end{equation}
Both numerators in Eq.~(\ref{eq:mm-score-identities}) are diagonal entries of
positive semidefinite matrices. Hence both score contributions are
nonnegative, and therefore
\[
a_j^\ell
=
\langle M_{1,j:}^\ell,\bar P_{1,j:}^\ell\rangle
+
\langle M_{2,:,j}^\ell,\bar P_{2,:,j}^\ell\rangle
\ge0.
\]

We next establish strict positivity when all channels are nontrivial.

If $M_{1,j:}^\ell\neq0$, then
\[
(S^2)_{jj}
=
(AA^\top)_{jj}
=
(\bar\mu_j^\ell)^{2\alpha}
\|M_{1,j:}^\ell\|_2^2
>0.
\]
Since $S$ is symmetric positive semidefinite,
$S^2\preceq\|S\|_{\mathrm{op}}S$. Hence
$
(S^2)_{jj}
\le
\|S\|_{\mathrm{op}}S_{jj},
$
so $(S^2)_{jj}>0$ implies $S_{jj}>0$. Since
$\bar\mu_j^\ell>0$, we obtain
$\langle M_{1,j:}^\ell,\bar P_{1,j:}^\ell\rangle>0$.

The same argument applied to the column-side term shows that
$M_{2,:,j}^\ell\neq0$ implies
$\langle M_{2,:,j}^\ell,\bar P_{2,:,j}^\ell\rangle>0$.
Therefore, if
$M_{1,j:}^\ell\neq0$ or $M_{2,:,j}^\ell\neq0$, then
$a_j^\ell>0$. In particular, under the nontrivial-channel condition
(i.e., $M_{1,j:}^\ell\neq0$ or $M_{2,:,j}^\ell\neq0$ for every $(\ell,j)$), all scores are strictly positive for every
$\bar\mu\in\mathcal K_{K,\varepsilon}$.

\subsection{Closed-form surrogate maximization}
\label{app:mm-kkt}

We now derive the closed-form update in Eq.~(\ref{eq:mm-closed-form}). Under the
nontrivial-channel condition, Appendix~\ref{app:mm-solver} gives
$a_j^\ell>0$ for every $(\ell,j)$. Since $0<\alpha<1$, each function
$\mu_j^\ell\mapsto a_j^\ell(\mu_j^\ell)^\alpha$ is strictly concave on
$[\varepsilon,1]$. Hence $Q(\mu\mid\bar\mu)$ is strictly concave, and its
maximization over the convex set $\mathcal K_{K,\varepsilon}$ has a unique
maximizer, which we denote by $\mu^+$.

Since $0<\varepsilon<K/N<1$, the uniform allocation
$\mu_j^\ell=K/N$ satisfies the budget constraint and both bound constraints strictly. Hence Slater's condition holds, and the KKT conditions are necessary and sufficient. Introduce a multiplier $\lambda_\mu$ for the budget constraint
and multipliers $\rho_j^\ell,\sigma_j^\ell\ge0$ for the lower and upper bounds,
respectively. The stationarity and complementary-slackness conditions are
\begin{align}
\alpha a_j^\ell(\mu_j^\ell)^{\alpha-1}
-\lambda_\mu+\rho_j^\ell-\sigma_j^\ell &=0,
\label{eq:mm-kkt-stationarity}\\
\rho_j^\ell(\mu_j^\ell-\varepsilon)&=0,
\qquad
\sigma_j^\ell(1-\mu_j^\ell)=0.
\label{eq:mm-kkt-complementarity}
\end{align}

Because $K<N$, at least one coordinate of any feasible allocation is strictly
smaller than $1$. For such a coordinate $\sigma_j^\ell=0$, and
Eq.~(\ref{eq:mm-kkt-stationarity}) implies $\lambda_\mu>0$.

For a fixed $(\ell,j)$, define
\[
\widehat\mu_j^\ell
=
\left(
\frac{\alpha a_j^\ell}{\lambda_\mu}
\right)^{1/(1-\alpha)}.
\]
If $\varepsilon<\mu_j^\ell<1$, both bound multipliers vanish and
$\mu_j^\ell=\widehat\mu_j^\ell$. If $\mu_j^\ell=\varepsilon$, then
$\sigma_j^\ell=0$ and stationarity gives
$\lambda_\mu\ge\alpha a_j^\ell\varepsilon^{\alpha-1}$, equivalently
$\widehat\mu_j^\ell\le\varepsilon$. Similarly, if $\mu_j^\ell=1$, then
$\rho_j^\ell=0$ and stationarity gives
$\lambda_\mu\le\alpha a_j^\ell$, equivalently
$\widehat\mu_j^\ell\ge1$. Hence the unique maximizer is
\[
(\mu_j^\ell)^+
=
\operatorname{clip}_{[\varepsilon,1]}
\left[
\left(
\frac{\alpha a_j^\ell}{\lambda_\mu}
\right)^{1/(1-\alpha)}
\right],
\]
which proves Eq.~(\ref{eq:mm-closed-form}).

It remains to determine $\lambda_\mu$. Define
\[
H(\lambda)
:=
\sum_{\ell,j}
\operatorname{clip}_{[\varepsilon,1]}
\left[
\left(
\frac{\alpha a_j^\ell}{\lambda}
\right)^{1/(1-\alpha)}
\right],
\qquad \lambda>0.
\]
Since every $a_j^\ell>0$, $H$ is continuous and nonincreasing, with
$H(\lambda)\to N$ as $\lambda\downarrow0$ and
$H(\lambda)\to N\varepsilon$ as $\lambda\uparrow\infty$.
Because $N\varepsilon<K<N$, there exists
$\lambda_\mu>0$ such that $H(\lambda_\mu)=K$, and such a value can be found by bisection. This completes the derivation of the MM allocation update.

\subsection{Convergence of repeated MM updates}
\label{app:mm-convergence}

We now prove the convergence statement of Section~\ref{sec:practical}.

\begin{theorem}[Convergence of repeated MM updates]
\label{thm:mm-convergence}
Assume the nontrivial-channel condition
(i.e., $M_{1,j:}^\ell\neq0$ or $M_{2,:,j}^\ell\neq0$ for every $(\ell,j)$),
and let $0<\alpha\le\tfrac12$. Starting from any
$\mu^{(0)}\in\mathcal K_{K,\varepsilon}$, define
\[
\mu^{(t+1)}
=
\arg\max_{\mu\in\mathcal K_{K,\varepsilon}}
Q(\mu\mid\mu^{(t)}).
\]
Then
\[
\Phi_M(\mu^{(t)})
\longrightarrow
\Phi_\varepsilon^\star
:=
\max_{\mu\in\mathcal K_{K,\varepsilon}}\Phi_M(\mu),
\]
and every accumulation point of $\{\mu^{(t)}\}_{t\ge0}$ is a global
maximizer of $\Phi_M$ over $\mathcal K_{K,\varepsilon}$.
\end{theorem}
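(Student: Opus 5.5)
The plan is the standard MM convergence argument: monotonicity, a fixed-point characterization of accumulation points, and then concavity of $\Phi_M$ for $\alpha\le\tfrac12$ (Theorem~\ref{thm:curvature}) to upgrade stationarity to global optimality.

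\textbf{Monotonicity and limits.} By the tight minorization of Appendix~\ref{app:mm-solver}, $\Phi_M(\mu^{(t+1)})\ge Q(\mu^{(t+1)}\mid\mu^{(t)})\ge Q(\mu^{(t)}\mid\mu^{(t)})=\Phi_M(\mu^{(t)})$. The sequence $\Phi_M(\mu^{(t)})$ is therefore nondecreasing and bounded by $\Phi_\varepsilon^\star$, so it converges to some $\Phi^\infty\le\Phi_\varepsilon^\star$. Since $\mathcal K_{K,\varepsilon}$ is compact, accumulation points exist. If every accumulation point $\mu^\infty$ is a global maximizer, then along a subsequence $\Phi_M(\mu^{(t_k)})\to\Phi_M(\mu^\infty)=\Phi_\varepsilon^\star$ by continuity, so $\Phi^\infty=\Phi_\varepsilon^\star$. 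Both claims thus reduce to global optimality of accumulation points.

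\textbf{Continuity of the MM map.} On $\mathcal K_{K,\varepsilon}$ all weights satisfy $\bar\mu_j^\ell\ge\varepsilon>0$. The formula~\eqref{eq:mm-score-identities} then expresses $a_j^\ell(\bar\mu)$ through matrix square roots of PSD matrices depending continuously on $\bar\mu$, divided by $(\bar\mu_j^\ell)^\alpha\ge\varepsilon^\alpha$. Hence $a(\bar\mu)$ is continuous, even though the polar factor itself may be discontinuous when the rank changes. Under the nontrivial-channel condition, $a>0$. The maximizer of the strictly concave surrogate is unique, and by Berge's maximum theorem (or directly from the clip formula, with $\lambda_\mu$ depending continuously on $a$ because $H$ is strictly decreasing near the root) the map $T(\bar\mu):=\arg\max Q(\cdot\mid\bar\mu)$ is continuous.

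\textbf{Accumulation points are fixed points.} Take $\mu^{(t_k)}\to\mu^\infty$ and pass to a further subsequence with $\mu^{(t_k+1)}\to\nu$. Continuity of $T$ gives $\nu=T(\mu^\infty)$. Also $\Phi_M(\nu)=\Phi^\infty=\Phi_M(\mu^\infty)$, which combined with the minorization chain yields $Q(\nu\mid\mu^\infty)=Q(\mu^\infty\mid\mu^\infty)$. Uniqueness of the surrogate maximizer then forces $\nu=\mu^\infty$, so $\mu^\infty=T(\mu^\infty)$.

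\textbf{Fixed points are global maximizers.} This is the main obstacle. I would show that at any $\bar\mu$ in $\mathcal K_{K,\varepsilon}$ (interior in the sense $\bar\mu>0$), the surrogate $Q(\cdot\mid\bar\mu)$ and $\Phi_M$ have the same directional derivatives, i.e.\ $\nabla_\mu Q(\bar\mu\mid\bar\mu)$ is a supergradient of the concave $\Phi_M$ at $\bar\mu$.

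The argument is the envelope/Danskin principle. $\Phi_M(\mu)=\max_{P}\sum\langle D^\alpha M,P\rangle$, and since $D^\alpha_{\mu}$ is smooth in $\mu$ near $\bar\mu>0$, the map $\mu\mapsto\langle D_\mu^\alpha M_1,P\rangle$ is smooth. Danskin's theorem gives the directional derivative $\Phi_M'(\bar\mu;h)=\max_{P\in\mathcal P^\star}\sum_j \alpha\bar\mu_j^{\alpha-1}h_j(\cdots)$ over optimal $P$. By Lemma~\ref{lem:polar-equality}, every optimal $P$ satisfies $AP^\top=(AA^\top)^{1/2}$, so the diagonal quantities $\langle M_{1,j:},P_{j:}\rangle=[(AA^\top)^{1/2}]_{jj}/\bar\mu_j^\alpha$ are identical across all optimal $P$. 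Hence the derivative is linear in $h$ and equals $\nabla Q(\bar\mu\mid\bar\mu)\cdot h$, so $\Phi_M$ is differentiable at $\bar\mu$ with this gradient.

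A fixed point satisfies the KKT conditions of $\max Q(\cdot\mid\bar\mu)$ at $\bar\mu$, i.e.\ $\nabla Q(\bar\mu\mid\bar\mu)^\top(\mu-\bar\mu)\le0$ for all $\mu\in\mathcal K_{K,\varepsilon}$. This is exactly the first-order optimality condition for $\Phi_M$. Concavity of $\Phi_M$ for $\alpha\le\tfrac12$ (Theorem~\ref{thm:curvature}) then gives $\Phi_M(\mu)\le\Phi_M(\bar\mu)+\nabla\Phi_M(\bar\mu)^\top(\mu-\bar\mu)\le\Phi_M(\bar\mu)$, so $\bar\mu$ is a global maximizer over $\mathcal K_{K,\varepsilon}$, which completes the argument.
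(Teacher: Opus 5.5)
Your proposal is correct and follows the paper's proof essentially step for step. The steps are the same: monotonicity from the tight minorization; continuity of the MM map $T$ from the score identities in Eq.~(\ref{eq:mm-score-identities}) together with Berge's theorem; accumulation points are fixed points by uniqueness of the surrogate maximizer; and Danskin's theorem with Lemma~\ref{lem:polar-equality} matches the directional derivatives of $\Phi_M$ and $Q(\cdot\mid\bar\mu)$ at $\bar\mu$, after which concavity for $\alpha\le\tfrac12$ gives global optimality. The differences are cosmetic: your further subsequence is unnecessary, since $\mu^{(t_k+1)}=T(\mu^{(t_k)})\to T(\mu^\infty)$ directly. Also, your parenthetical alternative via the clip formula should not rely on $H$ being strictly decreasing at the root, because $H$ can be flat there when every coordinate is clipped; Berge's theorem already covers this case.
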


\begin{proof}
By Appendix~\ref{app:mm-solver}, the surrogate is a tight minorizer:
$Q(\mu\mid\bar\mu)\le\Phi_M(\mu)$ for every $\mu$, with equality at
$\mu=\bar\mu$. Since $\mu^{(t+1)}$ maximizes
$Q(\cdot\mid\mu^{(t)})$,
\[
\Phi_M(\mu^{(t+1)})
\ge
Q(\mu^{(t+1)}\mid\mu^{(t)})
\ge
Q(\mu^{(t)}\mid\mu^{(t)})
=
\Phi_M(\mu^{(t)}).
\]
Hence $(\Phi_M(\mu^{(t)}))_t$ is nondecreasing. Since
$\mathcal K_{K,\varepsilon}$ is compact and $\Phi_M$ is continuous,
$\Phi_M(\mu^{(t)})$ converges.

Define the MM map
$
T(\bar\mu)
:=
\arg\max_{\mu\in\mathcal K_{K,\varepsilon}}
Q(\mu\mid\bar\mu).
$
By Appendix~\ref{app:mm-kkt}, under the nontrivial-channel condition the
surrogate has a unique maximizer. Thus $T$ is well defined.

We next show that $T$ is continuous. By Eq.~(\ref{eq:mm-score-identities}),
\[
a_j^\ell(\bar\mu)
=
\frac{
\left[
\left(
D_{\bar\mu^\ell}^\alpha
M_1^\ell(M_1^\ell)^\top
D_{\bar\mu^\ell}^\alpha
\right)^{1/2}
\right]_{jj}
}{
(\bar\mu_j^\ell)^\alpha
}
+
\frac{
\left[
\left(
D_{\bar\mu^\ell}^\alpha
(M_2^\ell)^\top M_2^\ell
D_{\bar\mu^\ell}^\alpha
\right)^{1/2}
\right]_{jj}
}{
(\bar\mu_j^\ell)^\alpha
}.
\]
Since $\bar\mu\in\mathcal K_{K,\varepsilon}$, we have
$\bar\mu_j^\ell\ge\varepsilon>0$. Moreover,
$\bar\mu\mapsto D_{\bar\mu^\ell}^\alpha$ is continuous, and the matrix square
root is continuous on the set of positive semidefinite matrices. Therefore
$\bar\mu\mapsto a_j^\ell(\bar\mu)$ is continuous for every $(\ell,j)$.
Hence
$(\mu,\bar\mu)\mapsto Q(\mu\mid\bar\mu)
=\sum_{\ell,j}a_j^\ell(\bar\mu)(\mu_j^\ell)^\alpha$
is continuous on
$\mathcal K_{K,\varepsilon}\times\mathcal K_{K,\varepsilon}$.
Since $\mathcal K_{K,\varepsilon}$ is compact and
$Q(\cdot\mid\bar\mu)$ has a unique maximizer, Berge's maximum theorem,
together with uniqueness of the maximizer, implies that $T$ is continuous.

Since $\mathcal K_{K,\varepsilon}$ is compact, the sequence
$\{\mu^{(t)}\}$ has at least one accumulation point. Let $\mu^\star$ be any
accumulation point. There exists a
subsequence $\mu^{(t_k)}\to\mu^\star$. Since
$\Phi_M(\mu^{(t)})$ converges,
\[
\Phi_M(\mu^{(t_k+1)})
-
\Phi_M(\mu^{(t_k)})
\longrightarrow0.
\]
Using $\mu^{(t_k+1)}=T(\mu^{(t_k)})$ and continuity of $T$ and $\Phi_M$ gives
$
\Phi_M(T(\mu^\star))
=
\Phi_M(\mu^\star).
$

Suppose that $T(\mu^\star)\neq\mu^\star$. Since $T(\mu^\star)$ is the unique
maximizer of $Q(\cdot\mid\mu^\star)$,
$
Q(T(\mu^\star)\mid\mu^\star)
>
Q(\mu^\star\mid\mu^\star).
$
By minorization and tightness,
\[
\Phi_M(T(\mu^\star))
\ge
Q(T(\mu^\star)\mid\mu^\star)
>
Q(\mu^\star\mid\mu^\star)
=
\Phi_M(\mu^\star),
\]
which is a contradiction. Therefore
$T(\mu^\star)=\mu^\star$.

It remains to show that every fixed point is globally optimal. We first show
first-order tangency in directional derivatives.

For a function $F$, write
$F'(x;d):=\lim_{s\downarrow0}[F(x+sd)-F(x)]/s$
for its one-sided directional derivative.

By spectral--nuclear duality,
$\|D_{\mu^\ell}^\alpha M_1^\ell\|_*
=
\max_{\|P\|_{\mathrm{op}}\le1}
\langle D_{\mu^\ell}^\alpha M_1^\ell,P\rangle$.
For fixed $P$, the directional derivative at $\bar\mu$ in direction $d$ is
\[
\sum_j
\alpha(\bar\mu_j^\ell)^{\alpha-1}d_j^\ell
\langle M_{1,j:}^\ell,P_{j:}\rangle .
\]
Let $A=D_{\bar\mu^\ell}^\alpha M_1^\ell$. By
Lemma~\ref{lem:polar-equality}, every maximizing $P$ satisfies
$AP^\top=(AA^\top)^{1/2}$, so
$(\bar\mu_j^\ell)^\alpha
\langle M_{1,j:}^\ell,P_{j:}\rangle
=[(AA^\top)^{1/2}]_{jj}$.
Thus the directional derivative above is the same for every maximizing $P$.
Since the operator-norm unit ball is compact and the inner objective is
continuously differentiable in $\mu$ on the positive orthant, Danskin's
theorem, together with the identical argument for the column-side term, gives
\[
\Phi_M'(\bar\mu;d)
=
\sum_{\ell,j}
\alpha a_j^\ell(\bar\mu)
(\bar\mu_j^\ell)^{\alpha-1}d_j^\ell
=
Q'(\bar\mu\mid\bar\mu;d).
\]

Since $T(\mu^\star)=\mu^\star$, for every
$\mu\in\mathcal K_{K,\varepsilon}$,
\[
Q'(\mu^\star\mid\mu^\star;\mu-\mu^\star)\le0.
\]
By first-order tangency,
$\Phi_M'(\mu^\star;\mu-\mu^\star)\le0$.
For $0<\alpha\le\tfrac12$, $\Phi_M$ is concave by Theorem~\ref{thm:curvature}, and therefore
\[
\Phi_M(\mu)
\le
\Phi_M(\mu^\star)
+
\Phi_M'(\mu^\star;\mu-\mu^\star)
\le
\Phi_M(\mu^\star).
\]
Thus $\mu^\star$ is a global maximizer of $\Phi_M$ over
$\mathcal K_{K,\varepsilon}$.

Since $\mu^{(t_k)}\to\mu^\star$ and $\Phi_M$ is continuous,
$\Phi_M(\mu^{(t_k)})\to\Phi_M(\mu^\star)=\Phi_\varepsilon^\star$.
Because the full sequence $\Phi_M(\mu^{(t)})$ converges, its limit must also be
$\Phi_\varepsilon^\star$. Hence
$\Phi_M(\mu^{(t)})\to\Phi_\varepsilon^\star$.
\end{proof}

\subsection{Effect of the allocation floor}
\label{app:mm-floor}

We finally quantify the effect of replacing the original feasible set
$\mathcal K_K$ by $\mathcal K_{K,\varepsilon}$. Let
\[
\Phi^\star
:=
\max_{\mu\in\mathcal K_K}\Phi_M(\mu),
\qquad
\Phi_\varepsilon^\star
:=
\max_{\mu\in\mathcal K_{K,\varepsilon}}\Phi_M(\mu).
\]

\begin{proposition}[Effect of the allocation floor]
\label{prop:mm-floor}
Let $0<\alpha\le\tfrac12$ and $0<\varepsilon<K/N$. Then
\[
0
\le
\Phi^\star-\Phi_\varepsilon^\star
\le
\frac{\varepsilon N}{K}
\left[
\Phi^\star
-
\Phi_M\!\left(\frac{K}{N}\mathbf 1\right)
\right].
\]
\end{proposition}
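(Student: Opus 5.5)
The lower bound is immediate because $\mathcal K_{K,\varepsilon}\subseteq\mathcal K_K$. For the upper bound, the plan is to use the concavity of $\Phi_M$ on $\mathcal K_K$, which Theorem~\ref{thm:curvature} gives for $0<\alpha\le\tfrac12$. We take a maximizer $\mu^\star\in\mathcal K_K$ of $\Phi_M$, which exists by compactness and continuity (Appendix~\ref{app:oracle-derivation}). We then pull it toward the uniform allocation $u:=\frac KN\mathbf 1$, far enough to satisfy the floor.

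Concretely, set $\theta:=\varepsilon N/K\in(0,1)$ and $\mu_\theta:=(1-\theta)\mu^\star+\theta u$.

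\emph{Feasibility.} The budget is preserved, since both $\mu^\star$ and $u$ sum to $K$. Each coordinate satisfies $(\mu_\theta)_j^\ell\ge\theta K/N=\varepsilon$. Each coordinate is also at most $(1-\theta)\cdot1+\theta K/N\le1$. Hence $\mu_\theta\in\mathcal K_{K,\varepsilon}$.

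\emph{Bound.} Concavity gives $\Phi_M(\mu_\theta)\ge(1-\theta)\Phi^\star+\theta\,\Phi_M(u)$. Therefore
\[
\Phi^\star-\Phi^\star_\varepsilon\le\Phi^\star-\Phi_M(\mu_\theta)\le\theta\bigl[\Phi^\star-\Phi_M(u)\bigr],
\]
which is exactly the claim with $\theta=\varepsilon N/K$.

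The only step that needs care is the choice of $\theta$. It must be the smallest mixing weight toward $u$ that guarantees the floor $\varepsilon$ uniformly, including at coordinates where $\mu^\star$ vanishes. That worst case dictates $\theta K/N=\varepsilon$, and the condition $\varepsilon<K/N$ ensures $\theta<1$. Beyond this choice there is no real obstacle, since concavity is already established by Theorem~\ref{thm:curvature}.
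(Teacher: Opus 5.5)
Your proposal is correct and follows the paper's proof essentially step for step. The paper likewise mixes a maximizer $\mu^\star\in\mathcal K_K$ with the uniform allocation $u=(K/N)\mathbf 1$ using weight $\delta=\varepsilon N/K$, verifies membership in $\mathcal K_{K,\varepsilon}$ as you do, and concludes by the concavity from Theorem~\ref{thm:curvature}.
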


\begin{proof}
Since
$\mathcal K_{K,\varepsilon}\subseteq\mathcal K_K$, we immediately have
$\Phi_\varepsilon^\star\le\Phi^\star$.

Let $\mu^\star\in\mathcal K_K$ be a maximizer of $\Phi_M$, and define
$\delta:=\varepsilon N/K$ and
$u:=(K/N)\mathbf 1$. Since $0<\varepsilon<K/N$, we have
$0<\delta<1$. Consider
$\widetilde\mu:=(1-\delta)\mu^\star+\delta u$.
For every $(\ell,j)$,
$\widetilde\mu_j^\ell\ge\delta K/N=\varepsilon$, while
$\sum_{\ell,j}\widetilde\mu_j^\ell=K$ and
$\widetilde\mu_j^\ell\le1$. Hence
$\widetilde\mu\in\mathcal K_{K,\varepsilon}$.

By Theorem~\ref{thm:curvature}, $\Phi_M$ is concave for
$0<\alpha\le\tfrac12$. Therefore
\[
\Phi_\varepsilon^\star
\ge
\Phi_M(\widetilde\mu)
\ge
(1-\delta)\Phi_M(\mu^\star)
+
\delta\Phi_M(u)
=
(1-\delta)\Phi^\star
+
\delta\Phi_M\!\left(\frac{K}{N}\mathbf 1\right).
\]
Rearranging and using $\delta=\varepsilon N/K$ gives
\[
\Phi^\star-\Phi_\varepsilon^\star
\le
\frac{\varepsilon N}{K}
\left[
\Phi^\star
-
\Phi_M\!\left(\frac{K}{N}\mathbf 1\right)
\right],
\]
which completes the proof.
\end{proof}

\subsection{Approximation of the allocation oracle}
\label{app:oracle_approximation}

As in Algorithm~\ref{alg:bmuon}, BMuon does not recompute the allocation
$\mu$ at every optimizer step. Instead, every five optimizer steps it performs
an \emph{allocation refresh}: $\mu$ is recomputed and then kept fixed for the
next four optimizer steps. The budget may be scheduled over training, so we write $K_t$ for the budget at
optimizer step $t$. At a refresh step $t$, the new allocation is computed under
the current budget $K_t$.

The practical implementation therefore introduces two approximations.
First, at each allocation refresh, we perform only one MM update instead of
solving the allocation problem to convergence.
Second, we refresh $\mu$ only once every five optimizer steps, rather than at
every optimizer step.
We measure these two errors separately, and then test whether replacing the
practical one-step update by a certified solve at each refresh changes the
training and compression results.

\paragraph{Two paired training runs.}
We use the same CIFAR-100 ViT and training setup as in
Appendix~\ref{app:cifar100-posttraining}, except that we train for 100 epochs.
As in our training experiments, we use $\alpha=1/2$.
For each seed, we run two paired versions of BMuon that differ only in how the
allocation $\mu$ is computed at an allocation refresh.
The MLPs contain $N=4608$ hidden channels in total.
The budget $K_t$ decreases linearly from $N$ to $0.3N$ during the first third
of training and is then kept fixed.
In this experiment, $K_t$ changes only at allocation refreshes, so it is
constant during the four intervening optimizer steps.
In both runs, $\mu$ is refreshed every five optimizer steps and is held fixed
between two consecutive refreshes.

The two runs differ only in how $\mu$ is computed at a refresh step $t$.
In the \textbf{1-step refresh run}, we start from the allocation carried from the
previous refresh, compute the MM scores using $\mathrm{NS}_5$, the five-step
Newton--Schulz approximation defined in Appendix~\ref{app:muon-algo}, and
perform one MM update under the current budget $K_t$.
In the \textbf{certified refresh run}, we instead use exact polar factors and
repeat the MM update until the stopping condition
Eq.~(\ref{eq:oracle-reference-stopping}) is satisfied.
In both runs, the allocation computed at step $t$ satisfies the current
constraint $\sum_i \mu_i=K_t$.

Apart from this difference, the two runs use the same BMuon weight update.
For each seed, they also use the same initialization, data order, augmentation
stream, and budget schedule.

\paragraph{Notation and certified error bound.}
Let $M_t$ denote all MLP momentum matrices at optimizer step $t$, and define
$\mathcal C_t=\{\mu\in[\varepsilon,1]^N:\mathbf 1^\top\mu=K_t\}$.
For $\alpha=\tfrac12$, the allocation objective is
\[
\Phi_M(\mu)
=
\sum_{\ell=1}^{L}
\left(
\left\|D_{\mu^{\ell}}^{1/2}M_1^{\ell}\right\|_*
+
\left\|M_2^{\ell}D_{\mu^{\ell}}^{1/2}\right\|_*
\right).
\]
We write
$\Phi_t^\star=\max_{\mu\in\mathcal C_t}\Phi_{M_t}(\mu)$.

For any feasible $\mu$, define
$g_t(\mu)=\max_{\nu\in\mathcal C_t}
\langle\nabla\Phi_{M_t}(\mu),\nu-\mu\rangle$.
We call $g_t(\mu)$ the Frank--Wolfe gap.
Since $\Phi_{M_t}$ is concave (Theorem~\ref{thm:curvature}), for every $\nu\in\mathcal C_t$,
$\Phi_{M_t}(\nu)
\leq
\Phi_{M_t}(\mu)
+
\langle\nabla\Phi_{M_t}(\mu),\nu-\mu\rangle$.
Maximizing over $\nu$ gives
$\Phi_t^\star-\Phi_{M_t}(\mu)\leq g_t(\mu)$, which implies in particular that $g_t(\mu)\geq 0$ for every feasible $\mu$.

At optimizer step $t$, let $\mu_t^{\mathrm{ref}}\in\mathcal C_t$ denote the
allocation obtained by repeatedly applying the exact-polar MM update over
$\mathcal C_t$ (Theorem~\ref{thm:mm-convergence} shows that the repeated
MM updates converge) until
\begin{equation}
    \frac{g_t(\mu_t^{\mathrm{ref}})}
    {\Phi_{M_t}(\mu_t^{\mathrm{ref}})}
    \leq 10^{-4}.
    \label{eq:oracle-reference-stopping}
\end{equation}

Define
$U_t=
\Phi_{M_t}(\mu_t^{\mathrm{ref}})
+
g_t(\mu_t^{\mathrm{ref}})$.
The previous inequality gives $U_t\geq\Phi_t^\star$.

For any feasible candidate $\widetilde\mu$, we therefore define
$d_t(\widetilde\mu)
=
1-
\Phi_{M_t}(\widetilde\mu)/U_t$.
Because $U_t\geq\Phi_t^\star$,
\[
1-\frac{\Phi_{M_t}(\widetilde\mu)}{\Phi_t^\star}
\leq
1-\frac{\Phi_{M_t}(\widetilde\mu)}{U_t}
=
d_t(\widetilde\mu).
\]
Thus $d_t(\widetilde\mu)$ is a certified upper bound on the relative loss in
the exact allocation objective incurred by using $\widetilde\mu$ instead of
an optimal allocation.

In particular, for $\mu_t^{\mathrm{ref}}$,
\[
d_t(\mu_t^{\mathrm{ref}})
=
\frac{g_t(\mu_t^{\mathrm{ref}})}
{\Phi_{M_t}(\mu_t^{\mathrm{ref}})
 + g_t(\mu_t^{\mathrm{ref}})}
\leq
\frac{g_t(\mu_t^{\mathrm{ref}})}
{\Phi_{M_t}(\mu_t^{\mathrm{ref}})}
\leq 10^{-4},
\]
where the last inequality follows from
Eq.~(\ref{eq:oracle-reference-stopping}).

Hence $\mu_t^{\mathrm{ref}}$ is certified to have relative objective loss at
most $10^{-4}$ from the exact optimum. We therefore use
$\mu_t^{\mathrm{ref}}$ as a certified approximation, in objective value, to
an optimal allocation at step $t$, and call it the \emph{reference
allocation}. In the \emph{certified refresh} run, at each refresh step $t$,
we use $\mu_t^{\mathrm{ref}}$ as the BMuon allocation and keep it fixed
until the next refresh.

For each run and seed, we perform the diagnostics described below at ten
refresh steps: the first refresh at which $K_t/N$ reaches approximately
$0.9,0.8,\ldots,0.3$, followed by refreshes near $50\%$, $75\%$, and $99\%$
of training.
We denote this set of ten refresh steps by $\mathcal D$.

\paragraph{The two approximation errors.}
Figure~\ref{fig:oracle_approximation} measures separately the two approximations
used by the practical implementation.

At each $t\in\mathcal D$, let $\widehat\mu_t$ denote the allocation obtained by
applying the practical update rule once: one MM step with scores computed using
$\mathrm{NS}_5$, starting from the allocation carried from the previous
refresh. Figure~\ref{fig:oracle_approximation}(a) reports
$d_t(\widehat\mu_t)$.

For the red curve, we apply the practical one-step allocation update to the
1-step refresh run at step $t$. For the blue curve, we apply the same update
to the certified refresh run at step $t$, only for this diagnostic; the
resulting allocation is not used to continue the certified refresh run.

At each sampled training stage $t\in\mathcal D$, the plotted value for each curve is the median over the three seeds, and the shaded region is their minimum--maximum range. The vertical dashed line marks the end of the budget decrease, after which $K_t/N=0.3$ is fixed.

The certified deficit of one practical allocation update remains of order
$10^{-4}$ to $10^{-3}$ over the sampled states.
Its largest sampled value is about $1.2\times10^{-3}$ on states from the
1-step refresh run and about $1.1\times10^{-3}$ on states from the certified
refresh run.
The similar behavior on the two trajectories shows that the accuracy of the
one-step rule is not specific to states produced by that same rule.

Figure~\ref{fig:oracle_approximation}(b) measures the second approximation:
keeping $\mu$ fixed between two refreshes.
We use the certified refresh run for this experiment.
For a diagnostic refresh $t\in\mathcal D$, BMuon uses
$\mu_t^{\mathrm{ref}}$ and keeps it fixed for the following five optimizer
steps.
For $s\in\{0,1,2,3,4\}$, we evaluate this same allocation under the current
momentum state $M_{t+s}$ and report
$d_{t+s}(\mu_t^{\mathrm{ref}})$.
This quantity upper-bounds the relative objective loss incurred by continuing
to use the allocation computed at step $t$, instead of recomputing an optimal
allocation for the current momentum state $M_{t+s}$.
Thus $s=0$ corresponds to the refresh step, while $s=4$ corresponds to the
last optimizer step before the next refresh.

To avoid hiding a difficult sampled window by averaging over different training
steps, Figure~\ref{fig:oracle_approximation}(b) first takes the worst case over
$t\in\mathcal D$.
For seed $j$ and offset $s$, define
$w_{j,s}
=
\max_{t\in\mathcal D}
d_{t+s}^{(j)}(\mu_t^{\mathrm{ref},(j)})$.
The plotted value at offset $s$ is the median of
$w_{0,s},w_{1,s},w_{2,s}$, and the shaded region is their minimum--maximum range.

Even under this worst-over-sampled-windows summary, the certified deficit
after four optimizer steps remains on the order of $10^{-3}$.
Thus, at the sampled states, keeping $\mu$ fixed for the four steps between
two refreshes produces only a small additional error in the allocation
objective.

\begin{figure}[t]
    \centering

    \begin{minipage}[t]{0.42\linewidth}
        \centering
        \includegraphics[width=\linewidth]
        {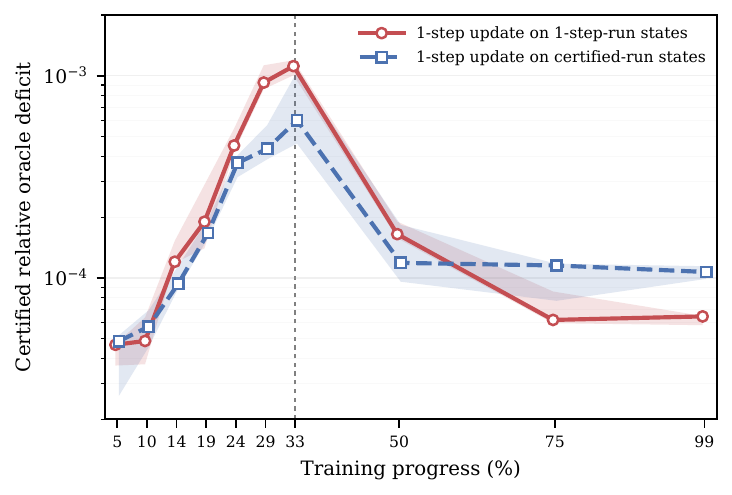}

        \vspace{1mm}
        \textbf{(a)} One allocation update.
    \end{minipage}
    \hspace{0.04\linewidth}
    \begin{minipage}[t]{0.42\linewidth}
        \centering
        \includegraphics[width=\linewidth]
        {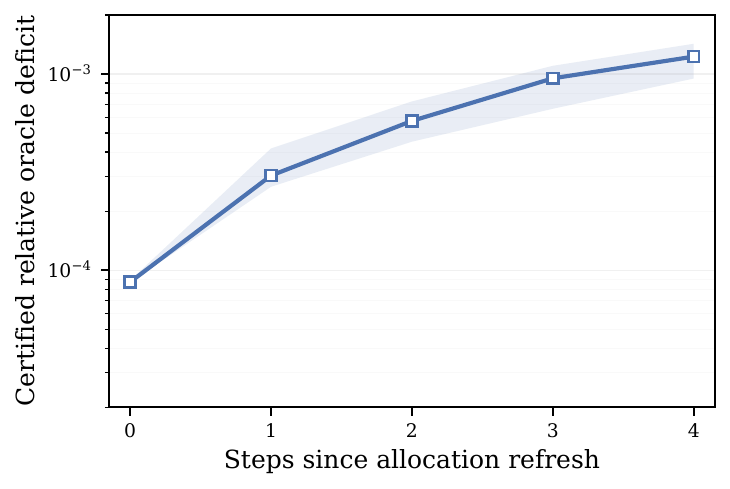}

        \vspace{1mm}
        \textbf{(b)} Allocation held between refreshes.
    \end{minipage}

    \caption{
    \textbf{Accuracy of the practical allocation approximation.}
    \textbf{(a)}
    Certified relative deficit of one practical
    $\mathrm{NS}_5$-scored MM update.
    The same update is evaluated on states generated by the 1-step refresh
    run and by the certified refresh run.
    At each training stage, the line is the median over three seeds and the
    shaded region is their minimum--maximum range.
    The vertical dashed line marks the beginning of the fixed-budget phase
    $K_t/N=0.3$.
    \textbf{(b)}
    Certified relative deficit caused by keeping the allocation installed by
    the certified refresh run fixed for $s=0,\ldots,4$ optimizer steps.
    For each seed and each $s$, we first take the largest deficit over the ten
    sampled windows; the line is the median of these three seed-wise maxima
    and the shaded region is their minimum--maximum range.
    }
    \label{fig:oracle_approximation}
\end{figure}

\paragraph{End-to-end effect.}
The previous experiments measure the approximation error in the allocation
objective. We now test whether this error changes the trained model.

Figure~\ref{fig:oracle_end_to_end} compares the two runs.
Recall that both refresh $\mu$ every five optimizer steps: the 1-step refresh
run uses one practical MM update, while the certified refresh run uses
$\mu_t^{\mathrm{ref}}$.

Figure~\ref{fig:oracle_end_to_end}(a) shows CIFAR-100 test accuracy during
training.
Figure~\ref{fig:oracle_end_to_end}(b) shows the accuracy of the final models
after structured MLP-channel pruning, without fine-tuning.
For each model, we rank all MLP channels globally by the corresponding entries
of its final allocation $\mu$ and remove those with the smallest values.
In both panels, the plotted value for each curve is the median over the three
seeds, and the shaded region is their minimum--maximum range.

Using $\mu_t^{\mathrm{ref}}$ at each refresh leaves test accuracy and
post-training compressibility unchanged within the observed seed-to-seed
variability.
At the final epoch, the absolute differences between the paired runs are
$0.69$, $0.26$, and $0.11$ percentage points for seeds $0$, $1$, and $2$,
respectively.
At $70\%$ pruning, they are $0.61$, $0.22$, and $0.46$ percentage points.
For comparison, changing the seed of the 1-step refresh run gives differences
of $0.09$--$0.80$ percentage points at the final epoch and
$0.05$--$0.87$ percentage points at $70\%$ pruning.
The sign of the paired difference is not consistent across seeds.

\begin{figure}[t]
    \centering
    \includegraphics[width=0.85\linewidth]
    {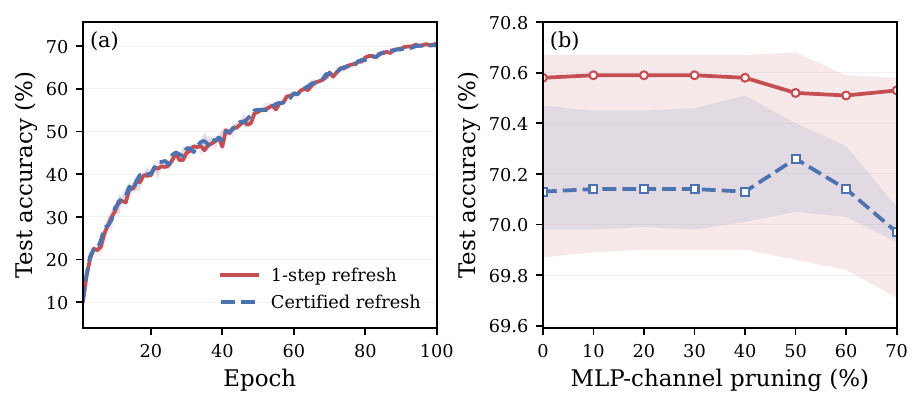}

    \caption{
    \textbf{End-to-end effect of the allocation approximation.}
    We compare the 1-step refresh run with the certified refresh run; both
    refresh $\mu$ every five optimizer steps.
    \textbf{(a)}
    CIFAR-100 test accuracy during 100 epochs of training.
    \textbf{(b)}
    Test accuracy after global structured MLP-channel pruning of the final
    trained model, using its final $\mu$ to rank channels, without fine-tuning.
    For each curve, the line is the median over three seeds and the shaded region is their minimum--maximum range.
    }
    \label{fig:oracle_end_to_end}
\end{figure}

\paragraph{Scope of the comparison.}
Both runs refresh $\mu$ every five optimizer steps. Thus, the end-to-end
comparison isolates the difference between one practical MM update and the
certified allocation $\mu_t^{\mathrm{ref}}$ at each refresh; it does not
compare against a run that recomputes a certified allocation at every optimizer
step. Figure~\ref{fig:oracle_approximation}(b) separately shows that keeping
$\mu$ fixed during the four intervening steps introduces only a small
objective error at the sampled states.

\subsection{Computational Overhead}
\label{app:overhead}

Relative to Muon, BMuon's additional computation is confined to the budgeted
matrices (the MLP weights; Algorithm~\ref{alg:bmuon}); all other parameters
are updated exactly as in Muon.
On steps where the allocation is not updated, BMuon uses the same number of
Newton--Schulz polar evaluations as Muon, with only additional channelwise
row/column scalings by $D_\mu^\alpha$.
On an allocation-update step, BMuon performs one additional polar evaluation
per budgeted matrix to construct the MM surrogate, before recomputing the
weighted polar factor used for the parameter update.
Updating the allocation every $\Delta T_\mu$ steps therefore raises the
amortized number of polar evaluations per budgeted matrix from $1$ to
$1+1/\Delta T_\mu$ per optimizer step, i.e.\ $1.2$ for $\Delta T_\mu=5$.
Once the surrogate polar factors are available, the channel scores require
only row/column inner products, and the allocation update consists of
channelwise operations and a scalar bisection over the global budget.
The only persistent additional optimizer state is the allocation vector
$\mu\in\mathbb{R}^N$, one scalar per budgeted channel, which is negligible
next to the momentum buffers.

We verify empirically that this overhead is small in practice by comparing
end-to-end training times against Muon at the allocation-update frequency
used throughout our experiments, $\Delta T_\mu=5$, on the ImageNet-1K
ViT-S/16 and 1B-parameter LLaMA settings of
Appendices~\ref{app:gradual-pruning-details} and~\ref{app:llama-details}.
All measurements use eight H100 GPUs and the same Muon and BMuon
implementations as the corresponding training runs.

\paragraph{ImageNet-1K.}
Table~\ref{tab:imagenet-overhead} reports the mean epoch time for each full
$150$-epoch gradual-pruning run (training only, excluding evaluation).
Averaged over three seeds, BMuon and Muon require $92.5\pm0.4$ and
$91.9\pm1.2$ seconds per epoch at $90\%$ final pruning, and
$92.1\pm0.5$ and $91.9\pm0.6$ seconds at $95\%$.
The mean differences ($0.6\%$ and $0.3\%$) are comparable to the
seed-to-seed variation, so the runs do not resolve an end-to-end slowdown
of BMuon relative to Muon.

\begin{table}[t]
\centering
\caption{\textbf{ImageNet-1K training time.}
Mean epoch time in seconds over each full $150$-epoch gradual-pruning run,
excluding evaluation. Differences between optimizers are comparable to
seed-to-seed variation.}
\label{tab:imagenet-overhead}
\begin{tabular}{ccccc}
\toprule
& \multicolumn{2}{c}{$90\%$ pruning}
& \multicolumn{2}{c}{$95\%$ pruning} \\
\cmidrule(lr){2-3} \cmidrule(lr){4-5}
Seed & BMuon & Muon & BMuon & Muon \\
\midrule
0 & 92.55 & 92.06 & 92.47 & 92.50 \\
1 & 92.02 & 90.72 & 92.27 & 91.59 \\
2 & 92.81 & 93.05 & 91.57 & 91.48 \\
\bottomrule
\end{tabular}
\end{table}

\paragraph{LLaMA.}
Table~\ref{tab:llama-overhead} reports timing measurements from the
$13.1$B-token LLaMA training runs.
For each run, the reported time per step is the total measured training time
divided by the $6{,}242$ training steps.
The training time excludes evaluation and checkpointing.

\begin{table}[t]
\centering
\caption{\textbf{LLaMA training time.}
Mean time per training step and total training time for each full
$13.1$B-token run, excluding evaluation and checkpointing.
The difference between optimizers is comparable to seed-to-seed variation
and does not indicate a slowdown of BMuon.}
\label{tab:llama-overhead}
\begin{tabular}{ccccc}
\toprule
& \multicolumn{2}{c}{BMuon}
& \multicolumn{2}{c}{Muon} \\
\cmidrule(lr){2-3} \cmidrule(lr){4-5}
Seed
& ms/step & Time (h)
& ms/step & Time (h) \\
\midrule
0 & 4832.4 & 8.379 & 4929.3 & 8.547 \\
1 & 4836.0 & 8.385 & 4893.5 & 8.485 \\
2 & 4906.5 & 8.507 & 4932.0 & 8.551 \\
\bottomrule
\end{tabular}
\end{table}

Averaged over three seeds, a step takes $4858\pm42$\,ms with BMuon and
$4918\pm22$\,ms with Muon.
The $1.2\%$ difference is comparable to the seed-to-seed
variation and can also be affected by normal system-level variation between
separate jobs, such as job placement, per-GPU performance, and communication, so we do not attribute it to the
optimizer.
The runs show no end-to-end slowdown of BMuon relative to Muon.

\newpage
\section{Experimental Details}
\label{app:experimental-details}

This section provides the experimental details and additional results for
Section~\ref{sec:experiments}. Appendix~\ref{app:motivation_pruning} gives the
ImageNet-1K post-training pruning setup and full pruning results.
Appendix~\ref{app:cifar100-posttraining} gives the CIFAR-100 post-training
compression setup and full pruning results.
Appendix~\ref{app:k-ablation} studies the effect of the global budget $K$.
Appendix~\ref{app:gradual-pruning-details} gives the gradual-pruning setup and
architecture controls for CIFAR-100 and ImageNet-1K.
Appendix~\ref{app:llama-details} provides the experimental details and full
pruning results for the LLaMA/FineWeb experiments.

\subsection{ImageNet-1K Post-Training Structured Pruning}
\label{app:motivation_pruning}

\paragraph{Architecture and training setup.}
We use a custom PyTorch ViT-S/16 implementation following the plain-ViT
design of~\citet{beyer2022better}: $224\times224$ inputs, $16\times16$
patches, embedding dimension $384$, 12 Transformer blocks, six attention
heads, MLP width $1536$, fixed two-dimensional sinusoidal positional
embeddings, global average pooling over patch tokens, and a pre-logits
representation layer. The model therefore contains
$12\times1536=18{,}432$ MLP hidden channels. Our ImageNet-1K optimization
setup follows the ViT-S/16 configuration of~\citet{zhong2026sharpness}.
Matrix-valued parameters within the Transformer blocks are optimized with
Muon, while the remaining parameters are optimized with AdamW. We use global
batch size $4096$, Muon learning rate $3\sqrt{3}\times10^{-3}$, AdamW
learning rate $3\times10^{-3}$, weight decay $0.3$, Muon momentum $0.95$
with Nesterov, and dropout $0.1$. Training uses sigmoid binary cross entropy
with label smoothing $0.1$, random resized crops, and horizontal flips,
without Mixup, CutMix, RandAugment, random erasing, or stochastic depth.
Muon uses five Newton--Schulz iterations following~\citet{jordan2024muon}
and the Moonlight-style RMS-matching scale
$0.2\sqrt{\max(d_{\mathrm{out}},d_{\mathrm{in}})}$~\citep{liu2025muon}.
All models are trained for 150 epochs with 5000 warmup steps followed by
cosine learning-rate decay to zero, using a half-length adaptation of the
300-epoch, 10,000-step ViT-S/16 schedule of~\citet{zhong2026sharpness}.

\paragraph{BMuon training.}
BMuon is applied only to the MLP matrices; the remaining Muon-eligible
Transformer matrices use ordinary Muon. We use $\alpha=1/2$, one MM
allocation update every five optimizer steps, and $\varepsilon=10^{-6}$.
Let $N=18{,}432$ denote the total number of MLP hidden channels. The global
allocation budget is initialized at $K(0)=N$ and linearly annealed to $K_{\mathrm{target}}=5{,}530\approx0.30N$
during the first $25\%$ of training, after which it is held fixed for the
remaining $75\%$.

\paragraph{Post-training pruning.}
After training, we take the final epoch-150 checkpoint and perform one-shot
global structured pruning of coupled MLP hidden channels. Each pruning level
is evaluated independently from the same dense checkpoint; pruning is
therefore not cumulative, and no post-pruning fine-tuning or recovery is
performed. For channel $j$ in block $\ell$, we use the coupled magnitude
score
$s_j^{\ell}=\sqrt{
\|W^\ell_{1,j:}\|_2^2+
\|W^\ell_{2,:,j}\|_2^2
}$. All $18{,}432$ channels are ranked jointly across the 12 blocks, subject to
a minimum of four surviving channels per block. We evaluate global
MLP-channel pruning levels $
0,10,20,30,40,50,60,70,80,90,95\%.$ Figure~\ref{fig:motivation} compares ordinary Muon and BMuon using this same magnitude-based pruning rule.

\paragraph{Evaluation.}
We run seeds $0,1,2$ and evaluate Top-1 accuracy on the full 50,000-image
ImageNet-1K validation set. Reported curves show the mean and sample standard
deviation across the three seeds. Table~\ref{tab:imagenet_posttrain_pruning}
reports the exact results corresponding to Figure~\ref{fig:motivation}.

\begin{table}[t]
\centering
\caption{\textbf{ImageNet-1K post-training structured pruning.}
Top-1 accuracy (\%) after global MLP-channel pruning of ViT-S/16 using coupled
weight-magnitude ranking. Results are mean $\pm$ sample standard deviation over
three seeds. No post-pruning recovery is performed.}
\label{tab:imagenet_posttrain_pruning}
\small
\begin{tabular}{c|cc}
\toprule
\textbf{MLP pruning (\%)} &
\textbf{Muon + mag.} &
\textbf{BMuon + mag.} \\
\midrule
0  & $75.230 \pm 0.022$ & $74.174 \pm 0.220$ \\
10 & $64.201 \pm 1.184$ & $74.183 \pm 0.236$ \\
20 & $26.762 \pm 1.387$ & $74.188 \pm 0.230$ \\
30 & $ 3.167 \pm 0.784$ & $74.180 \pm 0.228$ \\
40 & $ 0.513 \pm 0.211$ & $74.182 \pm 0.225$ \\
50 & $ 0.249 \pm 0.057$ & $74.177 \pm 0.230$ \\
60 & $ 0.239 \pm 0.025$ & $74.185 \pm 0.227$ \\
70 & $ 0.185 \pm 0.062$ & $74.083 \pm 0.324$ \\
80 & $ 0.149 \pm 0.012$ & $ 0.227 \pm 0.057$ \\
90 & $ 0.132 \pm 0.050$ & $ 0.138 \pm 0.047$ \\
95 & $ 0.105 \pm 0.009$ & $ 0.138 \pm 0.031$ \\
\bottomrule
\end{tabular}
\end{table}

\subsection{CIFAR-100 Post-Training Compression}
\label{app:cifar100-posttraining}

\paragraph{Architecture and shared training.}
We use a six-block Vision Transformer on CIFAR-100 with $4\times4$ patches,
embedding dimension 192, three attention heads, and MLP width 768. All methods
are trained for 200 epochs with seeds $\{0,1,2\}$ using batch size 256. For the matrix optimizer, we use peak learning rate $0.02$, momentum $0.95$, weight decay $0.05$, and five Newton--Schulz steps. Biases, LayerNorm parameters, the CLS token, positional embeddings, patch-embedding layer, and classifier head use AdamW with peak learning rate $3\times10^{-4}$, betas $(0.9,0.999)$, and weight decay $0.05$. We use five warmup epochs followed by cosine decay. Training uses standard CIFAR-100 augmentations, including random cropping, horizontal flipping, RandAugment, random erasing, Mixup, and CutMix. All reported compression results are recomputed from the final epoch-200 checkpoint. Evaluation uses the CIFAR-100 test set with batch size 512 and no test-time augmentation. Reported uncertainties are sample standard deviations over the three seeds.

\paragraph{BMuon.}
BMuon uses $\alpha=1/2$ and a global budget over all 4608 MLP hidden
channels. The budget is linearly annealed from 4608 to 922, corresponding
approximately to $20\%$ of the total channel mass, over the first third of
training and then held fixed. The allocation is updated every five optimizer
steps using one MM update, with $\varepsilon=10^{-6}$. Non-MLP Transformer
matrices use ordinary Muon.

\textbf{Muon+GL.}
Muon+GL applies an exact proximal Group-Lasso step with coefficient
$\lambda_{\mathrm{GL}}$ to coupled MLP hidden-channel groups
$(W^\ell_{1,j:},W^\ell_{2,:,j})$, using the group norm
$\bigl(\|W^\ell_{1,j:}\|_2^2+\|W^\ell_{2,:,j}\|_2^2\bigr)^{1/2}$.
We tune $\lambda_{\mathrm{GL}}\in\{0.01,0.02,0.05,0.1,0.2,0.5\}$, obtaining mean
dense accuracies of 71.50, 71.58, 71.25, 70.03, 69.23, and 61.21\%, respectively.
Increasing $\lambda_{\mathrm{GL}}$ strengthens the sparsity-inducing penalty and
can improve post-training compressibility at the expense of dense accuracy.
To compare at similar pre-compression accuracy, we restrict attention here to
settings within one percentage point of BMuon's $71.46\%$, leaving
$\lambda_{\mathrm{GL}}\in\{0.01,0.02,0.05\}$.
At $50\%$ MLP-channel pruning, these settings retain $9.71\%$, $16.30\%$,
and $71.25\%$ test accuracy, respectively.
We therefore use $\lambda_{\mathrm{GL}}=0.05$ throughout this experiment.

\paragraph{NuMuon.}
NuMuon~\citep{dolatabadi2026numuon} is applied during training to all
Muon-eligible Transformer weight matrices: the fused QKV projection, attention output projection, and the two MLP projections in each of the six blocks, for 24 matrices in total. Following the authors' cosine rank-scheduling strategy, we use a $10\%$ initial full-rank period and, in our implementation, cosine-anneal the relative update rank from $1.0$ to $0.25$ between $10\%$ and $80\%$ of training, holding it at $0.25$ thereafter. Since all eligible matrices have $q=\min(d_{\mathrm{in}},d_{\mathrm{out}})=192$, this corresponds to a
final update rank of $48$. Full-rank steps use the standard Newton--Schulz Muon update, while reduced-rank steps use randomized Block Krylov with two Krylov iterations. The NuMuon paper reports an oversampling setting of $8$; in our implementation, we realize this as a Krylov working dimension $b=\min(q,k+8)$, where $k=\lceil r(t)q\rceil$ is the target update rank.

\paragraph{Post-training compression.}
Compression is restricted to the ViT MLP blocks. We evaluate compression ratios $\{0,0.1,\ldots,0.9,0.95\}$ independently from the same final checkpoint, without cumulative pruning or post-compression fine-tuning. Muon, Muon+GL, and BMuon use global hidden-channel pruning over all six blocks. Magnitude pruning ranks channel $j$ in block $\ell$ by
$s_j^{\ell}=\sqrt{\|W^\ell_{1,j:}\|_2^2+\|W^\ell_{2,:,j}\|_2^2}$.
BMuon is evaluated with both this ranking and its final learned allocation $\mu$. During global pruning, we skip any candidate removal that would leave an MLP block with fewer than four hidden channels.

\paragraph{NuMuon compression and matching.}
At evaluation, NuMuon compresses only the 12 MLP weight matrices using
truncated SVD; attention weights remain dense. We distribute the retained rank budget as evenly as possible across matrices,
with minimum rank 4. At each compression level, we choose the NuMuon rank so that the resulting whole-model parameter count is as close as possible to that of the corresponding structured-pruned model.

Table~\ref{tab:cifar100-posttraining} reports the exact post-training
compression results corresponding to Fig.~\ref{fig:cifar100-posttraining}.

\begin{table*}[t]
\centering
\caption{\textbf{CIFAR-100 post-training compression results.}
Test accuracy (\%) reported as mean $\pm$ sample standard deviation over
three seeds. All compression levels are evaluated independently from the final
epoch-200 checkpoint, without post-compression fine-tuning.}
\label{tab:cifar100-posttraining}
\small
\begin{tabular}{c|ccccc}
\toprule
Compression
& Muon
& Muon+GL
& NuMuon
& BM-mag
& BM-$\mu$ \\
\midrule
0\%  & $71.49\pm0.21$ & $71.25\pm0.02$ & $69.95\pm0.13$ & $71.46\pm0.15$ & $71.46\pm0.15$ \\
10\% & $66.12\pm0.86$ & $71.25\pm0.02$ & $69.90\pm0.08$ & $71.46\pm0.15$ & $71.46\pm0.15$ \\
20\% & $20.26\pm3.53$ & $71.25\pm0.02$ & $69.79\pm0.17$ & $71.46\pm0.15$ & $71.46\pm0.15$ \\
30\% & $5.72\pm0.55$  & $71.25\pm0.02$ & $69.60\pm0.04$ & $71.46\pm0.15$ & $71.46\pm0.16$ \\
40\% & $4.18\pm0.64$  & $71.25\pm0.02$ & $68.70\pm0.14$ & $71.46\pm0.15$ & $71.46\pm0.16$ \\
50\% & $3.77\pm1.38$  & $71.25\pm0.02$ & $66.65\pm0.36$ & $71.46\pm0.15$ & $71.46\pm0.14$ \\
60\% & $2.94\pm0.35$  & $30.45\pm4.40$ & $61.49\pm0.56$ & $71.46\pm0.15$ & $71.42\pm0.09$ \\
70\% & $2.81\pm0.49$  & $11.31\pm0.67$ & $20.81\pm5.48$ & $71.46\pm0.15$ & $71.38\pm0.05$ \\
80\% & $1.93\pm0.78$  & $6.01\pm0.46$  & $3.84\pm0.83$  & $60.84\pm9.74$ & $70.57\pm0.18$ \\
90\% & $2.19\pm0.60$  & $3.76\pm0.76$  & $1.86\pm0.68$  & $1.62\pm0.20$  & $5.82\pm1.46$ \\
95\% & $2.06\pm0.77$  & $3.63\pm0.35$  & $1.58\pm0.39$  & $1.44\pm0.41$  & $2.71\pm0.73$ \\
\bottomrule
\end{tabular}
\end{table*}

\subsection{Effect of the Global Budget}
\label{app:k-ablation}

We ablate the global budget $K$ on the CIFAR-100 post-training compression
setup of Appendix~\ref{app:cifar100-posttraining}.

\paragraph{Setup.} 
We use the same six-block ViT, 200-epoch training recipe, and optimization 
settings as in Appendix~\ref{app:cifar100-posttraining}. We train BMuon with 
$\alpha=1/2$ and 
$
K/N \in \{0.1,0.2,\ldots,0.9\}, 
$ 
with no budget schedule and holding all other 
hyperparameters fixed. After training, we perform global MLP-channel 
pruning without recovery, ranking channels either by coupled weight magnitude, 
$
s_j^{\ell} 
= 
\sqrt{\|W^\ell_{1,j:}\|_2^2+\|W^\ell_{2,:,j}\|_2^2}, 
$ or by the final learned allocation $\mu$.

\begin{figure*}[t]
    \centering
    \begin{subfigure}[t]{0.49\textwidth}
        \centering
        \includegraphics[width=\linewidth]{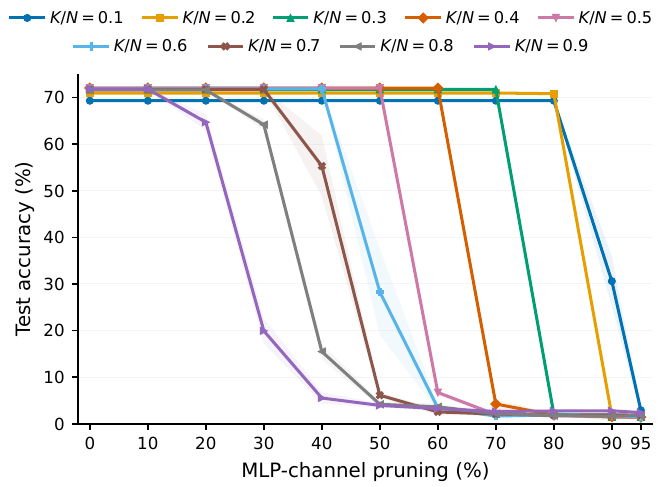}
        \caption{Magnitude ranking.}
        \label{fig:k-ablation-mag}
    \end{subfigure}
    \hfill
    \begin{subfigure}[t]{0.49\textwidth}
        \centering
        \includegraphics[width=\linewidth]{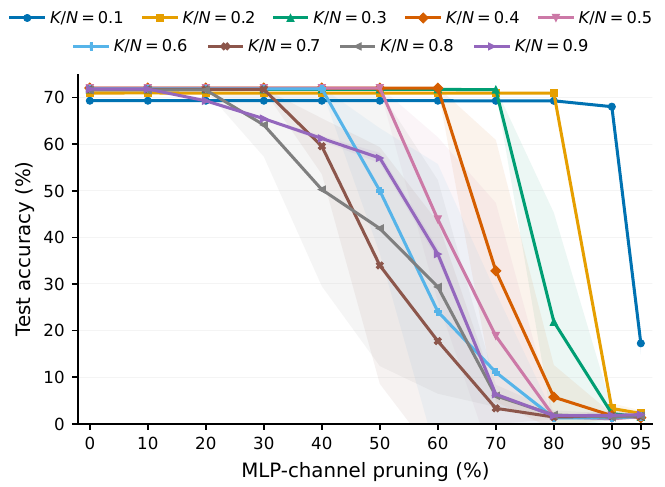}
        \caption{$\mu$ ranking.}
        \label{fig:k-ablation-mu}
    \end{subfigure}
    \caption{\textbf{Effect of the budget on post-training compression.}
    CIFAR-100 test accuracy after global MLP-channel pruning for different
    budget ratios $K/N$. Panel (a) uses coupled weight-magnitude ranking, while
    panel (b) uses the final learned allocation $\mu$. Mean $\pm$ sample std. over three seeds.}
    \label{fig:k-ablation}
    \vspace{-0.7em}
\end{figure*}

Figure~\ref{fig:k-ablation} shows that the budget directly controls the
compression level favored during training. With magnitude pruning, models
remain robust approximately up to pruning a fraction $1-K/N$ of the channels,
and degrade sharply beyond this point. The same pattern holds with $\mu$
ranking, but the degradation is generally more gradual, further showing that
$\mu$ provides a useful channel-importance signal. For the most restrictive
budgets, particularly $K/N=0.1$ and $0.2$, the accuracy of the unpruned model
also decreases slightly, indicating a tradeoff between aggressive
compressibility and performance before pruning.

\subsection{Gradual Structured Pruning Experiment}
\label{app:gradual-pruning-details}

All gradual-pruning experiments use seeds $0,1,2$. Only MLP hidden channels are pruned, and all surviving channels are ranked globally across blocks at each pruning event. We enforce at least four surviving channels per block. Muon-mag and BMuon-mag use the coupled magnitude score $s_j^{\ell}=\sqrt{\|W^\ell_{1,j:}\|_2^2+\|W^\ell_{2,:,j}\|_2^2}$, while BMuon-$\mu$ uses $s_j^{\ell,\mu}=\mu_j^\ell$. After pruning, training resumes from the surviving weights and optimizer state; pruned channels never return.

\paragraph{BMuon schedule.}
On both CIFAR-100 and ImageNet-1K, BMuon is applied only to the MLP matrices, with $\alpha=1/2$, one MM step every five optimizer steps, and $\varepsilon=10^{-6}$. At the start of each training phase $i$, let $N_{\mathrm{start},i}$ denote the current number of surviving channels and let $N_{\mathrm{end},i}$ denote the number of channels to retain at the next pruning event. During the first half of phase $i$, the BMuon budget is linearly annealed from $N_{\mathrm{start},i}$ to $N_{\mathrm{end},i}$; during the second half, it is held fixed at $N_{\mathrm{end},i}$. At the end of the phase, pruning reduces the network to exactly $N_{\mathrm{end},i}$ surviving channels. After pruning, the allocation coordinates $\mu$ of all surviving channels are reset to one.

\paragraph{CIFAR-100.}
We use a six-block ViT with $32\times32$ inputs, $4\times4$ patches, embedding dimension $112$, four attention heads, and MLP width $448$, giving $2688$ prunable channels. Training lasts 200 epochs with batch size 256. Muon uses learning rate $0.02$, momentum $0.95$ without Nesterov, weight decay
$0.05$, and five Newton--Schulz steps; auxiliary parameters use AdamW with
learning rate $3\times10^{-4}$ and weight decay $0.05$. We use five epochs of
linear warmup followed by cosine decay to $1\%$ of the initial learning rate. Training uses random cropping and horizontal flipping, RandAugment $(2,9)$, random erasing with probability $0.1$, and Mixup/CutMix with parameters $0.8/1.0$ and total probability $0.5$.

For the $90\%$ experiment, cumulative pruning levels of $20,40,60,75,85,$ and $90\%$ are applied after epochs $30,60,90,120,150,$ and $180$, respectively, followed by 20 recovery epochs. The corresponding surviving channel counts are $2150,1613,1075,672,403,$ and $269$. For the $95\%$ experiment, cumulative pruning levels of $20,40,60,75,85,90,$ and $95\%$ are applied after epochs $26,52,78,104,130,155,$ and $180$, respectively, followed by 20 recovery epochs. The corresponding surviving channel counts are $2150,1613,1075,672,403,269,$ and $134$.

\paragraph{ImageNet-1K.}
We use the same ViT-S/16 architecture, ImageNet-1K data pipeline, base
optimizer hyperparameters, and 150-epoch learning-rate schedule as in
Appendix~\ref{app:motivation_pruning}. Unlike the post-training pruning
experiment, here physical MLP-channel removal is interleaved with training
according to the gradual-pruning protocol described above.

For the $90\%$ experiment, cumulative pruning levels of $20,40,60,70,80,$ and $90\%$ are applied after epochs $22,44,66,87,108,$ and $129$, respectively, followed by 21 recovery epochs. The corresponding surviving channel counts are $14{,}746,11{,}059,7{,}373,5{,}530,3{,}686,$ and $1{,}843$. For the $95\%$ experiment, cumulative pruning levels of $20,40,60,70,80,90,$
and $95\%$ are applied after epochs $19,38,57,76,95,114,$ and $132$,
respectively, followed by 18 recovery epochs. The corresponding surviving
channel counts are 14,746, 11,059, 7,373, 5,530, 3,686, 1,843, and 922.
\paragraph{Evaluation.}
For CIFAR-100, we report final Top-1 accuracy on the test set at epoch 200. For ImageNet-1K, we report final Top-1 accuracy on the full 50,000-image validation set at epoch 150. In both cases, evaluation is performed after the final recovery stage. Reported means and standard deviations are over seeds $0,1,2$.
\paragraph{Architecture controls.}
For each BMuon-$\mu$ pruning run, we record the final per-block MLP widths. To evaluate the selected architecture on its own, we instantiate a new model with these widths and train it from scratch with standard Muon, without pruning, using the same training recipe and horizon. As a baseline, we also train a uniform-width model from scratch with standard Muon, with its total MLP channel count matched as closely as possible to the corresponding architecture discovered by BMuon-$\mu$. Seed pairing is one-to-one: the architecture discovered with seed $i$ is retrained from scratch using seed $i$. Thus, for seeds $0,1,2$, we perform three retraining runs rather than a $3\times3$ crossing of discovery and retraining seeds.

The selected architectures are reported in
Tables~\ref{tab:cifar-architectures} and~\ref{tab:imagenet-architectures}.

\begin{table}[t]
\centering
\caption{\textbf{BMuon-$\mu$ architectures on CIFAR-100.}
Per-block MLP widths for blocks $1$--$6$, selected at the end of gradual pruning.}
\label{tab:cifar-architectures}

\small
\setlength{\tabcolsep}{6pt}
\renewcommand{\arraystretch}{1.12}

\begin{tabular}{lccc}
\toprule
\textbf{Pruning}
& \textbf{Seed 0}
& \textbf{Seed 1}
& \textbf{Seed 2} \\
\midrule
90\%
& $[119,94,18,4,9,25]$
& $[120,95,20,4,13,17]$
& $[114,99,18,6,13,19]$ \\
95\%
& $[80,37,4,5,4,4]$
& $[78,39,5,4,4,4]$
& $[79,37,5,5,4,4]$ \\
\bottomrule
\end{tabular}
\end{table}

\begin{table*}[t]
\centering
\caption{\textbf{BMuon-$\mu$ architectures on ImageNet-1K.}
Per-block MLP widths for blocks $1$--$12$, selected at the end of gradual pruning.}
\label{tab:imagenet-architectures}

\small
\setlength{\tabcolsep}{6pt}
\renewcommand{\arraystretch}{1.12}

\begin{tabular}{lcl}
\toprule
\textbf{Pruning}
& \textbf{Seed}
& \textbf{MLP widths, blocks 1--12} \\
\midrule
90\% & 0
& $[250,168,194,164,159,153,141,140,106,122,177,69]$ \\
& 1
& $[254,174,204,197,151,132,141,139,98,135,153,65]$ \\
& 2
& $[245,194,216,199,158,119,131,137,104,121,159,60]$ \\
\cmidrule(lr){1-3}
95\% & 0
& $[181,120,120,106,100,67,58,43,40,34,39,14]$ \\
& 1
& $[170,121,133,127,91,60,62,53,34,22,43,6]$ \\
& 2
& $[156,110,107,99,100,73,59,66,37,40,55,20]$ \\
\bottomrule
\end{tabular}
\end{table*}

On CIFAR-100, the uniform baseline uses width $45$ in each of the six blocks at $90\%$ pruning and width $22$ at $95\%$, giving $270$ and $132$ total MLP channels, respectively, compared with $269$ and $134$ channels in the corresponding BMuon-$\mu$ architectures. On ImageNet-1K, the uniform baseline uses width $154$ in each of the 12 blocks at $90\%$ pruning and width $77$ at $95\%$, giving $1848$ and $924$ channels, compared with $1843$ and $922$ channels in the corresponding BMuon-$\mu$ architectures.

\subsection{LLaMA/FineWeb Experimental Details}
\label{app:llama-details}

\paragraph{Implementation provenance.}
Our LLaMA/FineWeb experiments are based on the LLaMA-1B setup of
Muon+~\citep{zhang2026muonplus}\footnote{\url{https://github.com/K1seki221/MuonPlus}}.
We adopt its architecture and training recipe, reducing only the number of
Transformer blocks from $24$ to $17$ to obtain a $986{,}746{,}880$-parameter
model; all other architectural hyperparameters are unchanged.
The Muon+ codebase builds on Muon~\citep{jordan2024muon} and the
\texttt{modded-nanogpt} training framework\footnote{\url{https://github.com/KellerJordan/modded-nanogpt}}.
We retain its LLaMA-2 tokenizer and FineWeb preprocessing and sharding
conventions, Muon parameter routing, auxiliary AdamW-style updates for
non-Muon parameters, momentum/Nesterov convention, five-step Newton--Schulz
polar approximation, matrix learning-rate shape/RMS scaling, warmup--cosine
schedule, and BF16 training conventions.
For our experiments, we integrate BMuon into the SwiGLU MLPs.

\paragraph{Model and data.}
We use a 17-layer pre-norm LLaMA-style causal decoder with hidden dimension
$2048$, $32$ attention and $32$ key--value heads, SwiGLU width $5461$,
context length $4096$, vocabulary size $32{,}000$, RMSNorm
($\epsilon=10^{-5}$), and RoPE with $\theta=10{,}000$.
The token embedding and language-model head are untied.
The model has $986{,}746{,}880$ parameters.
Each SwiGLU layer contains gate, up, and down projections
$W_g,W_u\in\mathbb{R}^{5461\times2048}$ and
$W_d\in\mathbb{R}^{2048\times5461}$,
giving $N=17\times5461=92{,}837$ structural MLP channels in total.
We train on $13{,}090{,}422{,}784$ tokens from the
\texttt{sample-100BT} configuration of FineWeb, using the LLaMA-2 tokenizer.

\paragraph{Training.}
Training uses BF16 with an effective global batch of $512$ sequences,
corresponding to $2{,}097{,}152$ tokens per optimizer step.
We use seeds $\{0,1,2\}$ for all methods.
For Muon matrices, we use learning rate $0.02$, momentum $0.95$ with
Nesterov momentum, five Newton--Schulz iterations, decoupled weight decay
$0.1$, gradient clipping at $1.0$, a $10\%$ linear warmup, and cosine decay
to zero.
Embeddings, normalization parameters, and the language-model head use the
auxiliary AdamW-style branch with
$\beta_1=\beta_2=0.95$ and $\epsilon=10^{-8}$.
In the Muon baseline, attention and MLP matrices use Muon.
For BMuon, the three SwiGLU projection matrices in each block are instead
updated by BMuon, while attention matrices continue to use standard Muon.

\paragraph{BMuon configuration.}
We use $\alpha=1/2$ and a budget $K=18{,}567\approx0.2N$ throughout training, with no budget schedule.
The allocation is updated every five optimizer steps with one MM iteration,
using a numerical floor $\epsilon=10^{-6}$.

\paragraph{Post-training pruning.}
We prune only SwiGLU hidden channels.
Pruning channel $j$ in layer $\ell$ jointly removes
$W_g^{\ell}[j,:]$, $W_u^{\ell}[j,:]$, and $W_d^{\ell}[:,j]$.
For each pruning ratio $r\in\{0,0.1,\ldots,0.9\}$, we restore the unpruned
checkpoint and prune directly to ratio $r$.
Scores are ranked globally across all Transformer blocks, with a minimum of
10 surviving channels per layer.

For both Muon and BMuon, the magnitude score is
\[
s_{\ell j}^{\mathrm{mag}}
=
\left(
\|W_g^{\ell}[j,:]\|_2^2
+
\|W_u^{\ell}[j,:]\|_2^2
+
\|W_d^{\ell}[:,j]\|_2^2
\right)^{1/2}.
\]
For BMuon, we additionally use
\[
s_{\ell j}^{\mu}=\mu_j^{\ell}.
\]
For the activation-aware criterion, let
$h_{\ell j}(x)
=
\operatorname{SiLU}\!\left(W_g^{\ell}[j,:]x\right)
\left(W_u^{\ell}[j,:]x\right)$
denote the hidden activation of channel $j$.
We score it by
\[
s_{\ell j}^{\mathrm{act}}
=
\sqrt{\mathbb{E}_x[h_{\ell j}(x)^2]}\,
\|W_d^{\ell}[:,j]\|_2.
\]
Activation statistics are accumulated in FP32 over $1{,}048{,}576$
calibration tokens.
Each pruned model is evaluated without recovery on $5{,}046{,}272$
validation tokens.

\begin{table*}[t]
\centering
\caption{\textbf{Post-training structured pruning of the 1B-parameter
LLaMA-style model.}
Validation perplexity after global SwiGLU-channel pruning.
Entries report mean $\pm$ sample std. over seeds $\{0,1,2\}$.
No post-pruning recovery is performed.}
\label{tab:llama-pruning-full}

\small
\setlength{\tabcolsep}{4pt}

\resizebox{\textwidth}{!}{%
\begin{tabular}{c|cc|ccc}
\toprule
\multirow{2}{*}{\textbf{MLP pruning}}
&
\multicolumn{2}{c|}{\textbf{Muon}}
&
\multicolumn{3}{c}{\textbf{BMuon}}
\\
& Magnitude & Activation
& Magnitude & Activation & $\mu$ \\
\midrule
$0\%$
& $10.921\pm0.033$
& $10.921\pm0.033$
& $12.843\pm0.051$
& $12.843\pm0.051$
& $12.843\pm0.051$
\\
$10\%$
& $16.100\pm1.411$
& $12.701\pm0.229$
& $12.843\pm0.051$
& $12.843\pm0.051$
& $12.843\pm0.051$
\\
$20\%$
& $22.944\pm3.728$
& $26.249\pm1.974$
& $12.843\pm0.051$
& $12.843\pm0.051$
& $12.843\pm0.051$
\\
$30\%$
& $40.241\pm8.109$
& $62.840\pm7.917$
& $12.843\pm0.051$
& $12.843\pm0.051$
& $12.843\pm0.051$
\\
$40\%$
& $76.861\pm16.225$
& $251.840\pm55.803$
& $12.843\pm0.051$
& $12.843\pm0.051$
& $12.843\pm0.051$
\\
$50\%$
& $160.049\pm43.632$
& $1441.609\pm641.338$
& $12.843\pm0.051$
& $12.843\pm0.051$
& $12.843\pm0.051$
\\
$60\%$
& $283.004\pm52.645$
& $2984.678\pm1476.332$
& $12.843\pm0.051$
& $12.843\pm0.051$
& $12.843\pm0.051$
\\
$70\%$
& $615.949\pm131.121$
& $3543.424\pm1733.400$
& $12.843\pm0.051$
& $12.843\pm0.051$
& $12.843\pm0.051$
\\
$80\%$
& $2486.429\pm1684.264$
& $5290.062\pm2508.521$
& $13.342\pm0.178$
& $12.857\pm0.053$
& $12.854\pm0.053$
\\
$90\%$
& $28140.823\pm19185.679$
& $8382.535\pm5262.682$
& $3091.740\pm1717.202$
& $245.636\pm42.251$
& $1168.909\pm351.685$
\\
\bottomrule
\end{tabular}%
}
\end{table*}

\newpage

\end{document}